\documentclass{article}

\usepackage[dvipsnames]{xcolor}
\usepackage{microtype}
\usepackage{graphicx}
\usepackage{subcaption}
\usepackage{booktabs} 
\usepackage{hyperref}
\usepackage{url}
\usepackage{multirow}

\usepackage[preprint]{icml2026}

\usepackage{amsmath}
\usepackage{amssymb}
\usepackage{mathtools}
\usepackage{amsthm}
\usepackage{thm-restate}

\usepackage[capitalize,noabbrev]{cleveref}
\usepackage{enumitem}
\usepackage[capitalize,noabbrev]{cleveref}

\theoremstyle{plain}
\newtheorem{theorem}{Theorem}[section]

\theoremstyle{definition}
\newtheorem{definition}[theorem]{Definition}
\newtheorem{assumption}[theorem]{Assumption}
\theoremstyle{remark}
\newtheorem{remark}[theorem]{Remark}
\DeclareMathOperator*{\esssup}{ess\,sup}
\DeclareMathOperator*{\essinf}{ess\,inf}

\newcommand{\diff}{\mathop{}\!\mathrm{d}}

\usepackage[textsize=tiny]{todonotes}

\newcommand\independent{\protect\mathpalette{\protect\independenT}{\perp}}
\def\independenT#1#2{\mathrel{\rlap{$#1#2$}\mkern2mu{#1#2}}}

\usepackage{tikz}
\DeclareRobustCommand\circled[1]{\tikz[baseline=(char.base)]{\node[shape=circle,draw=darkblue,minimum size=0.4cm,inner sep=0pt,fill=lightblue] (char) {\fontfamily{phv}\selectfont \scriptsize \textbf{#1}};}}
\DeclareRobustCommand\circledpurple[1]{\tikz[baseline=(char.base)]{\node[shape=circle,draw=violet,minimum size=0.4cm,inner sep=0pt,fill=violet!20] (char) {\fontfamily{phv}\selectfont \scriptsize \textbf{#1}};}}
\DeclareRobustCommand\circledgreen[1]{\tikz[baseline=(char.base)]{\node[shape=circle,draw=ForestGreen,minimum size=0.4cm,inner sep=0pt,fill=ForestGreen!20] (char) {\fontfamily{phv}\selectfont \scriptsize \textbf{#1}};}}

\definecolor{lightblue}{HTML}{DAE8FC}
\definecolor{darkblue}{HTML}{003366}

\icmltitlerunning{Partial identification for causal effects under network interference}

\begin{document}

\twocolumn[
  \icmltitle{Causal Inference on Networks under Misspecified Exposure Mappings: \\A Partial Identification Framework}

  \begin{icmlauthorlist}
    \icmlauthor{Maresa Schröder}{lmu,mcml}
    \icmlauthor{Miruna Oprescu}{cor}
    \icmlauthor{Stefan Feuerriegel}{lmu,mcml}
    \icmlauthor{Nathan Kallus}{cor,net}
  \end{icmlauthorlist}

  \icmlaffiliation{lmu}{LMU Munich}
  \icmlaffiliation{mcml}{Munich Center for Machine Learning}
  \icmlaffiliation{cor}{Cornell University}
  \icmlaffiliation{net}{Netflix}

  \icmlcorrespondingauthor{Maresa Schröder}{maresa.schroeder@lmu.de}

  \vskip 0.3in
]

\printAffiliationsAndNotice{}

\begin{abstract}
Estimating treatment effects in networks is challenging, as each potential outcome depends on the treatments of all other nodes in the network. To overcome this difficulty, existing methods typically impose an \emph{exposure mapping} that compresses the treatment assignments in the network into a low-dimensional summary. However, if this mapping is misspecified, standard estimators for direct and spillover effects can be severely biased. We propose a novel \emph{partial identification framework for causal inference on networks} to assess the robustness of treatment effects under misspecifications of the exposure mapping. Specifically, we derive sharp upper and lower bounds on direct and spillover effects under such misspecifications. As such, our framework presents a novel application of causal sensitivity analysis to exposure mappings. We instantiate our framework for three canonical exposure settings widely used in practice: (i) weighted means of the neighborhood treatments, (ii) threshold-based exposure mappings, and (iii) truncated neighborhood interference in the presence of higher-order spillovers. Furthermore, we develop \emph{orthogonal estimators} for these bounds and prove that the resulting bound estimates are valid, sharp, and efficient. Our experiments show the bounds remain informative and provide reliable conclusions under misspecification of exposure mappings.
\end{abstract}

\vspace{-0.5cm}
\section{Introduction}

\begin{figure*}[t]
  \centering
  \includegraphics[width=1\textwidth]{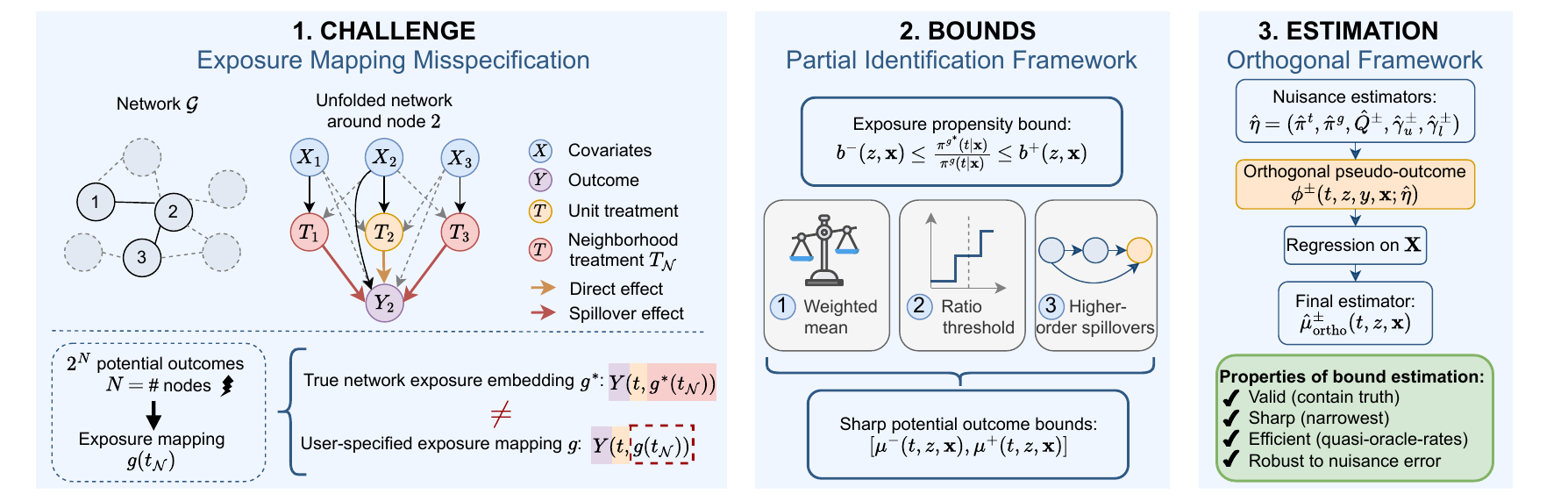}
  \vspace{-0.6cm}
  \caption{\textbf{Overview and contribution.} (1)~\textbf{Challenge}: Each unit’s outcome depends on the entire treatment assignments in the network. Existing methods compress the treatment assignments into a low-dimensional summary via an exposure mapping $g$. However, $g$ might differ from the true mapping, thus leading to biased effect estimates. (2)~\textbf{Bounds}: We model misspecification through an exposure-propensity bound and derive treatment effect bounds for 3 common exposure mappings. (3)~\textbf{Estimation}: We estimate the bounds via an orthogonal two-stage framework. Our estimated bounds are valid, sharp, efficient, and robust to nuisance estimation errors.}
  \vspace{-0.35cm}
  \label{fig:motivation}
\end{figure*}

Estimating treatment effects in network settings is crucial for evaluating policy effectiveness and designing personalized interventions \cite{Viviano.2025}. However, classic methods from causal inference assume \emph{no} interference between units, meaning that the outcome of each unit is independent of the treatments from other units, but this assumption is often violated in social networks \cite{Forastiere.2021, Matthay.2022, Ogburn.2024}. 

\textbf{Example:} \emph{Consider a public health intervention that targets individuals aged 60 and above to encourage COVID-19 vaccination \cite{Freedman.2026}. Individuals targeted by the interventions may be more likely to get vaccinated themselves ({direct effect}), while also influencing decisions of their friends through social interactions ({spillover effect}).} 

Causal inference in network settings is fundamentally challenging due to \emph{interference} \cite{Anselin.1988, Forastiere.2021}. In such settings, outcomes depend not only on a unit's own treatment but also on the treatments of connected units, which leads to spillover effects. More importantly, this directly violates the stable unit treatment variable assumption (SUTVA), so that standard treatment effects are no longer identified. Instead, causal inference on networks must consider \emph{all treatments of the complete network.} 

A na{\"i}ve approach to handle interference is to condition on \emph{all} treatment assignments in the network. However, this grows exponentially in the number of units $N$ and is thus highly impractical. A common workaround is to simplify the problem and impose an \emph{exposure mapping} that compresses the networks’ treatment assignment into a low-dimensional summary (e.g., number or share of treated neighbors in the network) \cite{Aronow.2017}. This approach is widely used in the literature \citep[e.g.,][]{Forastiere.2021,Forastiere.2022,Ogburn.2024} (see Sec.~\ref{sec:related_work} for a detailed overview). \textit{However}, the exposure mapping must be specified \emph{a priori} based on domain knowledge of how spillovers propagate through the network. In many applications, this mechanism is only partially understood, so the exposure mapping is likely to be misspecified, resulting in biased effect estimates. 

As a remedy, we develop a novel \textbf{partial identification framework for inference on networks} to assess the robustness of treatment effects under misspecifications of the exposure mapping through sensitivity analysis (see Fig.~\ref{fig:motivation}). Specifically, we derive sharp upper and lower bounds on the (conditional) potential outcomes and treatment effects when the exposure mapping is misspecified. We instantiate our framework for three canonical exposures used in practice: (i) weighted means of neighborhood treatments, (ii) threshold-based exposure mappings, and (iii) truncated neighborhood interference in the presence of higher-order spillovers. We make the following contributions:\footnote{Code available at GitHub:
\url{https://github.com/m-schroder/ExposureMisspecification}} 
\vspace{-0.2cm}
\begin{description}
\vspace{-0.2cm}
\item[\circledpurple{1}] Our bounds fulfill several desirable theoretical properties. In particular, our bounds are \emph{sharp} and \emph{valid}, in that they provide the narrowest possible intervals that contain the true outcome given a specific level of exposure mapping misspecification. 
\vspace{-0.25cm}
\item[\circledpurple{2}] We provide a model-agnostic \emph{orthogonal bound estimator} that achieves quasi-oracle  convergence rates and remains \emph{robust} to nuisance model misspecification. 
\vspace{-0.25cm}
\item[\circledpurple{3}] We provide guarantees for the \emph{estimated} bounds in that they remain \emph{sharp}, \emph{valid}, \emph{efficient}. 
\end{description}
\vspace{-0.3cm}
Our experiments show the bounds are informative and foster reliable decision-making under exposure mapping misspecification.


\vspace{-0.2cm}
\section{Related work\protect\footnote{We provide an extended overview of related work in Supp.~\ref{sec:appendix_related_work}.}}
\label{sec:related_work}
\vspace{-0.1cm}

\textbf{Interference:} Literature allowing for interference between different units broadly considers two distinct scenarios: (i)~\emph{partial or cluster-based interference}, and (ii)~\emph{network interference}. We focus on the latter, namely, network interference. In contrast, partial interference assumes that interference happens within groups or clusters but not across different groups \citep[e.g.,][]{BargagliStoffi.2025, TchetgenTchetgen.2012, Qu.2024, Fang.2025, VanderWeele.2014}. However, this group-level interference is unlikely to hold in real-world settings, making the assumption restrictive and often invalid. 

Network interference is less explored, where the majority of methods only apply to randomized controlled trials (RCTs), instead of observational data \citep[e.g.,][]{Alzubaidi.2024, Aronow.2017, Leung.2020, Savje.2021}. Methods targeted to network interference generally assume correct knowledge of a network treatment-summarizing function $g$, called \emph{exposure mapping} \citep[e.g.,][]{Chen.2024b, Forastiere.2021, Forastiere.2022, Liu.2023, Ogburn.2024, Sengupta.2025}. There are three exposure mappings that are commonly applied in the literature: (i)~the (weighted) neighborhood mean exposure, (ii)~a thresholding function, and  (iii)~one-step neighborhood exposure. We provide a formal definition in Section~\ref{sec:setup}. \emph{However, these methods fail to provide correct estimates of treatment effect if the exposure mapping is misspecified.}

\textbf{Misspecification in network inference:}
Only a few works consider causal effect estimation under misspecification. Of those, one stream focuses on causal effect estimation when the network structure is unknown, i.e., when there is uncertainty about the existence of certain edges in the network \citep[e.g.,][]{Egami.2021, Bhattacharya.2020, Savje.2024, Weinstein.2023, Weinstein.2025, Zhang.2023, Zhang.2025}. In contrast, our work assumes the \emph{network is fully known}, but the \emph{exposure mapping is misspecified}.

We are aware of two works that allow for a potential misspecification of the exposure mapping, but in a simplified setting (see Supplement~\ref{sec:appendix_related_work} for details). \citet{Leung.2022} considers approximate neighborhood interference, by allowing treatments assigned to units further from the unit of interest to have potentially nonzero, but decreasing, effects on the unit's outcome. \citet{Belloni.2022} consider causal effect estimation under an unknown neighborhood radius.  Unlike our work, both methods are restricted to the specific type of misspecification and are only applicable to \textit{average} causal effects. \emph{In contrast, our proposed framework incorporates misspecification not only of the neighborhood radius, but also covers other types of misspecification in exposure mappings as well as a broad set of causal estimands.} 

\textbf{Research gap:} In sum, there is \underline{no} general framework for bounding potential outcomes and treatment effects under various types of exposure mapping misspecifications for experimental and observational data. This is our contribution.

\vspace{-0.1cm}
\section{Setup}
\label{sec:setup}
\vspace{-0.1cm}

\textbf{Notation:}
We use capital letters $X$ to denote random variables, with realizations $x$ (lowercase letters). The probability distribution of $X$  is represented by $\mathbb{P}_X$, though we omit the subscript when the context makes it clear. For discrete variables the probability mass function is written as $P(x)=P(X=x)$ and for continuous variables, the probability density function as $p(x)$. In our work, we build upon the potential outcomes framework~\cite{Rubin.2005}. We provide an overview of the notation in Supplement~\ref{sec:appendix_notation}.

\vspace{-0.1cm}
\subsection{Network setting}

We follow the standard setting for causal inference on networks \cite{Chen.2024, Forastiere.2021}. We consider an (undirected) network of known structure given by the sets of nodes $\mathcal{N}_{\mathcal{G}}$ with $|\mathcal{G}|=N$ and edges $\mathcal{E}$ with $(i,j)=(j,i)$ for $i,j \in \mathcal{G}$. For each node $i$, we define a partition of the network as $(i,\mathcal{N}_i, \mathcal{N}_{-i})$, where $\mathcal{N}_i$ defines the \emph{neighborhood} of node $i$, i.e., all nodes $j$ connected to $i$ by an edge $(i,j) \in \mathcal{E}$ and $\mathcal{N}_{-i}$ the complement of $\mathcal{N}_i$ in $\mathcal{G}$. We refer to $|\mathcal{N}_i|=n_i$ as the \emph{degree} of node $i$. We omit the subscript whenever it is obvious from the context.

Every unit $i$ consists of the following variables: a treatment $T_i \in \{0,1\}$, confounders $X_i \in \mathcal{X}^d$, and an outcome $Y_i \in \mathcal{Y}$. We allow the treatment assignment to depend on (i)~the unit's own covariates $\mathbf{X}_i=X_i$  [\emph{homogeneous peer influence}], or (ii)~both unit $i$'s and it's neighbors covariates $\mathbf{X}_i=(X_i, X_{\mathcal{N}_i})$ [\emph{heterogeneous peer influence}], where we additionally assume that every node has the same degree $n$. The treatment assignment of unit $i$ is independent of the other units' treatment assignments given the covariates $\mathbf{X}$. We denote the unit \emph{propensity score} $P(t \mid \mathbf{X=x})$ as $\pi^t(\mathbf{x})$.


\vspace{-0.1cm}
\subsection{Exposure mappings}

As standard in treatment effect estimation on networks \citep[e.g.,][]{Chen.2024b, Forastiere.2021}, we assume the existence of an \emph{exposure mapping} $g: [0,1]^{n_i} \rightarrow \mathcal{Z}$ with $z_i := g(t_{\mathcal{N}_i})$, which a summary function of the treatments assigned to the neighbors of node $i$. Here, $z_i$ is assumed to be a sufficient representation to capture how neighbors' treatments affect the outcome $Y_i$. Therefore, the potential outcome is fully represented by $Y_i(t_i,z_i)$ and depends on the binary treatment $t_i$ as well as the discrete or continuous neighborhood treatment $z_i$. We denote the network propensity by $\pi^g(z\mid \mathbf{x}):= p(g(t_{\mathcal{N}_i})=z\mid \mathbf{X_i}=\mathbf{x_i})$.

Prior literature commonly builds upon three different types of \emph{exposure mappings} (see Section~\ref{sec:related_work}): 

\circled{1} \textbf{Weighted mean of neighborhood treatments:} A large body of existing works assume the summary function $g$ to represent the mean of the neighbors' treatments \citep[e.g.,][]{Belloni.2022, Chen.2024b, Chen.2025, Forastiere.2021, Forastiere.2022, Jiang.2022, Leung.2020, Ma.2021}. However, in many cases, such as social network or spatial interference settings, it is reasonable to assume that the neighbors' effects vary, e.g., with closeness in friendship or spatial distance \citep[e.g.,][]{Giffin.2023}. Hence, this motivates to formalize the underlying exposure mapping $g^{\ast}$ as a \emph{weighted} mean of treatments.
\vspace{-0.4cm}
\begin{figure}[h]
    \centering
    \includegraphics[width=1\linewidth]{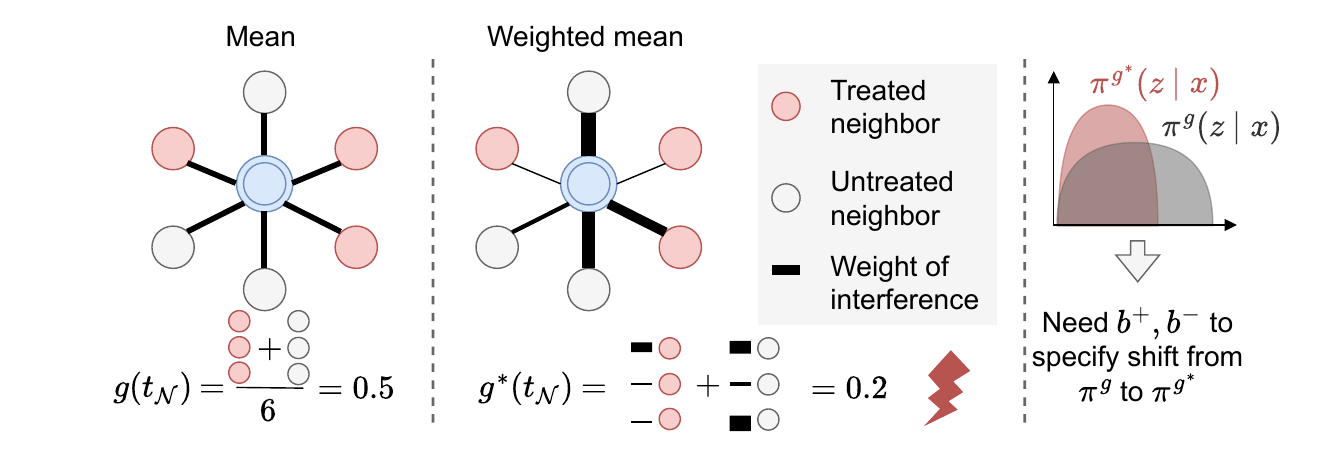}
    \vspace{-0.6cm}
    \caption{Exposure misspecification: weighted mean of treatments}
    \label{fig:weighted_mean_intuition}
\end{figure}
\vspace{-0.4cm}

\circled{2} \textbf{Thresholding:} Similarly, the mapping $g$ can be assumed to be a function of the sum or the proportion of treated neighbors \citep[e.g.,][]{Aronow.2017, McNealis.2024, Ogburn.2024, Qu.2024}. For example, the mapping could result in a binary variable $Z$ indicating if more than half of the neighbors were treated. 
\vspace{-0.5cm}
\begin{figure}[h]
    \centering
    \includegraphics[width=1\linewidth]{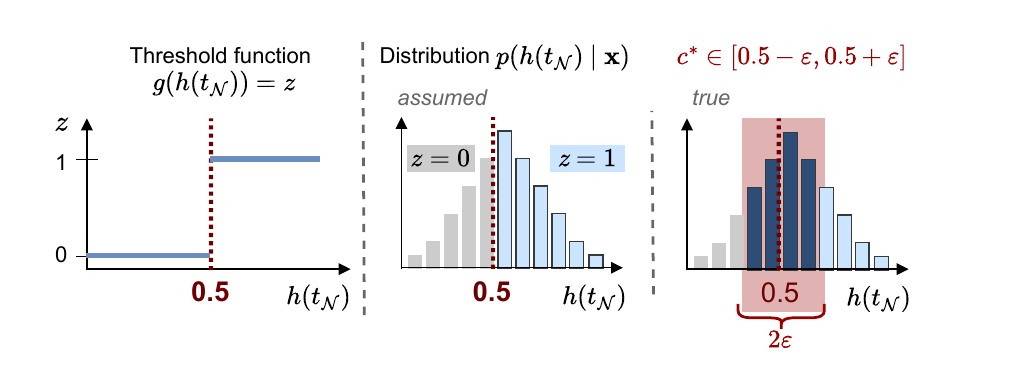}
    \vspace{-0.8cm}
    \caption{Exposure misspecification: thresholding function}
    \label{fig:thresholding_intuition}
\end{figure}

\vspace{-0.4cm}
\circled{3} \textbf{Higher-order spillovers:} The assumption that only the treatment of direct neighbors results in spillover effects can be too strong \citep[e.g.,][]{Belloni.2022, Leung.2022, Ogburn.2024, Weinstein.2023}. \emph{Higher-order neighbors} without a direct connecting edge in the network  potentially confound the causal relationship as well.
\vspace{-0.7cm}
\begin{figure}[h]
    \centering
    \includegraphics[width=1\linewidth]{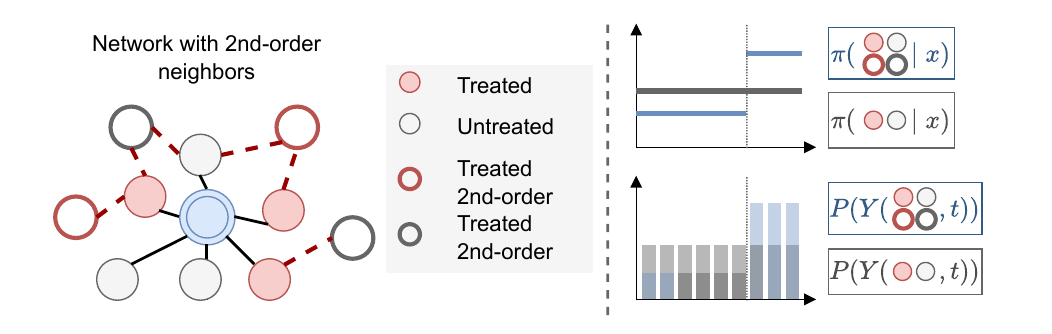}
    \vspace{-0.6cm}
    \caption{Exposure misspecification: higher-order spillovers}
    \label{fig:placeholder}
\end{figure}

\vspace{-0.5cm}
\subsection{Causal inference on networks}

We are interested in estimating the \emph{average potential outcome} (APO) under individual and neighborhood treatments $T=t$ and $Z=z$, where $z = g(t_{\mathcal{N}})$, given by 
\begin{align}
    \psi(t,z) &:= \mathbb{E}[Y(t,z)],
\end{align}
and the \emph{conditional average potential outcome} (CAPO) 
\begin{align}
    \mu(t,z,\mathbf{x}) &:= \mathbb{E}[Y(t,z) \mid \mathbf{X}=\mathbf{x}].
\end{align}
 
\vspace{-0.2cm}
The \emph{overall effect} can be decomposed into a \emph{direct effect} (capturing the impact of a unit’s own treatment on its outcome) and a \emph{spillover effect} (capturing the indirect impact of neighbors’ treatments on that unit’s outcome).

\begin{definition}[Direct effects (ADE / IDE)]
    \emph{The average (ADE) and individual (IDE) direct effects between individual treatment assignments $T=t$ and $T=t^{'}$ while keeping the neighborhood treatment $Z=z$ constant are defined as}
    
    \vspace{-0.5cm}
    \footnotesize
    \begin{align}
        \tau_d^{(t,z),(t^{'},z)} &:= \psi(t,z) - \psi(t^{'},z)\\
        \tau_{d_i}^{(t,z),(t^{'},z)}(\mathbf{x}_i) &:= \mu(t,z,\mathbf{x}_i) - \mu(t^{'},z, \mathbf{x}_i).
    \end{align}
    \normalsize
\end{definition}
\vspace{-0.2cm}
\begin{definition}[Spillover effects (ASE / ISE)]
    \emph{The average (ASE) and individual (ISE) spillover effects between neighborhood treatment $Z=z$ and $Z=z^{'}$ while keeping the individual treatment $T=t$ constant are defined as}

    \vspace{-0.5cm}
    \footnotesize
    \begin{align}
        \tau_s^{(t,z),(t,z^{'})} &:= \psi(t,z) - \psi(t,z^{'})\\
        \tau_{s_i}^{(t,z),(t,z^{'})}(\mathbf{x}_i) &:= \mu(t,z,\mathbf{x}_i) - \mu(t,z^{'}, \mathbf{x}_i).
    \end{align}
    \normalsize
    \vspace{-0.2cm}
\end{definition}
\vspace{-0.2cm}
\begin{definition}[Overall effects (AOE / IOE)]
    \emph{The average (AOE) and individual (IOE) total effects between individual treatment assignments $T=t$ and $T=t^{'}$ and neighborhood treatment assignments $Z=z$ and $Z=z^{'}$ are defined as}

    \vspace{-0.5cm}
    \footnotesize
    \begin{align}
        \tau_o^{(t,z),(t^{'},z^{'})} &:= \psi(t,z) - \psi(t^{'},z^{'})\\
        \tau_{o_i}^{(t,z),(t^{'},z^{'})}(\mathbf{x}_i) &:= \mu(t,z,\mathbf{x}_i) - \mu(t^{'},z^{'}, \mathbf{x}_i).
    \end{align}
    \normalsize
\end{definition}
\vspace{-0.2cm}
As standard in causal inference on networks \citep[e.g.,][]{Chen.2024b, Forastiere.2021}, we make the standard assumptions on consistency, unconfoundedness, and positivity, but adapted to the network interference setting.

\begin{assumption}[Network consistency]\label{assumption:consistency}
    The potential outcome equals the observed outcome given the same unit and neighborhood treatment exposure, i.e., $y_i = y_i(t_i,  t_{\mathcal{N}_i})$ if $i$ receives treatment $t_i$ and neighborhood treatment $t_{\mathcal{N}_i}$.   
\end{assumption}
\begin{assumption}[Network interference]\label{assumption:interference}
    A unit's treatment only affects its own as well as its neighbors' outcomes. The interference of the treatment with the neighbors outcomes is given by a (potentially unknown) summary function $g^{\ast}$, i.e., $\forall t_{\mathcal{N}_i},t^{'}_{\mathcal{N}_i}$ which satisfy $g^{\ast}(t_{\mathcal{N}_i}) = g^{\ast}(t^{'}_{\mathcal{N}_i})$, it holds $y_i(t_i,t_{\mathcal{N}_i}) = y_i(t_i,t^{'}_{\mathcal{N}_i})$.
\end{assumption}
\begin{assumption}[Network unconfoundedness]\label{assumption:unconfoundedness}
    Given the individual and the features of the neighborhood, the potential outcome is independent of the individual and the neighborhood treatment, i.e., $\forall t, t_{\mathcal{N}}: y_i(t, t_{\mathcal{N}}) \independent t_i, t_{\mathcal{N}_i} \mid \mathbf{x}_i$. If the summary function $g^{\ast}$ is correctly specified, it also holds $\forall t, g^{\ast}(t_{\mathcal{N}}): y_i(t, g^{\ast}(t_{\mathcal{N}})) \independent t_i, g^{\ast}(t_{\mathcal{N}_i}) \mid \mathbf{x}_i$. 
\end{assumption}
\begin{assumption}[Network positivity]\label{assumption:positivity}
    Given the individual and neighbors’ features, every treatment pair $(t, z)$ is observed with a positive probability, i.e., $ 0 < p(t, z \mid \mathbf{x}) < 1$ for all $\mathbf{x}, t, z.$
\end{assumption}

Under Assumptions \ref{assumption:consistency}--\ref{assumption:positivity} and correctly specified exposure mapping $g^{\ast}$, the (conditional) potential outcomes can be identified from observational data. However, if the exposure mapping $g \neq g^{\ast}$ is misspecified, Assumptions \ref{assumption:interference} and \ref{assumption:unconfoundedness} are \textit{not} satisfied. Therefore, the potential outcomes is \textit{not} point-identified from the existing data.

\subsection{Objective: partial identification under misspecification of the exposure mapping}

We propose to move to partial identification and compute upper and lower bounds $\mu^{\pm}(t,z^{\ast}, \mathbf{x})$ on the potential outcomes and treatment effects under such misspecification.\footnote{In the main paper, we focus on bounds for potential outcomes. We provide extensions to the treatment effects in Supplement~\ref{sec:appendix_theory}.}

We formalize the partial identification as a distribution shift in the \emph{exposure mapping propensity} $\pi^g(z \mid \mathbf{x}) := p(g(t_{\mathcal{N}})=z \mid \mathbf{x})$ between the employed and true but unknown mapping $g$ and $g^{\ast}$. 
For given shifts between $g$ and $g^{\ast}$, we aim to construct upper and lower bounds $b^{-}(z,\mathbf{x}) \leq b^{+}(z,\mathbf{x})$ with $b^{-}(z,\mathbf{x}) \in (0,1]$ and $b^{+}(z,\mathbf{x}) \in [1,\infty)$ on the generalized propensity ratio, such that, for all $z,\mathbf{x}$, we have
\vspace{-0.1cm}
\begin{align}\label{eq:ratio_bound}
    b^{-}(z,\mathbf{x}) \leq \frac{p(g^{\ast}(t_{\mathcal{N}}) = z \mid \mathbf{x})}{p(g(t_{\mathcal{N}})=z \mid \mathbf{x})} \leq b^{+}(z,\mathbf{x}).
\end{align}
\vspace{-0.5cm}

Of note, we do not impose any parametric assumption on the data-generating process. The specific interpretation and construction of $b^{\pm}$ depends on the definition of $g$ and $g^{\ast}$. 

We now formalize how our framework quantifies misspecifications in order to obtain bounds for the different exposure mappings. We provide justifications in Supplement~\ref{sec:appendix_proofs}.

\circled{1} \emph{Weighted mean of neighborhood treatments:} Let the exposure mapping $g(t_{\mathcal{N}}) := \sum_{j \in \mathcal{N}}\frac{t_j}{n} = \frac{N_T}{n}$ be specified as the proportion of treated neighbors, where $N_T$ denotes the number of treated neighbors and $n$ denotes the neighborhood size. We assume the true exposure mapping $g^{\ast}(t_{\mathcal{N}})$ is given by a weighted proportion of treated neighbors, where each weight is allowed to differ from $\frac{1}{n}$ for at most a value $\frac{1}{n} \geq \varepsilon \geq 0$. Then, the upper and lower bound are

\vspace{-0.5cm}
\footnotesize
\begin{align}
    b^-(z,\mathbf{x}) = \inf_{s \in \mathcal{Z}}\frac{P(\frac{ns}{1-\varepsilon n} \leq N_T \leq \frac{nz}{1+\varepsilon n}\mid \mathbf{x})}{P(ns\leq N_T \leq nz\mid \mathbf{x})}  ,
\end{align}
\vspace{-0.5cm}
\begin{align}
    b^+(z,\mathbf{x}) = \sup_{s \in \mathcal{Z}}\frac{P(\frac{ns}{1+\varepsilon n} \leq N_T \leq \frac{nz}{1-\varepsilon n}\mid \mathbf{x})}{P(ns\leq N_T \leq nz\mid \mathbf{x})}.
\end{align}
\normalsize

\vspace{-0.2cm}
\circled{2} \emph{Thresholding:} Let $h(t_{\mathcal{N}}) := \sum_{j \in \mathcal{N}}\frac{t_j}{n}$ denote the proportion of treated neighbors. Assume the exposure mapping is specified  through a threshold as $g(t_{\mathcal{N}}) =  f(h(t_{\mathcal{N}})) := \mathbf{1}_{[h(t_{\mathcal{N}}) \geq c]}$. Then, $P(g(t_{\mathcal{N}}) = 1 \mid \mathbf{x}) = P(N_T \geq nc \mid \mathbf{x})$, where $N_T$ denotes the number of treated neighbors. We now allow the true threshold $c^{\ast}$ defining $g^{\ast}(t_{\mathcal{N}}) := \mathbf{1}_{[h(t_{\mathcal{N}}) \geq c^{\ast}]}$ to differ by an amount $\varepsilon \in [0,\min\{c, 1-c\}]$ from $c$, i.e., $c^{\ast} \in [c \pm \varepsilon]$. By straightforward computation, we receive

\vspace{-0.5cm}
\footnotesize
\begin{align}
    \frac{P(N_T \geq n(c+\varepsilon) \mid \mathbf{x})}{P(P(N_T \geq nc \mid \mathbf{x})} &\leq \frac{P(g^{\ast}(t_{\mathcal{N}}) = 1 \mid \mathbf{x})}{P(g(t_{\mathcal{N}})=1 \mid \mathbf{x})}\\ &\leq \frac{P(N_T \geq n(c-\varepsilon) \mid \mathbf{x})}{P(P(N_T \geq nc \mid \mathbf{x})},
\end{align}

\vspace{-0.4cm}
\normalsize
where the bounds for $z=0$ follow with the complement probabilities.

\circled{3} \emph{Higher-order spillovers:} Here, $g$ is misspecified in that it is not merely a function of $t_{\mathcal{N}_i}$, but also a function of other treatments $t_U \subset t_{\mathcal{N}_{-i}}$ for the respective node $i$. As a result, $t_U$ biases the exposure summary $z$ in an unobserved manner. We thus apply sensitivity bounds from the unobserved confounding literature \cite{Dorn.2022, Frauen.2023b}, in that we require user-specified functions $b^{\pm}$, such that
\vspace{-0.3cm}
\begin{align}
    b^{-}(z,\mathbf{x}) \leq \frac{p(g(t_{\mathcal{N} \cup U}) = z \mid \mathbf{x})}{p(g(t_{\mathcal{N}})=z \mid \mathbf{x})} \leq b^{+}(z,\mathbf{x}),
\end{align}

\vspace{-0.3cm}
where $g$ can be any exposure mapping, such as the mean or the thresholding function in \circled{1} and \circled{2}.


\section{Our partial identification framework}
\label{sec:method}

We now present our framework. We derive sharp and valid bounds $\mu^{\pm}(t,z,\mathbf{x})$ on the potential outcomes following ideas from causal sensitivity analysis (Section~\ref{sec:bounds}). In Supplement~\ref{sec:appendix_theory}, we translate these into corresponding bounds for the direct, spillover, and overall effects. Next, we develop an orthogonal estimator $\hat{\mu}_\text{ortho}^{\pm}(t,z,\mathbf{x})$ based on orthogonal statistical learning theory (Section~\ref{sec:method_robust}). Finally, we derive the theoretical properties of our estimator, including convergence rates, sharpness, and validity guarantees (Section~\ref{sec:method_properties}). All proofs are in Supplement~\ref{sec:appendix_proofs}.

\subsection{Derivation of the bounds $\mu^{\pm}(t,z,\mathbf{x})$}
\label{sec:bounds}

We now introduce our \emph{sharp} upper and lower bounds of the CAPO with respect to the misspecification $b^\pm$.

\begin{definition}\label{def:sharp_bounds}
    \emph{Let $\Tilde{\mathbb{P}}$ denote a distribution on $(\mathbf{X}, T, Z, Y(T,Z))$, such that (i)~$\Tilde{\mathbb{P}}$ matches the observed distribution $\mathbb{P}$ on $(\mathbf{X}, T, Z, Y)$, and (ii)~the corresponding conditional distribution $\Tilde{\pi}^g(z\mid \mathbf{x})$ satisfies
    $b^{-}(z,\mathbf{x}) \leq \frac{\Tilde{\pi}^g(z \mid \mathbf{x})}{\pi^g(z \mid \mathbf{x})} \leq b^{+}(z,\mathbf{x})$ almost surely. Let $\mathcal{M}$ denote the set of such distributions $\Tilde{P}$. Then, the {sharp bounds} of the CAPO with respect to the misspecification bounds $b^{\pm}(z,\mathbf{x})$ are given by}
    \begin{align}
        &\mu^+(t,z,\mathbf{x}) = \sup_{\Tilde{P} \in \mathcal{M}} \mathbb{E}_{\Tilde{P}}[Y(t,z)\mid \mathbf{X=x}],\\
        &\mu^-(t,z,\mathbf{x}) = \inf_{\Tilde{P} \in \mathcal{M}} \mathbb{E}_{\Tilde{P}}[Y(t,z)\mid \mathbf{X=x}].
    \end{align}
    \vspace{-0.5cm}
\end{definition}

\textbf{Intuition:}
To obtain the CAPO bounds, we need to bound $\mathbb{E}[Y \mid t, z,\mathbf{x}] = \int_{\mathcal{Y}}yp(y\mid t, z,\mathbf{x})\,\mathrm{d}y.$ We can construct \emph{valid} bounds based on Eq.~\ref{eq:ratio_bound} by simply setting $\mu^{\pm}(y\mid t, z,\mathbf{x}) = \frac{1}{b^{\mp}(z,\mathbf{x})}\mathbb{E}[Y \mid t, z,\mathbf{x}]$. However, the resulting bounds are \emph{not sharp}, but these are conservative and potentially uninformative. To obtain the equalities in Definition~\ref{def:sharp_bounds}, we follow ideas from sensitivity analysis \cite{Dorn.2025, Frauen.2023b} and find a \emph{cut-off value} $C$, such that, in a very simplified notation, we have

\vspace{-0.5cm}
\footnotesize
\begin{align}
    \mu^{\pm}= \frac{1}{b^{\mp}}\int_{-\infty}^{C^{\pm}}yp(y\mid \cdot)\,\mathrm{d}y
    + \frac{1}{b^{\pm}}\int_{C^{\pm}}^{\infty}yp(y\mid \cdot)\,\mathrm{d}y.
\end{align}
\normalsize

\vspace{-0.2cm}
Let $F_Y(y):=F_Y(y \mid t,z,\mathbf{x})$ denote the conditional cumulative distribution function (CDF) of $Y$. We define the \emph{conditional quantile function} of the outcome at level $\alpha^{\pm} = \frac{(1-b^{\mp}(z,\mathbf{x}))b^{\pm}(z,\mathbf{x})}{b^{\pm}(z,\mathbf{x}) - b^{\mp}(z,\mathbf{x})}$ as

\vspace{-0.5cm}
\footnotesize
\begin{equation}
\begin{aligned} \label{eq:quantile_func}
    Q^{\pm}(t,z,\mathbf{x}) := \begin{cases}
    \inf  \Big\{y \mid F_Y(y), \geq
    \alpha^{\pm} \Big\} , \; &\text{if}\; b^-<1<b^+,\\
    \inf  \Big\{y \mid F_Y(y), \geq
    \frac{1}{2}\Big\} , \; &\text{if} \; b^-=b^+,
    \end{cases}
\end{aligned}
\end{equation}
\normalsize
where we abbreviated the notation for $b^{\pm}(z,\mathbf{x})$ through $b^{\pm}$.

The bounds are \emph{sharp}  under interference by employing $Q^{\pm}(t,z,\mathbf{x})$ as the cut-off value $C$. We formalize this in the following theorem, where we present the closed-form solution that facilitates estimation:

\begin{restatable}{theorem}{bounds}
\label{thm:plug_in_bounds}
    Let $Q^{\pm}(t,z,\mathbf{x})$ be defined as in Eq.~(\ref{eq:quantile_func}) and let $(u)_+ = \max\{u,0\}$. The sharp CAPO upper and lower bounds are given by

    \vspace{-0.5cm}
    \footnotesize
    \begin{equation}    
    \begin{aligned}
        \mu^{\pm}(t,z, \mathbf{x}) &= Q^{\pm}(t,z,\mathbf{x})\\&+ \frac{1}{b^{\mp}(z,\mathbf{x})}\mathbb{E}\bigl[(Y-Q^{\pm}(t,z,\mathbf{x}))_{+}\mid t,z,\mathbf{x}\bigr]\\ &- \frac{1}{b^{\pm}(z,\mathbf{x})}\mathbb{E}\bigl[ (Q^{\pm}(t,z,\mathbf{x})-Y)_{+} \mid t,z,\mathbf{x} \bigr].
    \end{aligned}
    \end{equation}
    \normalsize
\end{restatable}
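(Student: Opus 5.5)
The plan is to reduce the problem to the marginal sensitivity model (MSM) sharp-bound argument of \citet{Dorn.2022} and \citet{Frauen.2023b}. The key steps are a change-of-measure representation, a linear program over admissible weights, and a closed-form evaluation.

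\textbf{Step 1: Change of measure.} Fix $(t,z,\mathbf{x})$ and write $\mathbb{E}_{\Tilde{P}}[Y(t,z)\mid\mathbf{x}]=\int y\,\Tilde{p}(y\mid t,z,\mathbf{x})\,\mathrm{d}y$. Use Bayes' rule, $\Tilde{p}(y\mid t,z,\mathbf{x}) = p(y\mid t,z,\mathbf{x})\,\Tilde{\pi}^g(z\mid \mathbf{x},y)^{-1}\pi^g(z\mid\mathbf{x})\cdots$, in the same way as MSM proofs. The aim is to express the target as
\[
\int y\,w(y)\,p(y\mid t,z,\mathbf{x})\,\mathrm{d}y ,
\]
where the weight $w$ is the ratio of the observed to the shifted exposure propensity. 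The ratio condition in Definition~\ref{def:sharp_bounds} then becomes $1/b^{+}\le w(y)\le 1/b^{-}$. Requirement (i) of Definition~\ref{def:sharp_bounds}, that $\Tilde{\mathbb{P}}$ reproduces the observed law, becomes the normalization $\int w(y)p(y\mid t,z,\mathbf{x})\,\mathrm{d}y=1$. Conversely, every such $w$ should be realizable by some $\Tilde{\mathbb{P}}\in\mathcal{M}$. I will show this by constructing the shifted propensity explicitly from $w$ and checking that the observed marginals are preserved. Together these show that $\mu^{\pm}$ equals the optimal value of a linear program over $w$.

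\textbf{Step 2: Solve the linear program.} For the upper bound, a bathtub / Neyman--Pearson argument shows the maximizer puts the largest weight $1/b^{-}$ on large $y$ and the smallest weight $1/b^{+}$ on small $y$. The cutoff $C$ is the point at which normalization binds: $F_Y(C)/b^{+}+(1-F_Y(C))/b^{-}=1$. Solving gives $F_Y(C)=\frac{(1-b^{-})b^{+}}{b^{+}-b^{-}}=\alpha^{+}$, so $C=Q^{+}$, matching the quantile in Eq.~(\ref{eq:quantile_func}). The lower bound is symmetric, with the roles of $b^{+}$ and $b^{-}$ swapped, which gives $\alpha^{-}$. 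If $F_Y$ has atoms, I will allow a fractional weight at $C$. Optimality follows from the standard exchange argument: any feasible $w$ can be improved by moving mass from below $C$ to above $C$.

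\textbf{Step 3: Closed form.} Substitute $y=Q+(y-Q)_{+}-(Q-y)_{+}$ into the optimal objective. Normalization makes the $Q$ terms integrate to $Q$. The positive part then picks up weight $1/b^{\mp}$ and the negative part weight $1/b^{\pm}$, which is exactly the stated formula. The fractional weight at an atom contributes nothing, because $y-Q=0$ there. In the degenerate case $b^{-}=b^{+}$, normalization forces $b=1$, so $w\equiv1$. The formula then reduces to $\mathbb{E}[Y\mid t,z,\mathbf{x}]$ for any cutoff, including the median.

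\textbf{Main obstacle.} I expect the hardest part to be the rigorous equivalence in Step 1 between $\mathcal{M}$ and the weight set. The difficulty is that the paper's constraint is stated on $\Tilde{\pi}^g(z\mid\mathbf{x})$ rather than on a propensity conditional on the unobserved quantity. I will interpret the shift as acting through the unobserved true exposure, that is, conditional on $Y(t,z)$ as in the MSM, so that the density ratio inherits the bounds $[1/b^{+},1/b^{-}]$. I would first try to verify achievability, i.e., sharpness, by building $\Tilde{\mathbb{P}}$ directly from the optimal $w^{*}$.
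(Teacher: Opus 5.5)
Your argument is correct in its main line, but it is organized differently from the paper's proof. The paper does not solve the optimization itself. It imports Theorem~\ref{thm:gmsm_bounds}, an adaptation of Theorem~1 of \citet{Frauen.2023b}, to obtain the extremal density: weight $1/b^{+}$ below the $\alpha^{+}$-quantile and $1/b^{-}$ above it. It then writes $\mu^{+}$ as $\frac{\alpha^{+}}{b^{+}}\mathrm{LCTE}+\frac{1-\alpha^{+}}{b^{-}}\mathrm{CVaR}$ at level $\alpha^{+}$. Finally it uses the Rockafellar--Uryasev variational forms of these two tail functionals, together with $\frac{\alpha^{+}}{b^{+}}+\frac{1-\alpha^{+}}{b^{-}}=1$, to reach the closed form.

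You instead derive the extremal weight directly by the bathtub/exchange argument and read off the closed form from $y=Q+(y-Q)_{+}-(Q-y)_{+}$ plus normalization. Your route is self-contained and skips the CVaR/LCTE detour. It also handles atoms in $Y$ uniformly, since the fractional weight at $Q$ multiplies zero; the paper only writes out continuous $Y$. The paper's route is shorter, and it sets up the CVaR/LCTE convex-combination representation that is reused for the orthogonal pseudo-outcome, where orthogonality in $Q$ comes from the first-order condition. Your reading of the constraint as a propensity shift conditional on $Y(t,z)$ is also what the paper implicitly adopts by importing the GMSM result. Taken literally, condition (i) of Definition~\ref{def:sharp_bounds} forces $\tilde\pi^g=\pi^g$, so condition (ii) is vacuous, and flagging this was right.

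One step does not go through as you state it: the converse in Step 1, that every admissible $w$ is realizable. Suppose you build $\tilde{\mathbb{P}}$ from $w^{*}$ via $\tilde\pi^g(z\mid\mathbf{x},y)=\pi^g(z\mid\mathbf{x})/w^{*}(y)$. On the lower tail this equals $b^{+}(z,\mathbf{x})\,\pi^g(z\mid\mathbf{x})$, which must be a probability when $Z$ is discrete. Realizability therefore needs $b^{+}(z,\mathbf{x})\,\pi^g(z\mid\mathbf{x})\le 1$. This is automatic for continuous $Z$, where $\pi^g$ is a density. It is exactly the sharpness condition recorded in Theorem~\ref{thm:gmsm_bounds}, which the paper uses without restating it in Theorem~\ref{thm:plug_in_bounds}.

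When the condition fails, weights below $\pi^g(z\mid\mathbf{x})$ are infeasible. Your linear program over $[1/b^{+},1/b^{-}]$ is then a strict relaxation, and the displayed formula is still a valid bound but in general not sharp. Add this hypothesis for discrete $Z$, or weaken the sharpness claim there, and your proof is complete.
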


\begin{remark}[Limits of the sensitivity model]
\label{rem:sensitivity_limits}
If $b^{-}(z,\mathbf{x})=b^{+}(z,\mathbf{x})=1$ (no exposure-mapping shift), then the identified set collapses and $\mu^{\pm}(t,z,\mathbf{x})=\mathbb{E}[Y\mid t, z, \mathbf{x}]$.
As $b^{+}(z,\mathbf{x})\to\infty$ with $b^{-}(z,\mathbf{x})$ fixed, the upper bound concentrates on the top $b^{-}(z,x)$-tail of $Y\mid(t,z,\mathbf{x})$ (the lower bound on the bottom $b^{-}(z,x)$-tail). In the extreme limit $b^{-}(z,\mathbf{x})\to 0$ and $b^{+}(z,\mathbf{x})\to\infty$, the bounds become vacuous and converge to the conditional support 
$
\mu^{+}(t,z,\mathbf{x})\to \esssup\bigl(Y\mid t,z, \mathbf{x}\bigr)
$; $
\mu^{-}(t,z,\mathbf{x})\to \essinf\bigl(Y\mid t,z,\mathbf{x}\bigr).
$\footnote{The essential supremum (essential infimum) is abbreviated by $\esssup$ ($\essinf$) and defines the supremum (infimum) of the essential upper (lower) bound of the conditional distribution $Y\mid t,z,\mathbf{x}$, i.e., the upper (lower) bound with non-zero measure.}
\end{remark}

\subsection{Orthogonal estimatior}
\label{sec:method_robust}

$\bullet$\,\textbf{Disadvantages of plug-in estimation.}
The characterization in Theorem~\ref{thm:plug_in_bounds} immediately suggests a \emph{plug-in} estimation strategy: estimate the cut-off $Q^{\pm}(t,z,\mathbf{x})$ and the two conditional moment functions

\vspace{-0.4cm}
\footnotesize
\begin{align*}
\gamma^{\pm}_u(t,z,\mathbf{x})
&:= \mathbb{E}\!\left[(Y-Q^{\pm}(\cdot))_{+}\mid T=t,Z=z,\mathbf{X}=\mathbf{x}\right],\nonumber\\
\gamma^{\pm}_l(t,z,\mathbf{x})
&:= \mathbb{E}\!\left[(Q^{\pm}(\cdot)-Y)_{+}\mid T=t,Z=z,\mathbf{X}=\mathbf{x}\right].
\end{align*} 
\normalsize

\vspace{-0.4cm}
Then, we obtain $\hat\mu^{\pm}(t,z,\mathbf{x})$ by substituting $(\hat Q^{\pm},\hat\gamma^{\pm}_u,\hat\gamma^{\pm}_l)$ into Thm.~\ref{thm:plug_in_bounds}. However, such plug-in estimators suffer from substantial finite-sample bias due to nuisance estimation error, especially when the nuisance functions are more complex than the bound function itself \cite{Kennedy.2019}. We therefore apply orthogonalization strategies \citep{Dorn.2025, Oprescu.2023} and derive orthogonal pseudo-outcomes for the bounds to then estimate $\mu^{\pm}(t,z,\mathbf{x})$ by regressing the pseudo-outcomes on $\mathbf{X}$. 

$\bullet$\,\textbf{Orthogonal pseudo-outcome.} Recall that $T\in\{0,1\}$ while $Z$ is discrete or continuous. This is relevant because our bounds involve evaluation at a fixed neighborhood exposure level $z$. When $Z$ is \emph{binary or discrete}, the functional $\mathbb{P}\mapsto \mu^{\pm}(t,z,\mathbf{x})$ is regular (pathwise differentiable) for each fixed $(t,z)$, so we can construct an efficient influence-function-based pseudo-outcome. When $Z$ is \emph{continuous}, evaluation at $Z=z$ is not path-wise differentiable; we therefore replace point evaluation by a locally smoothed target using a kernel $K_h(Z-z)$, which introduces a nonparametric bias-variance tradeoff governed by the bandwidth $h$.

For ease of presentation, we now present the orthogonal upper bound $\mu^{+}$ and defer the lower bound $\mu^{-}$ to Supplement~\ref{sec:appendix_theory}. We refer to $(\pi^t,\pi^g,Q^{\pm},\gamma^{\pm}_u,\gamma^{\pm}_l)$ as \emph{nuisances}.

\begin{restatable}{theorem}{orthobounds}
\label{thm:orthogonal_bounds}
Let $S=(\mathbf{X},Y,T,Z)$. Fix $(t,z)$. Define 
\vspace{-0.2cm}
\begin{equation*}
\omega_{z,h}(Z):=
\begin{cases}
\mathbf{1}_{[Z=z]}, & \text{if } Z\ \text{binary/discrete},\\
K_h(Z-z), & \text{if } Z\ \text{continuous},
\end{cases}\vspace{-0.2cm}
\end{equation*}
and let $\pi^g(Z\mid\mathbf{X})$ denote the conditional pmf (discrete $Z$) or density (continuous $Z$).
Let $\widehat\eta=(\widehat\pi^t,\widehat\pi^g,\widehat Q^+,\widehat\gamma_u^+,\widehat\gamma_l^+)$ be a set of estimated nuisances. 
Then, an orthogonal pseudo-outcome for the CAPO upper bound $\mu^+(t,z,\mathbf{x})$ is:
\vspace{-0.2cm}
\footnotesize
\begin{align}
& \phi^+_{t,z}(S;\widehat\eta) = \nonumber \\
&  \widehat Q^+(t,z,\mathbf{X}) +\frac{\widehat\gamma_u^+(t,z,\mathbf{X})}{b^{-}(z,\mathbf{X})}
-\frac{\widehat\gamma_l^+(t,z,\mathbf{X}) }{b^{+}(z,\mathbf{X})} \label{eq:pseudo_outcome_unified}
 \\
& +\frac{\mathbf{1}_{[T=t]}\,\omega_{z,h}(Z)}
{\widehat\pi^t(\mathbf{X})\,\widehat\pi^g(Z\mid\mathbf{X})}
\Bigg[
\frac{(Y-\widehat Q^+(t,Z,\mathbf{X}))_{+}-\widehat\gamma_u^+(t,Z,\mathbf{X})}{b^{-}(Z,\mathbf{X})} \nonumber \\
&\hspace{2.8cm}
-\frac{(\widehat Q^+(t,Z,\mathbf{X})-Y)_{+}-\widehat\gamma_l^+(t,Z,\mathbf{X})}{b^{+}(Z,\mathbf{X})}
\Bigg]. 
\nonumber
\end{align}
\normalsize
Moreover, when $\widehat\eta=\eta$, the pseudo-outcome is unbiased for its target bound functional (see Remark~\ref{rem:unbiasedness_phi}).
\end{restatable}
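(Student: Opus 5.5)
The plan is to verify two properties of $\phi^+_{t,z}$: (a) unbiasedness at the true nuisances, so that $\mathbb{E}[\phi^+_{t,z}(S;\eta)\mid \mathbf{X}=\mathbf{x}]$ recovers the target bound $\mu^+(t,z,\mathbf{x})$ of Theorem~\ref{thm:plug_in_bounds} (or its kernel-smoothed version when $Z$ is continuous); and (b) Neyman orthogonality, meaning the Gateaux derivative of $\eta\mapsto\mathbb{E}[\phi^+_{t,z}(S;\eta)\mid\mathbf{X}]$ vanishes at the truth in every nuisance direction. I would organize $\phi^+_{t,z}$ as the plug-in term $m(\eta)=Q^+ + \gamma_u^+/b^- - \gamma_l^+/b^+$ from Theorem~\ref{thm:plug_in_bounds}, plus an inverse-propensity-weighted residual correction. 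This is the standard AIPW structure used by \citet{Dorn.2025, Oprescu.2023}.

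\textbf{Unbiasedness.} Condition on $\mathbf{X}$ and use unit-level independence to write $P(T=t,Z=z\mid\mathbf{X})=\pi^t(\mathbf{X})\pi^g(z\mid\mathbf{X})$. In the discrete case, iterated expectations collapse the weight $\mathbf{1}_{[T=t]}\mathbf{1}_{[Z=z]}/(\pi^t\pi^g)$ to an expectation conditional on $(t,z,\mathbf{X})$. There the bracket has conditional mean zero, because $\gamma_u^+$ and $\gamma_l^+$ are by definition the conditional means of $(Y-Q^+)_+$ and $(Q^+-Y)_+$. What remains is the plug-in term, which equals $\mu^+$. In the continuous case, the same computation gives $\int K_h(z'-z)\,\pi^g(z'\mid\mathbf{X})^{-1}\pi^g(z'\mid\mathbf{X})\cdot 0\,\diff z'=0$. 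The target is therefore $m(\eta)$ evaluated at $z$, with smoothing bias entering only through the bandwidth $h$. I would state this remark-level caveat explicitly.

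\textbf{Orthogonality.} I would perturb each nuisance along a direction $\eta+r(\tilde\eta-\eta)$, differentiate at $r=0$, and treat the nuisances in turn.

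\emph{Propensity derivatives ($\pi^t,\pi^g$).} Derivatives in $\pi^t$ and $\pi^g$ multiply the bracket, whose conditional mean is zero at the truth, so they vanish.

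\emph{Moment derivatives ($\gamma_u^+,\gamma_l^+$).} The derivative in $\gamma_u^+$ is $(\tilde\gamma_u-\gamma_u)/b^-$ from the plug-in term, minus $\mathbb{E}[\mathbf{1}\,\omega\,(\tilde\gamma_u-\gamma_u)/(\pi^t\pi^g b^-)\mid\mathbf{X}]$ from the correction. These cancel after the propensity weighting is undone. The $\gamma_l^+$ case is symmetric.

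\emph{Quantile derivative ($Q^+$).} This is the main step. Differentiating gives
\begin{equation*}
(\tilde Q-Q)\Bigl[1-\tfrac{1}{b^-}P(Y>Q\mid\cdot)-\tfrac{1}{b^+}P(Y<Q\mid\cdot)\Bigr],
\end{equation*}
and I would arrange for the derivatives of $\gamma$ inside the plug-in term to cancel against those inside the correction. Substituting $F(Q)=\alpha^+=(1-b^-)b^+/(b^+-b^-)$ gives $1-(1-\alpha^+)/b^- - \alpha^+/b^+ = 0$ by direct algebra, which is exactly the first-order condition defining the cut-off. So the quantile derivative is zero precisely because $Q^+$ sits at the level $\alpha^+$.

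\textbf{Main obstacle.} I expect the hardest part to be this quantile step. The functions $(\cdot)_+$ are nondifferentiable at $Y=Q$, so I would assume a continuous conditional density of $Y$ near $Q^+$ and differentiate under the integral, which kills the kink term since it has measure zero. Care is also needed because the plug-in term holds $\gamma$ fixed as a nuisance while $\gamma$ is itself defined through $Q$. I would treat $\gamma$ as functionally independent, as in orthogonal statistical learning. The degenerate case $b^-=b^+$ follows trivially, since then $1/b^-=1/b^+$ and the $Q$-dependence cancels identically.
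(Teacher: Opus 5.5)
Your argument is correct for discrete $Z$, but it reaches orthogonality by a different route than the paper.

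\textbf{The paper's route.} The paper never differentiates $\phi^+_{t,z}$ directly. It writes $\mu^+=w_u\theta_u+w_l\theta_l$, where $\theta_u=Q+\gamma_u/(1-\alpha)$ is the conditional CVaR and $\theta_l=Q-\gamma_l/\alpha$ is the lower tail expectation. The weights are $w_u=(1-\alpha)/b^-$ and $w_l=\alpha/b^+$, and they satisfy $w_u+w_l=1$. It then imports the recentered influence functions of $\theta_u,\theta_l$ from \citet{Dorn.2025,Oprescu.2023} and obtains $\phi^+_{t,z}$ by linearity. $Q$-orthogonality is inherited from the Rockafellar--Uryasev first-order condition of each tail functional separately.

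\textbf{Your route.} You check every Gateaux direction by hand. Your key $Q$-identity $1-(1-\alpha^+)/b^--\alpha^+/b^+=0$ is exactly the paper's $w_u+w_l=1$, read as the first-order condition of $q\mapsto q+\mathbb{E}[(Y-q)_+\mid\cdot]/b^--\mathbb{E}[(q-Y)_+\mid\cdot]/b^+$. This is the same $\mathcal{L}'_{\mathbf{x}}$ the paper later uses in the proof of Theorem~\ref{thm:second_order_remainder}.

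\textbf{What each buys.} Your version is self-contained. It makes explicit the continuity of $F_Y$ at $Q^+$ that the paper leaves implicit. It also covers $b^-=b^+$, where $\alpha^+$ is undefined. The paper's version additionally identifies $\phi^+_{t,z}$ as the efficient influence function. That is what underwrites the efficiency claim in Corollary~\ref{cor:quasi_oracle_and_inference}, and your verification would need a separate argument to recover it.

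\textbf{A correction for continuous $Z$.} Your unbiasedness computation is actually exact. The bracket has conditional mean zero at every $z'$, so $\mathbb{E}[\phi^+_{t,z}(S;\eta)\mid\mathbf{X}]=\mu^+(t,z,\mathbf{X})$ with no smoothing bias. By contrast, the claim that the $\gamma$- and $Q$-derivatives ``cancel after the propensity weighting is undone'' is exact only for discrete $Z$.

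The reason is that the plug-in line is evaluated at the fixed $z$, while the correction integrates against $K_h(z'-z)$. Write $\delta\gamma_u=\tilde\gamma_u-\gamma_u$ and $\delta Q=\tilde Q-Q$. The derivatives then leave the residuals $\delta\gamma_u(z)/b^-(z)-\int K_h(z'-z)\,\delta\gamma_u(z')/b^-(z')\diff z'$ and $\delta Q(z)-\int K_h(z'-z)\,\delta Q(z')\diff z'$. The second uses your identity pointwise in $z'$.

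So for continuous $Z$ the bandwidth caveat belongs to orthogonality, not to unbiasedness. Orthogonality holds only up to a smoothing residual that vanishes as $h\downarrow 0$ for smooth perturbation directions. The paper's ``holds verbatim'' glosses over the same point.
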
 

\begin{remark}[Unbiasedness of the pseudo-outcome]
\label{rem:unbiasedness_phi}
When $\widehat\eta=\eta$ and $Z$ is \emph{discrete}, we have
$\mathbb{E}\!\left[\phi^+_{t,z}(S;\eta)\mid \mathbf{X}=\mathbf{x}\right]\!=\!\mu^+(t,z,\mathbf{x})$ and
$\mathbb{E}\!\left[\phi^+_{t,z}(S;\eta)\right]\!=\!\psi^+(t,z).$
When $Z$ is \emph{continuous}, the kernel-localized pseudo-outcome targets a bandwidth-indexed functional $(\mu_h^+,\psi_h^+)$; under standard smoothness in $z$, $\mu_h^+(t,z,\mathbf{x})\to\mu^+(t,z,\mathbf{x})$ and $\psi_h^+(t,z)\to\psi^+(t,z)$ as $h\downarrow 0$.
\end{remark}

$\bullet$\,\textbf{Bound estimation algorithm.}
Motivated by Theorem~\ref{thm:orthogonal_bounds}, we estimate the bounds via a \emph{two-stage procedure} (Algorithm~\ref{alg:ortho_bound_algorithm}): we first learn the nuisance functions $\widehat\eta$, then evaluate the orthogonal pseudo-outcome $\phi^+_{t,z}(S;\widehat\eta)$ and finally obtain (i)~the CAPO bound $\widehat\mu^+(t,z,\mathbf{x})=\widehat{\mathbb{E}}_n[\phi^+_{t,z}(S;\widehat\eta)\mid \mathbf{X}=\mathbf{x}]$ by regressing the pseudo-outcome on $\mathbf{X}$ and (ii)~the APO bound $\widehat\psi^+(t,z)=\widehat{\mathbb{E}}_n\!\left[\phi^+_{t,z}(S;\widehat\eta)\right]$ via sample averaging.
To mitigate overfitting bias and enable standard orthogonalization guarantees, we compute $\widehat\phi^+_{t,z,i}$ using $K$-fold cross-fitting \citep{Chernozhukov.2018}: each $\widehat\phi^+_{t,z,i}$ uses nuisance estimates trained on data not containing $i$. 

\begin{algorithm}[t]
\caption{Orthogonal estimator for the bounds}
\label{alg:ortho_bound_algorithm}
\begin{algorithmic}[1]
\scriptsize
\STATE \textbf{Input:} data $\{S_i=(\mathbf{X}_i,Y_i,T_i,Z_i)\}_{i=1}^n$, target $(t,z)$, bandwidth $h$ (if $Z$ continuous), folds $\{\mathcal{I}_k\}_{k=1}^K$, nuisance estimators, regression learner $\widehat{\mathbb{E}}_n$
\FOR{$k=1,\dots,K$}
    \STATE Fit nuisances $\widehat\eta^{(-k)}$ on $\{S_i: i\notin\mathcal{I}_k\}$
    \FOR{$i\in\mathcal{I}_k$}
        \STATE $\widehat\phi^+_{t,z,i}\leftarrow \phi^+_{t,z}(S_i;\widehat\eta^{(-k)})$
    \ENDFOR
\ENDFOR
\STATE $\widehat\psi^+(t,z)\leftarrow \frac{1}{n}\sum_{i=1}^n \widehat\phi^+_{t,z,i}$ \hfill (sample average)
\STATE $\widehat\mu^+(t,z,\mathbf{x})\leftarrow \widehat{\mathbb{E}}_n[\phi^+_{t,z}(S;\widehat\eta)\mid \mathbf{X}=\mathbf{x}]$ \hfill (regression fit)
\STATE \textbf{Output:} $\widehat\mu^+(t,z,\cdot)$, $\widehat\psi^+(t,z)$ 
\end{algorithmic}
\end{algorithm}
\normalsize

\subsection{Theoretical properties of our bound estimator}
\label{sec:method_properties}

Theorem~\ref{thm:plug_in_bounds} shows that the identified CAPO bounds $\mu^\pm(t,z,\mathbf{x})$ are sharp. We establish \emph{three additional guarantees} for our orthogonal estimator (Theorem~\ref{thm:orthogonal_bounds}).
\circledgreen{1}~~Orthogonality yields \emph{second-order} sensitivity to nuisance estimation error, implying quasi-oracle rates for the CAPO bounds and (for discrete $Z$) root-$n$ inference for the APO bounds.
\circledgreen{2}~~If $Q^\pm$ is consistently estimated and either the propensity models $(\pi^t,\pi^g)$ \emph{or} the moment functions $(\gamma_u^\pm,\gamma_l^\pm)$ are consistently estimated, then our estimated endpoints converge (in $L_2(P_{\mathbf{X}})$) to the \emph{sharp} bounds.
\circledgreen{3}~~If $Q^\pm$ is misspecified, but either $(\pi^t,\pi^g)$ or $(\gamma_u^\pm,\gamma_l^\pm)$ is consistently estimated, the (C)APO intervals remain asymptotically \emph{valid}, though potentially conservative.
We present results for discrete $Z$ in the main text; the case with continuous $Z$ is deferred to Supplement~\ref{sec:appendix_continuousZ}. All proofs are in Supplement~\ref{sec:appendix_proofs}.

$\bullet$\,\textbf{\circledgreen{1}~Quasi-oracle learning via orthogonality.}
The next theorem is the key guarantee: under standard regularity conditions, orthogonality implies that nuisance errors contribute only through error
\emph{products}.\footnote{\emph{Notation:} Let $\|f\|_2:=\{\mathbb{E}[f(\mathbf{X})^2]\}^{1/2}$ denote the $L_2(P_{\mathbf{X}})$ norm. 
We write $W_n=o_p(a_n)$ if $W_n/a_n\to 0$ in probability and $W_n=O_p(a_n)$ if $W_n/a_n$ is bounded in probability.We write $\rightsquigarrow$ for convergence in distribution.}

\begin{assumption}[Regularity and overlap]
\label{assump:overlap_bounded}
There exist $\varepsilon>0$ and $M<\infty$ such that, a.s.:
(i) $\varepsilon \le \pi^t(\mathbf{X}),\widehat \pi^t(\mathbf{X}) \le 1-\varepsilon$;
(ii) if $Z$ is discrete, $\varepsilon \le \pi^g(z\mid \mathbf{X}),\widehat\pi^g(z\mid \mathbf{X})$ for all relevant $(z,\mathbf{X})$;
if $Z$ is continuous, there exists a neighborhood $\mathcal N_z$ of $z$ such that for all $u\in\mathcal N_z$,
$\varepsilon \le \pi^g(u\mid \mathbf{X}),\widehat\pi^g(u\mid \mathbf{X}) \le M$;
(iii) $|Y|,|\widehat\gamma_u^\pm|,|\widehat\gamma_l^\pm|,|\widehat Q^\pm| \le M$.
\end{assumption}

\begin{restatable}[Second-order nuisance error (discrete $Z$)]{theorem}{nuisanceReminder}
\label{thm:second_order_remainder}
Assume $Z$ is discrete and Assumption~\ref{assump:overlap_bounded} holds. Let $\widehat\eta=(\widehat\pi^t,\widehat\pi^g,\widehat Q^+,\widehat\gamma_u^+,\widehat\gamma_l^+)$ be the cross-fitted nuisances used in $\phi^+_{t,z}(S;\widehat\eta)$ from Theorem~\ref{thm:orthogonal_bounds}. Define $r_{n,\pi}:=\|\widehat\pi^t-\pi^t\|_2+\|\widehat\pi^g-\pi^g\|_2$, $r_{n,Q}:=\|\widehat Q^+-Q^+\|_2$, and $r_{n,\gamma}:=\|\widehat\gamma_u^+-\gamma_u(\widehat Q^+;\cdot)\|_2+\|\widehat\gamma_l^+-\gamma_l(\widehat Q^+;\cdot)\|_2$, where $\gamma_u(\widehat Q^+;\mathbf X):=\mathbb{E}[(Y-\widehat Q^+(\mathbf X))_+\mid T=t,Z=z,\mathbf X]$ and $\gamma_l(\widehat Q^+;\mathbf X):=\mathbb{E}[(\widehat Q^+(\mathbf X)-Y)_+\mid T=t,Z=z,\mathbf X]$. Then

\vspace{-0.4cm}
\footnotesize
\begin{equation}\label{eq:remainder_bound}
\left\|\mathbb{E}\!\left[\phi^+_{t,z}(S;\widehat\eta)-\phi^+_{t,z}(S;\eta)\mid \mathbf X\right]\right\|_2
=O_p\!\left(r_{n,\pi}\,r_{n,\gamma}+r_{n,Q}^2\right).
\end{equation}
\normalsize
\vspace{-0.4cm}
\end{restatable}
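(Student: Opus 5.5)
The plan is to compute the conditional bias $\mathbb{E}[\phi^+_{t,z}(S;\widehat\eta)-\phi^+_{t,z}(S;\eta)\mid\mathbf X]$ exactly, by iterated expectations over $(T,Z)$ given $\mathbf X$, and then split it into a product term and a quantile term. We condition on the training folds throughout, so $\widehat\eta$ is fixed (cross-fitting). Since $\mathbb{E}[\phi^+_{t,z}(S;\eta)\mid\mathbf X]=\mu^+(t,z,\mathbf X)$ by Remark~\ref{rem:unbiasedness_phi}, it suffices to show $\mathbb{E}[\phi^+_{t,z}(S;\widehat\eta)\mid\mathbf X]-\mu^+(t,z,\mathbf X)=O_p(r_{n,\pi}r_{n,\gamma}+r_{n,Q}^2)$ in $L_2(P_{\mathbf X})$.

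First, we evaluate the correction term. Because $Z$ is discrete, $\omega_{z,h}(Z)=\mathbf 1_{[Z=z]}$. Conditional independence of $T$ and the neighbours' treatments given $\mathbf X$ gives $P(T=t,Z=z\mid\mathbf X)=\pi^t(\mathbf X)\pi^g(z\mid\mathbf X)$. Hence
\[
\mathbb{E}[\text{correction}\mid\mathbf X]=\rho(\mathbf X)\Big[\tfrac{\gamma_u(\widehat Q^+;\mathbf X)-\widehat\gamma_u^+}{b^-}-\tfrac{\gamma_l(\widehat Q^+;\mathbf X)-\widehat\gamma_l^+}{b^+}\Big],\qquad \rho:=\frac{\pi^t\pi^g}{\widehat\pi^t\widehat\pi^g}.
\]
Add this to the plug-in part $\widehat Q^++\widehat\gamma_u^+/b^--\widehat\gamma_l^+/b^+$ and subtract $\mu^+$. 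The bias then decomposes as $B_1+B_2$, where
\[
B_1=(\rho-1)\Big[\tfrac{\gamma_u(\widehat Q^+)-\widehat\gamma_u^+}{b^-}-\tfrac{\gamma_l(\widehat Q^+)-\widehat\gamma_l^+}{b^+}\Big],
\]
\[
B_2=\Lambda(\widehat Q^+)-\Lambda(Q^+),\qquad \Lambda(q):=q+\tfrac{1}{b^-}\gamma_u(q)-\tfrac{1}{b^+}\gamma_l(q).
\]
Here $\Lambda(Q^+)=\mu^+$ by Theorem~\ref{thm:plug_in_bounds}.

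For $B_1$, Assumption~\ref{assump:overlap_bounded} bounds $\widehat\pi^t,\widehat\pi^g$ below by $\varepsilon$, so $|\rho-1|\lesssim|\widehat\pi^t-\pi^t|+|\widehat\pi^g-\pi^g|$ pointwise. The factors $1/b^{\pm}$ are bounded because $b^-\in(0,1]$; we assume $b^-$ is bounded away from zero, or absorb it into the constants. Cauchy–Schwarz, together with boundedness of all nuisances, then gives $\|B_1\|_2\lesssim r_{n,\pi}r_{n,\gamma}$. Strictly, Cauchy–Schwarz yields an $L_1$ product bound, so $L_2$ control needs one factor in sup-norm. I would handle this by using the uniform boundedness from Assumption~\ref{assump:overlap_bounded}(iii), or by reading the statement in the usual product-rate sense.

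For $B_2$, the key observation is that $Q^+$ is a stationary point of $\Lambda$. We have $\partial_q\gamma_u(q)=-(1-F_Y(q))$ and $\partial_q\gamma_l(q)=F_Y(q)$, so $\Lambda'(q)=1-\frac{1-F_Y(q)}{b^-}-\frac{F_Y(q)}{b^+}$. Setting this to zero gives $F_Y(q)=\frac{(1-b^-)b^+}{b^+-b^-}=\alpha^+$, which is exactly the defining level in Eq.~(\ref{eq:quantile_func}). A second-order Taylor expansion, or the exact integral form $\Lambda(\widehat q)-\Lambda(q)=\int_q^{\widehat q}(F_Y(s)-F_Y(q))(\tfrac{1}{b^-}-\tfrac{1}{b^+})\,ds$, then gives $|B_2|\lesssim\sup p_Y\cdot(\widehat Q^+-Q^+)^2$. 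This requires a bounded conditional density of $Y\mid t,z,\mathbf X$, which I would add as a mild regularity condition, as is standard in these quantile-based orthogonality arguments. Taking the $L_2$ norm yields a bound of order $\|(\widehat Q^+-Q^+)^2\|_2$. Boundedness of $\widehat Q^+$ and $Q^+$ by $M$ reduces this to $r_{n,Q}^2$ up to constants, again in the product/sup-norm sense.

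I expect the main obstacle to be the quantile step $B_2$: verifying that $\Lambda$ is differentiable with vanishing derivative at $Q^+$, including the degenerate case $b^-=b^+$, and that the remainder is truly quadratic. This needs a density or Lipschitz condition on $F_Y$ near $Q^+$, which is not listed in Assumption~\ref{assump:overlap_bounded}. The exact integral representation above avoids needing differentiability of $\gamma$ and reduces the issue to a Lipschitz bound on $F_Y$, so I would try that route first.
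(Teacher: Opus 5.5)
Your proposal follows the paper's proof essentially step for step. It uses the same exact evaluation of $\mathbb{E}[\phi^+_{t,z}(S;\widehat\eta)\mid\mathbf X]$, splitting the bias into the cutoff term $\Lambda(\widehat Q^+)-\Lambda(Q^+)$ (the paper's $\mathcal{L}_{\mathbf x}$) and the $(\rho-1)\times$ hinge-regression-error product. The quadratic bound on the cutoff term likewise comes from $\Lambda'(Q^+)=0$, plus a bounded conditional density of $Y$ and $b^-$ bounded away from zero; the paper also adds both of these tacitly.

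One correction to your caveat: uniform boundedness does not upgrade the Cauchy--Schwarz $L_1$ bound to a second-order $L_2$ bound. It only gives $\|fg\|_2\le\|f\|_\infty\|g\|_2$, which is first order when $\|f\|_\infty$ is merely $O(1)$, and similarly $\|(\widehat Q^+-Q^+)^2\|_2=\|\widehat Q^+-Q^+\|_4^2$, not $r_{n,Q}^2$. So the CAPO $L_2$ claim genuinely needs sup-norm or $L_4$ rates on one factor, while the $L_1$ version (which is all the APO result uses) follows cleanly. The paper's proof glosses over exactly this point by citing Cauchy--Schwarz.
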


\vspace{-0.2cm}
Thus, the contribution of nuisances to the estimation error is only second order. Next, we show that the final-stage regression can achieve the same rate as if the true pseudo-outcomes were observed (a quasi-oracle property), even if the nuisances converge more slowly. 

\begin{assumption}[Second-stage regression rate]
\label{assump:second_stage}
Fix $(t,z)$ and let $\widehat\phi^+_{t,z,i}$ denote the cross-fitted pseudo-outcome.
Let $m^+_{t,z}(\mathbf{x}):=\mathbb{E}[\widehat\phi^+_{t,z}\mid \mathbf{X}=\mathbf{x}]$.
Assume the regression learner used to form $\widehat\mu^+(t,z,\cdot)$ satisfies
\begin{equation}
    \|\widehat\mu^+(t,z,\cdot)-m^+_{t,z}(\cdot)\|_2 = O_p(\delta_n),
\end{equation}
for some (possibly model-dependent) rate $\delta_n$.
\end{assumption}

\begin{remark}[Second-stage regression assumption]
Assumption~\ref{assump:second_stage} treats the final-stage regression step as a black box: it assumes that, when regressing
the cross-fitted pseudo-outcomes on $\mathbf{X}$, the learner attains an $L_2$ error rate $\delta_n$ uniformly over the admissible nuisance estimates $\widehat\eta\in\Xi$. A broad class of learners satisfy this, including nonparametric least-squares/ERM estimators over a bounded function class $\mathcal{F}$ with bracketing entropy $\log N_{[]}(\mathcal{F},\epsilon)\lesssim \epsilon^{-r}$ ($0<r<2$), which yields the
usual regression rate $\delta_n \asymp n^{-1/(2+r)}$ (up to approximation error); in particular, for $d$-dimensional H\"older$(\beta)$ classes, $\delta_n = n^{-\beta/(2\beta+d)}$. More generally, black-box regressors satisfying standard stability/oracle-inequality properties (e.g., linear smoothers) also fit this template \citep{Kennedy.2023b}. We therefore state our results in terms of $\delta_n$, which separates orthogonalization from the choice of final-stage regression method.
\end{remark}

\begin{restatable}[Quasi-oracle rates and inference (discrete $Z$)]{corollary}{quasiOracleRates}
\label{cor:quasi_oracle_and_inference}
Suppose Assumptions~\ref{assump:overlap_bounded} and \ref{assump:second_stage} hold, and let
$r_{n,\pi}, r_{n,\gamma}, r_{n,Q}$ be as in Theorem~\ref{thm:second_order_remainder}.

\underline{CAPO rates:} The CAPO upper-bound estimator satisfies
\begin{equation*}
\label{eq:capo_quasi_oracle_rate}
\|\widehat\mu^+(t,z,\cdot)-\mu^+(t,z,\cdot)\|_2
=O_p\!\left(\delta_n + r_{n,\pi}\,r_{n,\gamma}+r_{n,Q}^2\right).
\end{equation*}
In particular, if $r_{n,\pi}\,r_{n,\gamma}+r_{n,Q}^2=o_p(\delta_n)$, then
$\|\widehat\mu^+(t,z,\cdot)-\mu^+(t,z,\cdot)\|_2=O_p(\delta_n)$.

\underline{APO rates:} The APO upper-bound estimator $\widehat\psi^+(t,z)=\mathbb{E}_n[\widehat\phi^+_{t,z}]$ satisfies

\vspace{-0.5cm}
\footnotesize
\begin{equation}
\label{eq:apo_asymp_linear}
\left|\widehat\psi^+(t,z)\!-\!\psi^+(t,z)\right|
\!=\!
O_p\left(n^{-1/2}+ r_{n,\pi}\,r_{n,\gamma}+r_{n,Q}^2\right).
\end{equation}
\normalsize
If moreover $r_{n,\pi}\,r_{n,\gamma}+r_{n,Q}^2=o_p(n^{-1/2})$, then
\begin{equation}
\label{eq:apo_clt}
\sqrt{n}\left(\widehat\psi^+(t,z)-\psi^+(t,z)\right)\ \rightsquigarrow\
\mathcal{N}\!\left(0,\,V^+(t,z)\right),
\end{equation}
i.e., the APO bound estimator is asymptotically normal with variance $V^+(t,z):=\mathrm{Var}\!(\phi^+_{t,z}(S;\eta))$ (efficiency bound).
\end{restatable}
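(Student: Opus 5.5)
The plan is to derive both parts of Corollary~\ref{cor:quasi_oracle_and_inference} from Theorem~\ref{thm:second_order_remainder} by a triangle-inequality decomposition around the conditional mean of the cross-fitted pseudo-outcome. Write $m^+_{t,z}(\mathbf{x})=\mathbb{E}[\phi^+_{t,z}(S;\widehat\eta)\mid \mathbf{X}=\mathbf{x}]$. Here the expectation is taken conditional on the training folds, so $\widehat\eta$ is treated as fixed; cross-fitting is what makes this legitimate. By Remark~\ref{rem:unbiasedness_phi} (discrete $Z$), $\mathbb{E}[\phi^+_{t,z}(S;\eta)\mid \mathbf{X}]=\mu^+(t,z,\mathbf{X})$.

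For the CAPO rate, I would split
\[
\|\widehat\mu^+-\mu^+\|_2\le \|\widehat\mu^+-m^+_{t,z}\|_2+\|m^+_{t,z}-\mu^+\|_2 .
\]
The first term is $O_p(\delta_n)$ by Assumption~\ref{assump:second_stage}. The second term equals $\|\mathbb{E}[\phi^+_{t,z}(S;\widehat\eta)-\phi^+_{t,z}(S;\eta)\mid\mathbf{X}]\|_2$, which is $O_p(r_{n,\pi}r_{n,\gamma}+r_{n,Q}^2)$ by Theorem~\ref{thm:second_order_remainder}. The stated rate follows, and so does the special case in which the remainder is $o_p(\delta_n)$.

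For the APO, I would use the standard decomposition
\[
\widehat\psi^+-\psi^+=(\mathbb{E}_n-\mathbb{E})[\phi^+_{t,z}(S;\eta)]+(\mathbb{E}_n-\mathbb{E})[\phi^+_{t,z}(S;\widehat\eta)-\phi^+_{t,z}(S;\eta)]+\mathbb{E}[\phi^+_{t,z}(S;\widehat\eta)-\phi^+_{t,z}(S;\eta)],
\]
where the decomposition is done fold by fold. The three terms are handled as follows.
\begin{itemize}
\item The first term is a centered i.i.d.\ average with bounded variance, since Assumption~\ref{assump:overlap_bounded} bounds $\phi^+$. It is therefore $O_p(n^{-1/2})$, and by the CLT it converges to $\mathcal{N}(0,V^+)/\sqrt n$.
\item The third term is bounded by Jensen/Cauchy–Schwarz, $|\mathbb{E}[\cdot]|\le \|\mathbb{E}[\cdot\mid\mathbf{X}]\|_2$, and hence by Theorem~\ref{thm:second_order_remainder}.
\item The middle empirical-process term is controlled by cross-fitting. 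Conditional on the training fold it has mean zero and variance $n^{-1}\mathbb{E}[(\phi^+(\widehat\eta)-\phi^+(\eta))^2]$, so Chebyshev gives $o_p(n^{-1/2})$ once $\|\phi^+(\widehat\eta)-\phi^+(\eta)\|_{L_2(P)}=o_p(1)$.
\end{itemize}
Summing over the $K$ folds gives the stated rate. When the remainder is $o_p(n^{-1/2})$, Slutsky's lemma yields asymptotic normality with variance $\mathrm{Var}(\phi^+_{t,z}(S;\eta))$. This variance is the efficiency bound because $\phi^+-\psi^+$ is the efficient influence function of the APO bound functional, which I take as implicit in the construction of Theorem~\ref{thm:orthogonal_bounds}.

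The main obstacle is the $L_2(P)$ consistency of $\phi^+(\widehat\eta)$ needed for the middle term. I would bound it term by term using overlap and boundedness:
\begin{itemize}
\item The inverse-weight differences are controlled by $r_{n,\pi}$ times constants.
\item The $\gamma$ differences are controlled by $r_{n,\gamma}$ together with the Lipschitz continuity of $q\mapsto\gamma_u(q;\cdot)$.
\item The maps $(Y-q)_+$ and $(q-Y)_+$ are $1$-Lipschitz in $q$, so the corresponding differences are at most $|\widehat Q^+-Q^+|$, which is controlled by $r_{n,Q}$.
\end{itemize}
Hence the error is $O_p(r_{n,\pi}+r_{n,\gamma}+r_{n,Q})=o_p(1)$ under the stated product-rate condition. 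This consistency requirement is mild, and I would state it as implicit in that condition.
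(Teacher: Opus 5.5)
Your argument follows the paper's proof step for step. The CAPO part uses the same triangle inequality around $m^+_{t,z}$ together with Theorem~\ref{thm:second_order_remainder}. The APO part uses the same three-term split into the oracle average, the nuisance bias, and the cross-fitted empirical-process term. You are right, and more careful than the paper, that $(\mathbb{E}_n-\mathbb{E})[\phi^+_{t,z}(S;\widehat\eta)-\phi^+_{t,z}(S;\eta)]=o_p(n^{-1/2})$ requires $\|\phi^+_{t,z}(S;\widehat\eta)-\phi^+_{t,z}(S;\eta)\|_{L_2(P)}=o_p(1)$. Boundedness plus Hoeffding/Bernstein, which is all the paper's proof invokes, only gives $O_p(n^{-1/2})$. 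That weaker bound already suffices for the rate in Eq.~\eqref{eq:apo_asymp_linear}, so only the CLT is at stake.

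The step that fails is your closing claim that this consistency is implicit in the product-rate condition. The condition $r_{n,\pi}r_{n,\gamma}+r_{n,Q}^2=o_p(n^{-1/2})$ forces $r_{n,Q}=o_p(n^{-1/4})$, but it says nothing about $r_{n,\pi}$ and $r_{n,\gamma}$ separately.

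For a counterexample, take known propensities, as in a randomized design, so that $r_{n,\pi}=0$. Let $r_{n,Q}=o_p(n^{-1/4})$, and let $\widehat\gamma^+_u,\widehat\gamma^+_l$ converge in $L_2$ to wrong bounded limits $\overline\gamma_u,\overline\gamma_l$. The product condition then holds. Write $A=\mathbf{1}_{[T=t]}\mathbf{1}_{[Z=z]}$, $\pi=\pi^t\pi^g(z\mid\cdot)$, $\overline\eta=(\pi^t,\pi^g,Q^+,\overline\gamma_u,\overline\gamma_l)$, and $D=(\overline\gamma_u-\gamma^+_u)/b^- -(\overline\gamma_l-\gamma^+_l)/b^+$. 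Then $\phi^+_{t,z}(S;\overline\eta)-\phi^+_{t,z}(S;\eta)=(1-A/\pi)D$. This difference has mean zero and is uncorrelated with $\phi^+_{t,z}(S;\eta)$, because $\mathbb{E}[1-A/\pi\mid\mathbf{X}]=0$ and the bracketed residual has zero mean given $A=1,\mathbf{X}$. Consequently $\sqrt n(\widehat\psi^+-\psi^+)$ is asymptotically normal with variance $V^+ +\mathbb{E}[(\pi^{-1}-1)D^2]>V^+$.

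So the CLT with variance $V^+$ cannot be derived from the stated hypotheses. You need to add $r_{n,\pi}=o_p(1)$ and $r_{n,\gamma}=o_p(1)$ explicitly, which is the usual double/debiased ML requirement; $r_{n,Q}=o_p(1)$ is already implied. With that addition, your term-by-term $L_2(P)$ bound and the conditional Chebyshev argument close the proof. The same hypothesis is silently missing from the paper's statement and proof.
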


Corollary~\ref{cor:quasi_oracle_and_inference} establishes a \emph{quasi-oracle} property: if the nuisance estimators converge at rate $o_p(\delta_n^{1/2})$ for CAPO bounds (or $o_p(n^{-1/4})$ for APO), the estimator achieves the oracle rate $O_p(\delta_n)$, as if the nuisances were known. This follows, since nuisance errors enter only through the second-order remainder $r_{n,\pi}r_{n,\gamma}+r_{n,Q}^2$. For APOs, this additionally enables valid and tight inference.

$\bullet$\,\textbf{\circledgreen{2}~Sharpness of the \textit{estimated} bounds.}
Next, we state conditions under which our estimates converge to the \emph{sharp} identified bounds from Theorem~\ref{thm:plug_in_bounds}, so that the \textit{estimated} bounds are also sharp.

\begin{restatable}[Consistency for sharp bounds (discrete $Z$)]{proposition}{sharpness}
\label{prop:sharpness}
Assume the conditions of Corollary~\ref{cor:quasi_oracle_and_inference} hold. Suppose $\delta_n=o_p(1)$ and
$r_{n,Q}=o_p(1)$, and, in addition, either $r_{n,\pi}=o_p(1)$ or $r_{n,\gamma}=o_p(1)$.
Then,
$\|\widehat\mu^\pm(t,z,\cdot)-\mu^\pm(t,z,\cdot)\|_2 =o_p(1)$ and $|\widehat\psi^\pm(t,z)-\psi^\pm(t,z)|=o_p(1) $,
Consequently, the estimated CAPO and APO intervals converge to the \emph{sharp} identified intervals.
\end{restatable}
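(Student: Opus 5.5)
We derive this directly from Corollary~\ref{cor:quasi_oracle_and_inference}, using the rate bounds stated there for the upper endpoint and their mirror images for the lower endpoint. The plan is to show that the second-order remainder $r_{n,\pi}\,r_{n,\gamma}+r_{n,Q}^2$ vanishes in probability under the stated hypotheses. The oracle terms $\delta_n$ (CAPO) and $n^{-1/2}$ (APO) vanish by assumption or trivially.

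First, we treat the remainder. By hypothesis $r_{n,Q}=o_p(1)$, so $r_{n,Q}^2=o_p(1)$. For the product term, Assumption~\ref{assump:overlap_bounded} gives uniform boundedness of all nuisances and of their targets.
\begin{itemize}
\item The propensities lie in $[\varepsilon,1]$, so $r_{n,\pi}\le 4$ trivially.
\item $|\widehat\gamma^\pm|\le M$, and $\gamma_u(\widehat Q^+;\cdot)$, $\gamma_l(\widehat Q^+;\cdot)$ are bounded by $2M$ because $|Y|,|\widehat Q^+|\le M$. Hence $r_{n,\gamma}\le 6M$ deterministically.
\end{itemize}
Therefore, if $r_{n,\pi}=o_p(1)$, then $r_{n,\pi}r_{n,\gamma}\le 6M\,r_{n,\pi}=o_p(1)$. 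Symmetrically, if $r_{n,\gamma}=o_p(1)$, then $r_{n,\pi}r_{n,\gamma}\le 4\,r_{n,\gamma}=o_p(1)$. This double-robustness step is the only place the ``either/or'' enters, and it is where the boundedness assumption does the work. That boundedness is the point I would check most carefully, in particular that $\gamma_u(\widehat Q^+;\cdot)$ is bounded uniformly over the random $\widehat Q^+$, which follows from (iii).

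Next, we plug into the corollary. For the CAPO, $\|\widehat\mu^+-\mu^+\|_2=O_p(\delta_n+o_p(1))=o_p(1)$ since $\delta_n=o_p(1)$. For the APO, $|\widehat\psi^+-\psi^+|=O_p(n^{-1/2}+o_p(1))=o_p(1)$. The lower bound $\mu^-$ is handled identically: its orthogonal pseudo-outcome (Supplement) has the same structure with $b^+$ and $b^-$ swapped and $Q^-$, $\gamma^-_{u,l}$ in place of their $+$ counterparts. Theorem~\ref{thm:second_order_remainder} and Corollary~\ref{cor:quasi_oracle_and_inference} then apply verbatim with the corresponding rates. Alternatively, one can apply the upper-bound result to $-Y$, since $\mu^-(t,z,\mathbf{x};Y)=-\mu^+(t,z,\mathbf{x};-Y)$ by the symmetry of Definition~\ref{def:sharp_bounds}.

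Finally, we conclude sharpness. By Theorem~\ref{thm:plug_in_bounds}, $[\mu^-,\mu^+]$ and $[\psi^-,\psi^+]$ are the sharp identified intervals. Componentwise convergence of both endpoints in $L_2(P_{\mathbf X})$, respectively in probability, means the estimated intervals converge to them, for instance in Hausdorff distance, which is bounded by the maximum of the endpoint errors. A minor subtlety is that $O_p(a_n)$ with $a_n=o_p(1)$ random indeed gives $o_p(1)$. This is standard, because a product of a term bounded in probability and a term converging to zero in probability tends to zero in probability.
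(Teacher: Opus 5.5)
Your proposal is correct and follows the same route as the paper. You invoke Corollary~\ref{cor:quasi_oracle_and_inference}, note that $\delta_n$, $n^{-1/2}$ and $r_{n,Q}^2$ vanish, show that $r_{n,\pi}r_{n,\gamma}=o_p(1)$ under the either/or hypothesis, and repeat the argument for the lower endpoint. Your observation that $r_{n,\pi}$ and $r_{n,\gamma}$ are deterministically bounded under Assumption~\ref{assump:overlap_bounded} fills in a step the paper states without justification: for one factor being $o_p(1)$ to force the product to be $o_p(1)$, the other factor must be $O_p(1)$.
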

Proposition~\ref{prop:sharpness} shows that, if $\widehat Q^\pm$ is consistent and either the propensity or conditional-moment nuisance is consistent, then Algorithm~\ref{alg:ortho_bound_algorithm} consistently estimates the sharp CAPO/APO bounds in Theorem~\ref{thm:plug_in_bounds}. 

$\bullet$\,\textbf{\circledgreen{3}~Validity under $\widehat Q^\pm$ misspecification.}
Sharpness guarantees require that $Q^\pm$ is estimated consistently. We show that, even when $Q^\pm$ is misspecified, our estimator yields conservative but valid bounds, provided one of the two (first-stage) nuisance ``blocks'' is consistently learned.

\begin{restatable}[Asymptotic validity under misspecified cutoffs (discrete $Z$)]{corollary}{validity}
\label{cor:validity}
Assume the conditions of Corollary~\ref{cor:quasi_oracle_and_inference} hold. Let $\overline Q^\pm(t,z,\mathbf{x})$ be any measurable cut-off and define
the induced (possibly non-sharp) bounds

\vspace{-0.5cm}
\footnotesize
\begin{align}
\label{eq:mu_barQ}
    \overline \mu^{\pm}(t,z, \mathbf{x}; \overline Q^{\pm}) &= \overline Q^{\pm}(t,z,\mathbf{x}) \\
    &+ \frac{1}{b^{\mp}(z,\mathbf{x})}\mathbb{E}\bigl[(Y-\overline Q^{\pm}(t,z,\mathbf{x}))_{+}\mid t,z,\mathbf{x}\bigr] \nonumber\\ 
    &- \frac{1}{b^{\pm}(z,\mathbf{x})}\mathbb{E}\bigl[ (\overline Q^{\pm}(t,z,\mathbf{x})-Y)_{+} \mid t,z,\mathbf{x} \bigr].
    \nonumber
\end{align}
\normalsize
(and analogously $\overline\psi^\pm(t,z):=\mathbb{E}[\overline\mu^\pm(t,z,\mathbf{X})]$).
Then, $[\overline\mu^-(t,z,\mathbf{x}),\overline\mu^+(t,z,\mathbf{x})]$ is a \underline{valid} (not necessarily sharp) CAPO interval, and likewise for $[\overline\psi^-(t,z),\overline\psi^+(t,z)]$.

Moreover, if $\widehat Q^\pm \to \overline Q^\pm$ in $L_2$ and either (i)~$(\widehat\pi^t,\widehat\pi^g)$ is consistent, or
(ii)~$(\widehat\gamma_u^\pm,\widehat\gamma_l^\pm)$ is consistent for the tail-moment targets induced by $\overline Q^\pm$,
then the resulting estimated (C)APO intervals converge to $[\overline\mu^-,\overline\mu^+]$ and $[\overline\psi^-,\overline\psi^+]$ and are asymptotically valid, though potentially conservative. If $\overline Q^\pm=Q^\pm$, the bounds coincide with the sharp bounds.
\end{restatable}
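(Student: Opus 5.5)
Two claims must be shown: the population validity of the interval $[\overline\mu^-,\overline\mu^+]$, and the asymptotic convergence of the estimator to it. For validity, fix $(t,z,\mathbf{x})$ and write the sensitivity set of Definition~\ref{def:sharp_bounds} as reweightings of the observed conditional law $P(y\mid t,z,\mathbf{x})$. Any $\Tilde{P}\in\mathcal{M}$ gives $\mathbb{E}_{\Tilde P}[Y(t,z)\mid\mathbf{x}]=\mathbb{E}[W Y\mid t,z,\mathbf{x}]$ for a weight $W$ with $1/b^{+}\le W\le 1/b^{-}$ and $\mathbb{E}[W\mid t,z,\mathbf{x}]=1$. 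This is the same representation used in the proof of Theorem~\ref{thm:plug_in_bounds}.

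For any constant $c$, normalization gives $\mathbb{E}[WY]=c+\mathbb{E}[W(Y-c)]$. Splitting $Y-c=(Y-c)_+-(c-Y)_+$ and applying the pointwise weight bounds yields
\[
\mathbb{E}[WY]\le c+\tfrac{1}{b^{-}}\mathbb{E}[(Y-c)_+]-\tfrac{1}{b^{+}}\mathbb{E}[(c-Y)_+]=\overline\mu^+(c).
\]
The symmetric argument gives the lower inequality. Taking $c=\overline Q^\pm(t,z,\mathbf{x})$ then gives $\overline\mu^-\le\mu^-\le\mu^+\le\overline\mu^+$ pointwise. Theorem~\ref{thm:plug_in_bounds} shows the sharp bounds are attained at $c=Q^\pm$; indeed $c\mapsto\overline\mu^+(c)$ is convex with minimizer at the $\alpha^+$-quantile. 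This also yields the final statement, that $\overline Q^\pm=Q^\pm$ recovers the sharp bounds. Integrating over $\mathbf{X}$ transfers validity to $\overline\psi^\pm$.

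For the asymptotic part, I would rerun the argument behind Theorem~\ref{thm:second_order_remainder}, centering at $\overline Q^+$ instead of $Q^+$. Let $\overline\eta=(\pi^t,\pi^g,\overline Q^+,\gamma_u(\overline Q^+),\gamma_l(\overline Q^+))$. Since $\mathbb{E}[\phi^+_{t,z}(S;\overline\eta)\mid\mathbf{X}]=\overline\mu^+(t,z,\mathbf{X})$, it suffices to bound $\|\mathbb{E}[\phi^+(S;\widehat\eta)\mid\mathbf X]-\overline\mu^+\|_2$. Conditioning on $\mathbf{X}$ and using the indicator $\mathbf 1_{[T=t,Z=z]}$, this difference splits into two terms:
\begin{itemize}
\item a doubly robust product term, $(\pi^t\pi^g/\widehat\pi^t\widehat\pi^g-1)$ times the errors $\gamma_{u,l}(\widehat Q^+)-\widehat\gamma_{u,l}^+$, which is bounded by $r_{n,\pi}r_{n,\gamma}$ via Cauchy--Schwarz and overlap;
\item a cutoff term, $\overline\mu^+(\widehat Q^+)-\overline\mu^+(\overline Q^+)$.
\end{itemize}
Note that $\gamma_u(\widehat Q^+)$ is used for the moment errors, so no $\widehat Q$-drift enters the product term.

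\emph{Main obstacle.} Away from the true quantile, the cutoff term is no longer second order. The derivative of $\overline\mu^+$ in $c$ is $1-\tfrac{1}{b^-}(1-F)-\tfrac{1}{b^+}F$, which need not vanish at $\overline Q^+$. The plan is therefore only to claim consistency, not rates. Because $c\mapsto\overline\mu^+(c)$ is Lipschitz with constant at most $1/b^-$ and $Y$ is bounded, the cutoff term is $O(\|\widehat Q^+-\overline Q^+\|_2)=o_p(1)$. The product term is $o_p(1)$ under either (i) or (ii), since the other factor is bounded by $M$ and $\varepsilon$.

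Combining these with Assumption~\ref{assump:second_stage} and $\delta_n=o_p(1)$ gives $\|\widehat\mu^+-\overline\mu^+\|_2=o_p(1)$. For $\widehat\psi^+$, cross-fitting and a law of large numbers give $|\widehat\psi^+-\overline\psi^+|=o_p(1)$. The lower bound is handled symmetrically. Asymptotic validity then follows from the population validity established above.
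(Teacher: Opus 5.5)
Your proof is correct. For the convergence half it follows essentially the paper's argument. You use the same conditional-expectation decomposition into a cutoff term $\mathcal{L}^+_{\mathbf{X}}(\widehat Q^+)-\mathcal{L}^+_{\mathbf{X}}(\overline Q^+)$ and a propensity-times-regression product term. The cutoff term is controlled only to first order through the Lipschitz property of the hinge; you rightly point out that the first-order condition fails away from $Q^+$, so only consistency is available. The product term is $o_p(1)$ because overlap bounds one factor and the other vanishes.

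The validity half takes a different route. The paper argues by optimization: it asserts $\mu^+=\inf_q\mathcal{L}^+_{\mathbf{x}}(q)$ with minimizer $Q^+$, combining Theorem~\ref{thm:plug_in_bounds} with the Rockafellar--Uryasev form, so any other cutoff can only increase the value. You argue by weak duality directly on the sensitivity set: for every admissible weight $W\in[1/b^+,1/b^-]$ with unit conditional mean and every constant $c$, $\mathbb{E}[WY]\le\mathcal{L}^+(c)$.

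Your version needs only the inequality $\sup_{\Tilde P\in\mathcal{M}}\mathbb{E}_{\Tilde P}[Y(t,z)\mid\mathbf{x}]\le\inf_c\mathcal{L}^+_{\mathbf{x}}(c)$. It therefore uses neither sharpness nor the minimizing property of $Q^+$, and it does not depend on the extra condition under which Theorem~\ref{thm:gmsm_bounds} guarantees sharpness for discrete $Z$. The cost is that you rely explicitly on the GMSM weight representation, which the paper uses only implicitly through Theorem~\ref{thm:gmsm_bounds}. The paper's version instead needs convexity of $\mathcal{L}^+$ with its minimizer at the $\alpha^+$-quantile, and you verify that as well.

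One small step is missing. Under condition (ii), the hypothesis is $\widehat\gamma^\pm_{u,l}\to\gamma_{u,l}(\overline Q^\pm)$, but your product term involves $\gamma_{u,l}(\widehat Q^\pm)-\widehat\gamma^\pm_{u,l}$. You need the bridge $\|\gamma_u(\widehat Q)-\gamma_u(\overline Q)\|_2\le\|\widehat Q-\overline Q\|_2$, and likewise for $\gamma_l$, which the paper states explicitly. So your remark that no $\widehat Q$-drift enters the product term is not quite right under (ii), although the fix is one line.
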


\begin{remark}[Continuous $Z$]
\label{rem:continuous_z_rates}
When $Z$ is continuous, evaluation at a point $z$ requires kernel localization, leading to the usual
bias-variance tradeoff in the bandwidth. We defer the corresponding rates and inference results to
Supplement~\ref{sec:appendix_continuousZ}.
\end{remark}

\vspace{-0.3cm}
\section{Experiments}
\label{sec:experiments}

\textbf{Data:}\footnote{Data and implementation details are in Supplement~\ref{sec:appendix_implementation}.}
We follow common practice in causal partial identification and evaluate our framework on synthetic datasets. We generate simple networks with $N=1000$ nodes and a $1$-dimensional covariate (small dataset) and more complex networks with $N=6000$ nodes and $6$-dimensional covariates (large dataset).

\textbf{Evaluation:}
Our goal is to \emph{demonstrate the theoretical properties} of our framework: (1)~We evaluate our bounds in terms of \textit{validity}, i.e., we show that our bounds contain the true outcome whenever the constraints given by $b^{\pm}$ are satisfied. (2)~We compare the \emph{convergence} of our orthogonal bound estimator against the plug-in estimator. (3)~We assess the \emph{informativeness} of our bounds in terms of the widths of the resulting intervals. We report all results over 10 runs.\footnote{Our goal is to show the validity and advantages of our model-agnostic partial identification framework. We thus refrain from comparing different instantiations of the nuisances or second-stage models.}

\begin{figure}[h]
    \vspace{-0.2cm}
    \centering
    \includegraphics[width=1\linewidth]{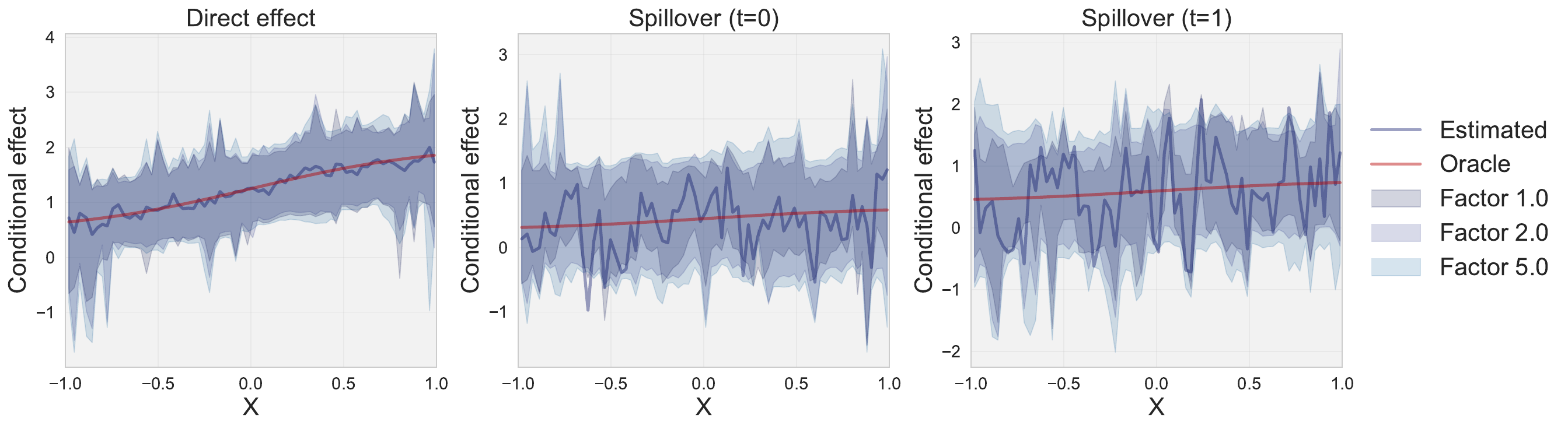}
    \vspace{-0.6cm}
    \caption{\textbf{Conditional effect bounds:} Visualization of our bounds around the true effect for the weighted mean exposure mapping. The width of the bounds is increasing in the sensitivity factor. Starting from factor 1.0, our bounds contain the true effect.}
    \label{fig:mean_bounds_over_x}
    \vspace{-0.2cm}
\end{figure}

\textbf{Results:}
\textit{Research question (1): Are our bounds valid?}
$\Rightarrow$ We assess validity by visualizing our bounds for exposure mapping \circled{1} on both the small (Fig.~\ref{fig:mean_bounds_over_x}) and the large dataset (Fig.~\ref{fig:distribution_mean}). We compare our bounds over various specifications of $b^{\pm}$ (= ``factor'' $\times$ true misspecification). We observe that, for a too small sensitivity bound assumption (factor 0.5), the bounds do not completely contain the true effect. For a sufficient bound assumption (factor $\geq$ 1), our bounds are valid.
\begin{figure}[t]
    \centering
    \includegraphics[width=1\linewidth]{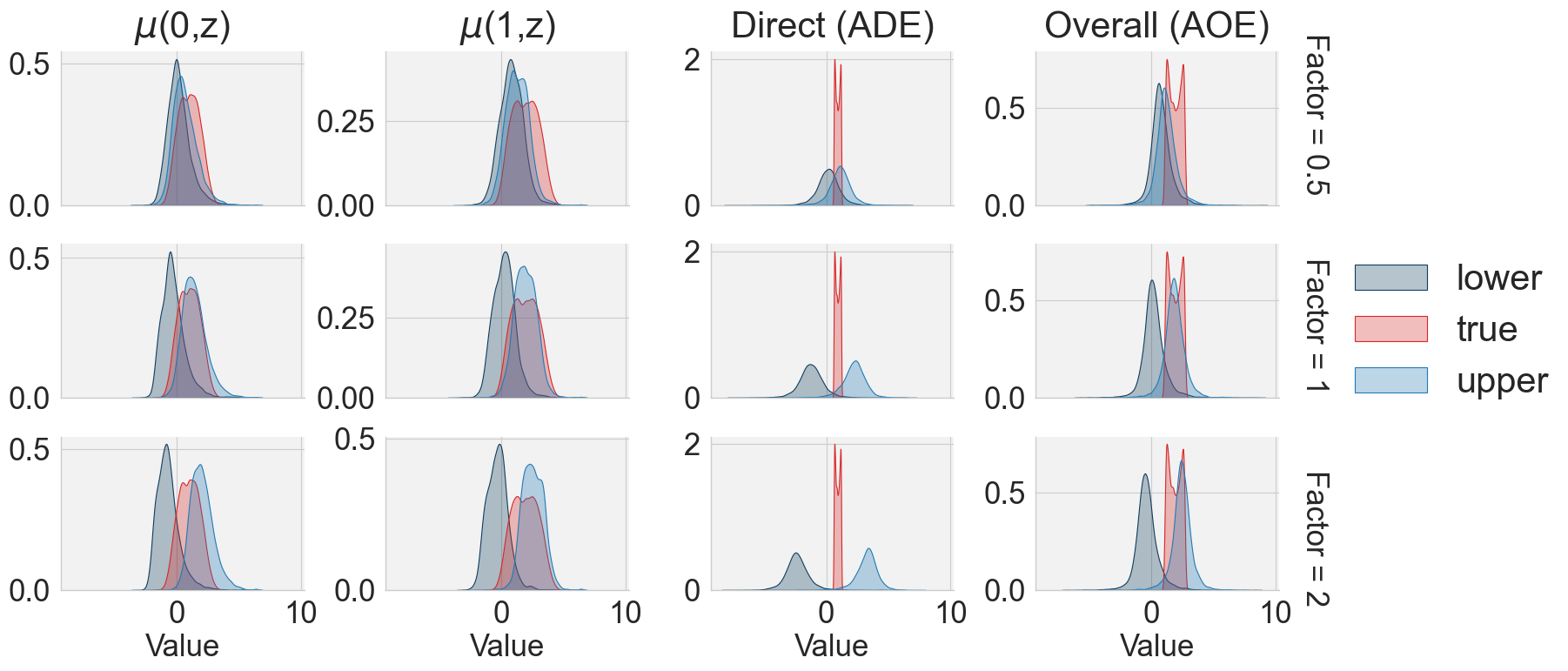}
    \vspace{-0.7cm}
    \caption{\textbf{Distribution of bounds:} Bounds and true potential outcomes and effects for the weighted mean exposure mapping on the large dataset. For sufficiently large sensitivity factor ($\geq$ 1), the distributions of upper and lower bounds enclose the true PO/effect, thus confirming that the bounds are valid.}
    \label{fig:distribution_mean}
    \vspace{-0.6cm}
\end{figure}

\vspace{-0.2cm}
\textit{Research question (2): How does the convergence of our orthogonal estimator compare to a simple plug-in estimator?} $\Rightarrow$ We compare the convergence and the behavior of the coverage of our orthogonal estimator for increasing network size $N$ for setting \circled{2} in Fig.~\ref{fig:ortho_vs_plugin_ade}. As expected: (a) our bounds are \textit{valid even for small sample sizes and are quickly approaching the sharp oracle bounds}, whereas the plug-in bounds fail to provide correct coverage; (b) due to orthogonality, our framework benefits from \emph{faster convergence}.
\vspace{-0.4cm}
\begin{figure}[h]
    \centering
    \begin{subfigure}[b]{0.63\linewidth}
        \hspace{-0.2cm}
        \includegraphics[width=1.1\linewidth]{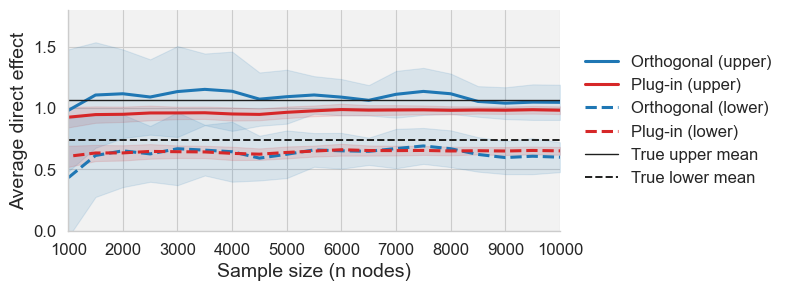}
        \vspace{-0.4cm}
        \caption{}
        \label{fig:ortho_vs_plugin_ade_left}
    \end{subfigure}
    \hfill
    \begin{subfigure}[b]{0.35\linewidth}
        \hspace{-0.2cm}
        \includegraphics[width=1.1\linewidth]{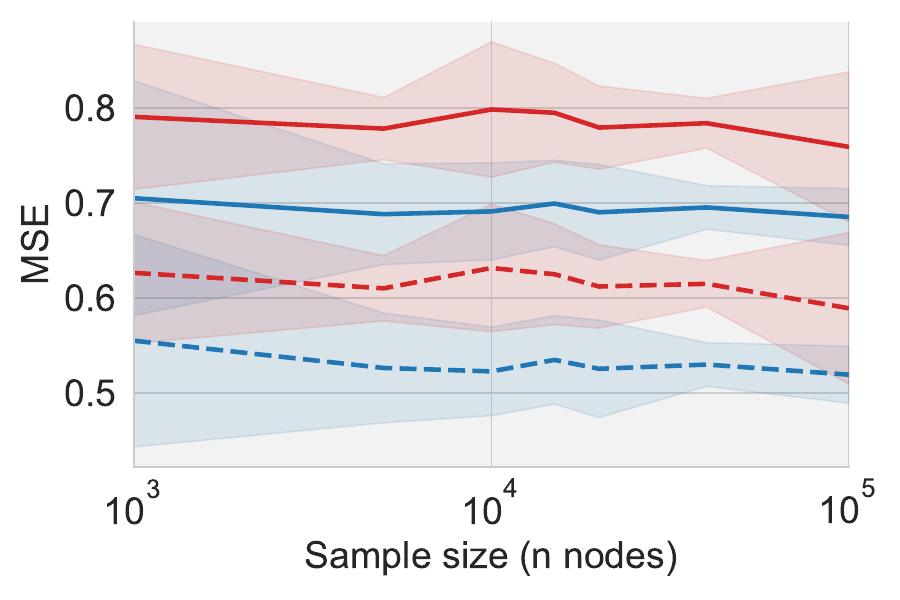}
        \vspace{-0.5cm}
        \caption{}
        \label{fig:ortho_vs_plugin_ade_right}
    \end{subfigure}
    \vspace{-0.2cm}
    \caption{\textbf{Coverage \& convergence:} \emph{(a)} Bounds on the ADE under a threshold exposure mapping increasing number of nodes ($6$-dim covariates). The orthogonal bounds are valid, approaching the sharp oracle bounds, whereas the plug-in bounds are not valid. \emph{(b)} Our orthogonal bounds show faster convergence than the plug-in bounds.}
    \vspace{-0.4cm}
    \label{fig:ortho_vs_plugin_ade}
\end{figure}

\vspace{-0.2cm}
\textit{Research question (3): How informative are our bounds?} $\Rightarrow$ We assess the width of our intervals under exposure mapping \circled{3}. For decision-making, informative bounds (i)~are narrow compared to the outcome range and (ii)~are either strictly positive or negative. Our bounds fulfill both desiderata: (i)~The average width of the ADE intervals over all $z$ with correctly specified sensitivity factors corresponds to merely $8.71\%  \, (\pm 0.37\%)$ of the overall outcome range. (ii)~All of our intervals are strictly bounded away from zero, correctly recognizing the positive treatment effect.


\textbf{Conclusion:}
We proposed a flexible and model-agnostic framework for partial identification of potential outcomes and treatment effects on networks in the presence of exposure mapping misspecification. We derived a robust estimation framework with quasi-oracle rate properties and showed that the estimated bounds remain valid and sharp. Finally, we instantiated our framework with three commonly employed exposure mappings and highlighted the interpretability of our bounds in extensive experiments.

\section*{Acknowledgments}

Miruna Oprescu was supported by the U.S. Department of Energy, Office of Science, Office of Advanced Scientific Computing Research, under Award DE-SC0023112.

\section*{Impact Statement}

This paper presents work whose goal is to advance the field of Machine Learning. There are many potential societal consequences of our work, none which we feel must be specifically highlighted here.


\bibliography{bibliography}
\bibliographystyle{icml2026}

%
%
\newpage
\appendix
\onecolumn

\section{Notation}
\label{sec:appendix_notation}

\bgroup
\def\arraystretch{1.5}
\begin{tabular}{p{1.25in}p{3.25in}}
$\displaystyle \mathcal{G}$ & Network consisting of $N$ nodes\\
$\displaystyle \mathcal{N}, \mathcal{E}$ & Sets of nodes and edges in $\mathcal{G}$\\
$\displaystyle \mathcal{N}_i , \mathcal{N}_{-i}$ & Network partition respective to individual $i$, where $\mathcal{N}_i$ defines the neighborhood of node $i$, i.e., the set of nodes $j$ connected to $i$ by an edge, and $\mathcal{N}_{-i}$ the complement of $\mathcal{N}_{i}$ in $\mathcal{N}$\\
$\displaystyle n_i$ & Degree of node $i$, i.e., number of neighbors of $i$\\
$\displaystyle T_i, T_{\mathcal{N}_{i}}$ & Binary unit and neighborhood treatments\\
$\displaystyle \mathbf{X}$ & Confounders in domain $\mathcal{X}$\\
$\displaystyle Y$ & Outcome in domain $\mathcal{Y}$\\
$\displaystyle g$ & Exposure mapping, $g:[0,1]^n \mapsto \mathcal{Z}$\\
$\displaystyle Z$ & Scalar summarizing the neighborhood exposure, i.e., $Z=g(T_{\mathcal{N}})$\\
$\displaystyle Y(t,z)$ & Potential outcome under unit treatment $t$ and neighborhood exposure $z$\\
$\displaystyle \psi(t,z)$ & Average PO estimator\\
$\displaystyle \mu(t,z,\mathbf{x})$ & CAPO estimator\\
$\displaystyle Q^+(t,z,\mathbf{x}), Q^-(t,z,\mathbf{x})$ & Upper and lower quantiles on the conditional CDF wrt. $b^+(z,\mathbf{x}), b^-(z,\mathbf{x})$\\
$\displaystyle b^+(z,\mathbf{x}), b^-(z,\mathbf{x})$ & Upper and lower bound on the exposure mapping shift due to misspecification\\
$\displaystyle \gamma_{u}^{\pm}(t,z,\mathbf{x})$ & $\mathbb{E}\bigl[ (Y-Q^{\pm})_{+}\mid t,z,\mathbf{x}\bigr]$, where \((u)_+ = \max\{u,0\}\)\\
$\displaystyle \gamma_{l}^{\pm}(t,z,\mathbf{x})$ & $\mathbb{E}\bigl[ (Q^{\pm}-Y)_{+}\mid t,z,\mathbf{x}\bigr]$, where \((u)_+ = \max\{u,0\}\)\\
$\displaystyle \pi^t(\mathbf{x}), \pi^g(z \mid \mathbf{x})$ & Unit node and neighborhood propensity functions\\
$\eta $ & Nuissance functions\\
$\displaystyle \phi^{+}_{t,z}(S,\eta), \phi^{-}_{t,z}(S,\eta)$ & Upper and lower orthogonal pseudo-outcome\\
$\displaystyle \mu^{\pm}_\mathrm{DR}(t,z,\mathbf{x})$ & Orthogonal upper and lower CAPO bound estimator\\
$\displaystyle \psi^{+}_\mathrm{DR}(t,z), \psi^{-}_{DR}(t,z)$ & Orthogonal upper and lower average PO bound estimator\\
\end{tabular}
\egroup

\newpage
\section{Extended related work}
\label{sec:appendix_related_work}

Below, we discuss related work on (i)~network interference (Appendix~\ref{sec:appendix_rw_interference}) and (ii)~partial identification methods (Appendix~\ref{sec:appendix_rw_partial_id}). In Appendix~\ref{sec:appendix_rw_interference}, we first provide an overview of the related but non-discussed fields on graph neural networks (GNNs) for interference modeling and causal methods for spatial interference. Then, we give a detailed overview of other works addressing misspecified exposure mappings and highlight how our work differentiates itself from these works. In Appendix~\ref{sec:appendix_rw_partial_id}, we give a brief overview of sensitivity methods for partial identification.

\subsection{Network interference} \label{sec:appendix_rw_interference}

\textbf{GNNs:}
Standard graph ML models fail to estimate causal effects on networks as they follow a different optimization goal \cite{Jiang.2022}. Furthermore, these methods are computationally inefficient, thereby rendering the application to large network data challenging or impossible \cite{Lin.2025}. Therefore, multiple methods for learning a neighborhood representation of the covariates through a GNN have been proposed \citep[e.g.,][]{Adhikari.2025, Jiang.2022, Lin.2025, Ma.2021}. However, these methods commonly assume a known exposure mapping of the neighborhood treatments.

\textbf{Spatial interference:}
In environmental science, treatment effect estimation often faces the challenge of spatial interference, i.e., treatments from different locations affecting the outcomes at other locations. Here, data are often assumed to stem from a spatial grid, meaning that distances between nodes (=spatial cells) and the number of neighboring nodes are fixed for the entire network. Spatial causal inference approaches commonly assume a correctly specified exposure mapping as in approaches targeting network interference \citep[e.g.,][]{Anselin.1988, Giffin.2023, Hanks.2015, Papadogeorgou.2023}. More recent approaches \citep[e.g.,][]{Ali.2024, Khot.2025, Oprescu.2025} assume \emph{localized interference} based on a specified neighborhood radius and employ deep learning methods to capture the latent interference structure. Overall, all methods rely on various types of \textit{correctly specified} exposure mappings.

\textbf{Misspecified exposure mappings:}
Only very few works consider causal effect estimation under a misspecified exposure mapping or network uncertainty.  Most works target estimation on unknown networks, i.e., when there is uncertainty about the existence of certain edges in the network. \citet{Egami.2021} provides bounds on average causal effects under network misspecification in RCTs. \citet{Savje.2021} further shows that, under unknown but limited interference in RCTs, average effects can be identified by certain standard estimators. In a follow-up work, \citet{Savje.2024} assesses the bias of treatment effect estimators when there is a mismatch between the exposure mappings at experiment and inference time. \citet{Ohnishi.2025} learn the latent structure of interference under a Bayesian prior to estimate causal effects under an arbitrary, unknown interference structure. However, the method is only applicable to randomized control trials (RCTs) and targets specific sub-effects different from the standard CATE and ATE.

In the more general setting of observational data, \citet{Weinstein.2023} derive bounds on the bias arising from estimating causal effects under a misspecified network. In follow-up work, \citet{Weinstein.2025} and \citet{Zhang.2025} propose frameworks for estimating causal effects when only proxy networks are available. Similarly, \citet{Zhang.2023} models uncertain interaction using linear graphical causal models, quantifies bias when iid (SUTVA) is incorrectly assumed, and presents a procedure to remove such bias and derive bounds for \textit{average} causal effects.

Other works more similar to our work focus on uncertainty in the neighborhood radius.  \citet{Leung.2022} considers approximate neighborhood interference, allowing treatments assigned to units further from the unit of interest to have potentially nonzero, but smaller, effects on the unit's outcome. In contrast to our work, the proposed method is restricted to the specific type of misspecification and only targets the average overall effect. 
\citet{Belloni.2022} consider estimation under an unknown neighborhood radius, similar to our third use-case. However, the proposed method needs strong modeling assumptions and only applies to the \textit{average} direct effect.

Orthogonal to our work, \citet{Hoshino.2024} propose an \emph{instrumental exposure mapping} to summarize the spillover effects into a low-dimensional variable in instrumental variable regression settings. They show that the resulting estimands for \textit{average} effects are interpretable even if the neighborhood radius is misspecified. 

Overall, there does \textbf{not} exist a general framework for bounding potential outcomes and treatment effects under various types of exposure mapping misspecification for both experimental and observational data. This is our contribution.

\subsection{Partial identification}
\label{sec:appendix_rw_partial_id}

\textbf{Sensitivity analysis as partial identification:}
A commonly employed tool for partial identification is causal sensitivity analysis (CSA). Instead of point-identifying an estimand under the strong assumptions of \emph{no unobserved confounding}, CSA allows unobserved confounding up to a specified confounding strength and derives bounds for causal quantities. A broad range of sensitivity models has been proposed, differing in what aspect of the data-generating process is perturbed and how deviations are parameterized \citep[e.g.,][]{Rosenbaum.1983, Robins.2000, Vansteelandt.2006}. 

\textbf{Marginal sensitivity model (MSM) and extensions:}
Much of the recent literature centers on the MSM \cite{Tan.2006b}, where bounds are obtained by optimizing over admissible propensity reweightings. Recent works show that na{\"i}ve procedures can be conservative and derive \emph{sharp} bound characterizations and estimators \cite{Dorn.2022,Dorn.2025}, which also enables efficient learning of \emph{CATE bounds} via meta-learning \cite{Oprescu.2023}. Beyond binary treatments and standard treatment effect queries, other works propose continuous-treatment marginal sensitivity models \cite{Jesson.2022}, generalized sensitivity models with sharp bounds for broader causal queries \cite{Frauen.2023b}, and neural frameworks that automate generalized sensitivity analysis across model classes and treatment types \cite{Frauen.2024}. Sensitivity-style partial identification has also been used in adjacent ML problems such as confounding-robust policy learning \cite{Hess.2026, Kallus.2018, Kallus.2019}, partial identification of counterfactual queries \cite{Melnychuk.2023b}, survival analysis~\cite{Wang.2025b}, sensitivity auditing of causal fairness \cite{Kilbertus.2019, Schroder.2024b}, and modern uncertainty quantification, e.g., conformal-style intervals for ITEs at a given sensitivity level \cite{Yin.2022b}.

\newpage
\section{Extended theory}
\label{sec:appendix_theory}

\subsection{Summary of bounds}

\begin{table}[h]
    \centering
    \begin{tabular}{c|l}
    \toprule
        \multirow{2}{*}{Potential outcomes} & $\mu^{+}(t,z, \mathbf{x}) = Q^{+}(t,z,\mathbf{x})+ \frac{1}{b^{-}(z,\mathbf{x})}\gamma_u^+(t,z,\mathbf{X}) - \frac{1}{b^{+}(z,\mathbf{x})}\gamma_u^-(t,z,\mathbf{X})$\\
        & $\mu^{-}(t,z, \mathbf{x}) = Q^{-}(t,z,\mathbf{x})+ \frac{1}{b^{+}(z,\mathbf{x})}\gamma_u^-(t,z,\mathbf{X})- \frac{1}{b^{-}(z,\mathbf{x})}\gamma_l^-(t,z,\mathbf{X})$\\
        \midrule
         & $\phi^+_{t,z}(S;\widehat\eta) = \widehat Q^+(t,z,\mathbf{X}) +\frac{\widehat\gamma_u^+(t,z,\mathbf{X})}{b^{-}(z,\mathbf{X})}
            -\frac{\widehat\gamma_l^+(t,z,\mathbf{X}) }{b^{+}(z,\mathbf{X})}$ \\
        Pseudo-outcomes & $\qquad \qquad \quad +\frac{\mathbf{1}_{[T=t]}\,\mathbf{1}_{[Z=z]}}
            {\widehat\pi^t(\mathbf{X})\,\widehat\pi^g(Z\mid\mathbf{X})}
            \Bigg[\frac{(Y-\widehat Q^+(t,Z,\mathbf{X}))_{+}-\widehat\gamma_u^+(t,Z,\mathbf{X})}{b^{-}(Z,\mathbf{X})}-\frac{(\widehat Q^+(t,Z,\mathbf{X})-Y)_{+}-\widehat\gamma_l^+(t,Z,\mathbf{X})}{b^{+}(Z,\mathbf{X})}\Bigg]$\\
        (discrete) &
            $\phi^-_{t,z}(S;\widehat\eta) = \widehat Q^-(t,z,\mathbf{X}) +\frac{\widehat\gamma_u^-(t,z,\mathbf{X})}{b^{+}(z,\mathbf{X})}
            -\frac{\widehat\gamma_l^-(t,z,\mathbf{X}) }{b^{-}(z,\mathbf{X})}$\\
            & $\qquad \qquad \quad +\frac{\mathbf{1}_{[T=t]}\,\mathbf{1}_{[Z=z]}}
            {\widehat\pi^t(\mathbf{X})\,\widehat\pi^g(Z\mid\mathbf{X})}
            \Bigg[\frac{(Y-\widehat Q^-(t,Z,\mathbf{X}))_{+}-\widehat\gamma_u^-(t,Z,\mathbf{X})}{b^{+}(Z,\mathbf{X})}-\frac{(\widehat Q^-(t,Z,\mathbf{X})-Y)_{+}-\widehat\gamma_l^-(t,Z,\mathbf{X})}{b^{-}(Z,\mathbf{X})}\Bigg]$\\
        \midrule
         & $\phi^+_{t,z}(S;\widehat\eta) = \widehat Q^+(t,z,\mathbf{X}) +\frac{\widehat\gamma_u^+(t,z,\mathbf{X})}{b^{-}(z,\mathbf{X})}
            -\frac{\widehat\gamma_l^+(t,z,\mathbf{X}) }{b^{+}(z,\mathbf{X})}$ \\
        Pseudo-outcomes & $\qquad \qquad \quad +\frac{\mathbf{1}_{[T=t]}\,K_h(Z-z)}
            {\widehat\pi^t(\mathbf{X})\,\widehat\pi^g(Z\mid\mathbf{X})}
            \Bigg[\frac{(Y-\widehat Q^+(t,Z,\mathbf{X}))_{+}-\widehat\gamma_u^+(t,Z,\mathbf{X})}{b^{-}(Z,\mathbf{X})}-\frac{(\widehat Q^+(t,Z,\mathbf{X})-Y)_{+}-\widehat\gamma_l^+(t,Z,\mathbf{X})}{b^{+}(Z,\mathbf{X})}\Bigg]$\\
        (continuous) &
            $\phi^-_{t,z}(S;\widehat\eta) = \widehat Q^-(t,z,\mathbf{X}) +\frac{\widehat\gamma_u^-(t,z,\mathbf{X})}{b^{+}(z,\mathbf{X})}
            -\frac{\widehat\gamma_l^-(t,z,\mathbf{X}) }{b^{-}(z,\mathbf{X})}$\\
            & $\qquad \qquad \quad +\frac{\mathbf{1}_{[T=t]}\,K_h(Z-z)}
            {\widehat\pi^t(\mathbf{X})\,\widehat\pi^g(Z\mid\mathbf{X})}
            \Bigg[\frac{(Y-\widehat Q^-(t,Z,\mathbf{X}))_{+}-\widehat\gamma_u^-(t,Z,\mathbf{X})}{b^{+}(Z,\mathbf{X})}-\frac{(\widehat Q^-(t,Z,\mathbf{X})-Y)_{+}-\widehat\gamma_l^-(t,Z,\mathbf{X})}{b^{-}(Z,\mathbf{X})}\Bigg]$\\
        \bottomrule
    \end{tabular}
    \caption{Summary of our bounds from the main paper.}
    \label{tab:bounds_overview}
\end{table}

\subsection{Orthogonal lower bound}

In Section~\ref{sec:method}, we provided an orthogonal estimation framework for the upper bound of the potential outcomes and treatment effects. For completeness, we now also provide the formulation for the lower bounds:

Let $S=(\mathbf{X},Y,T,Z)$. Fix $(t,z)$. Define the localization weight
\begin{equation*}
\omega_{z,h}(Z):=
\begin{cases}
\mathbf{1}_{[Z=z]}, & \text{if } Z\ \text{binary/discrete},\\
K_h(Z-z), & \text{if } Z\ \text{continuous},
\end{cases}
\end{equation*}
and let $\pi^g(Z\mid\mathbf{X})$ denote the conditional probability mass function (discrete $Z$) or density (continuous $Z$). Let $\widehat\eta=(\widehat\pi^t,\widehat\pi^g,\widehat Q^-,\widehat\gamma_u^-,\widehat\gamma_l^-)$ be a set of estimated nuisances. Then, an orthogonal pseudo-outcome for the CAPO lower bound $\mu^-(t,z,\mathbf{x})$ is:
\begin{equation}
\begin{aligned}
\phi^-_{t,z}(S;\widehat\eta) = &\widehat Q^-(t,z,\mathbf{X}) +\frac{\widehat\gamma_u^-(t,z,\mathbf{X})}{b^{+}(z,\mathbf{X})}
-\frac{\widehat\gamma_l^-(t,z,\mathbf{X}) }{b^{-}(z,\mathbf{X})}\\
& +\frac{\mathbf{1}_{[T=t]}\,\omega_{z,h}(Z)}
{\widehat\pi^t(\mathbf{X})\,\widehat\pi^g(Z\mid\mathbf{X})}
\Bigg[
\frac{(Y-\widehat Q^-(t,Z,\mathbf{X}))_{+}-\widehat\gamma_u^-(t,Z,\mathbf{X})}{b^{+}(Z,\mathbf{X})}
-\frac{(\widehat Q^-(t,Z,\mathbf{X})-Y)_{+}-\widehat\gamma_l^-(t,Z,\mathbf{X})}{b^{-}(Z,\mathbf{X})}
\Bigg],
\end{aligned}
\end{equation}
where $\gamma^{-}_u(t,z,\mathbf{x}):= \mathbb{E}\!\left[(Y-Q^{-}(\cdot))_{+}\mid t,z,\mathbf{x}\right]$ and 
$\gamma^{-}_l(t,z,\mathbf{x}):= \mathbb{E}\!\left[(Q^{-}(\cdot)-Y)_{+}\mid t,z,\mathbf{x}\right]$.

\subsection{Bounds on the treatment effects}

Recall the definition of the average and individual direct effects 
$$\tau_d^{(t,z),(t^{'},z)} := \psi(t,z) - \psi(t^{'},z) \text{ (ADE)} \quad \text{and} \quad \tau_{d_i}^{(t,z),(t^{'},z)}(\mathbf{x}_i) := \mu(t,z,\mathbf{x}_i) - \mu(t^{'},z, \mathbf{x}_i) \text{ (IDE)},$$
spillover/indirect effects
$$\tau_s^{(t,z),(t,z^{'})} := \psi(t,z) - \psi(t,z^{'}) \text{ (ASE)} \quad \text{and} \quad \tau_{s_i}^{(t,z),(t,z^{'})}(\mathbf{x}_i) :=  \mu(t,z,\mathbf{x}_i) - \mu(t,z^{'}, \mathbf{x}_i) \text{ (ISE)},$$
and overall effects
$$ \tau_o^{(t,z),(t^{'},z^{'})} := \psi(t,z) - \psi(t^{'},z^{'})
\text{ (AOE)} \quad \text{and} \quad \tau_{o_i}^{(t,z),(t^{'},z^{'})}(\mathbf{x}_i) := \mu(t,z,\mathbf{x}_i) - \mu(t^{'},z^{'}, \mathbf{x}_i)\text{ (IOE)}.$$
Based on the CAPO bounds $\mu^{\pm}(t,z,\mathbf{x})$ from Theorem~\ref{thm:plug_in_bounds}, we thus obtain the treatment effects through the general formula $\tau^{+ \; (a,b)} = f^{+}(a,\cdot) - f^{-}(b,\cdot)$ and $\tau^{- \; (a,b)} = f^{-}(a,\cdot) - f^{+}(b,\cdot)$, where with a slight abuse of notation $\tau$ refers to any of the (conditional) effects above, $f$ refers to either $\mu$ or $\psi$, and $(a,b)$ denotes the change in $t$ and/or $z$. Specifically, the conditional treatment effects IDE / ISE / IOE are identified as follows.

\textbf{Direct effect:}
\begin{align}
    \tau_{d_{i}}^{+ \;(t,z),(t^{'},z)}(\mathbf{x}) =  
         &Q^{+}(t,z,\mathbf{x})-Q^{-}(t^{'},z,\mathbf{x}) + \frac{1}{b^{-}(z,\mathbf{x})}(\gamma_u^{+}(t,z,\mathbf{x})-\gamma_l^{-}(t^{'},z,\mathbf{x}))\\ &- \frac{1}{b^{+}(z,\mathbf{x})}(\gamma_l^{+}(t,z,\mathbf{x})-\gamma_u^{-}(t^{'},z,\mathbf{x}))\\
        \tau_{d_{i}}^{- \; (t,z),(t^{'},z)}(\mathbf{x}) = &Q^{-}(t,z,\mathbf{x})-Q^{+}(t^{'},z,\mathbf{x})+ \frac{1}{b^{+}(z,\mathbf{x})}(\gamma_u^{-}(t,z,\mathbf{x})-\gamma_l^{+}(t^{'},z,\mathbf{x})) \\ &- \frac{1}{b^{-}(z,\mathbf{x})}(\gamma_l^{-}(t,z,\mathbf{x})-\gamma_u^{+}(t^{'},z,\mathbf{x}))
\end{align}

\textbf{Indirect/spillover effect:}
\begin{align}
    \tau_{d_{s}}^{+ \;(t,z),(t,z^{'})}(\mathbf{x}) =  
         &Q^{+}(t,z,\mathbf{x})-Q^{-}(t,z^{'},\mathbf{x})+ \frac{1}{b^{-}(z,\mathbf{x})}\gamma_u^{+}(t,z,\mathbf{x}) \\ &+ \frac{1}{b^{+}(z^{'},\mathbf{x})}\gamma_u^{-}(t,z^{'},\mathbf{x})- \frac{1}{b^{-}(z^{'},\mathbf{x})}\gamma_l^{-}(t,z^{'},\mathbf{x}) - \frac{1}{b^{+}(z,\mathbf{x})}\gamma_l^{+}(t,z,\mathbf{x})\\
        \tau_{d_{s}}^{- \; (t,z),(t,z^{'})}(\mathbf{x}) = &Q^{-}(t,z,\mathbf{x})-Q^{+}(t,z^{'},\mathbf{x})+ \frac{1}{b^{+}(z,\mathbf{x})}\gamma_u^{-}(t,z,\mathbf{x})\\ &+\frac{1}{b^{-}(z^{'},\mathbf{x})}\gamma_u^{+}(t,z^{'},\mathbf{x})- \frac{1}{b^{+}(z^{'},\mathbf{x})}\gamma_l^{+}(t,z^{'},\mathbf{x}) - \frac{1}{b^{-}(z,\mathbf{x})}\gamma_l^{-}(t,z,\mathbf{x})
\end{align}

\textbf{Overall effect:}
\begin{align}
    \tau_{d_{o}}^{+ \;(t,z),(t^{'},z^{'})}(\mathbf{x}) =  
         &Q^{+}(t,z,\mathbf{x})-Q^{-}(t^{'},z^{'},\mathbf{x})+ \frac{1}{b^{-}(z,\mathbf{x})}\gamma_u^{+}(t,z,\mathbf{x}) \\ &+ \frac{1}{b^{+}(z^{'},\mathbf{x})}\gamma_u^{-}(t^{'},z^{'},\mathbf{x})- \frac{1}{b^{-}(z^{'},\mathbf{x})}\gamma_l^{-}(t^{'},z^{'},\mathbf{x}) - \frac{1}{b^{+}(z,\mathbf{x})}\gamma_l^{+}(t,z,\mathbf{x})\\
        \tau_{d_{o}}^{- \; (t,z),(t^{'},z^{'})}(\mathbf{x}) = &Q^{-}(t,z,\mathbf{x})-Q^{+}(t^{'},z^{'},\mathbf{x})+ \frac{1}{b^{+}(z,\mathbf{x})}\gamma_u^{-}(t,z,\mathbf{x})\\ &+\frac{1}{b^{-}(z^{'},\mathbf{x})}\gamma_u^{+}(t^{'},z^{'},\mathbf{x})- \frac{1}{b^{+}(z^{'},\mathbf{x})}\gamma_l^{+}(t^{'},z^{'},\mathbf{x}) - \frac{1}{b^{-}(z,\mathbf{x})}\gamma_l^{-}(t,z,\mathbf{x})
\end{align}

The bounds on the average effects ADE, AIE, and AOE are then identified by the expectation of the individual effects over the covariates $\mathbf{X}$.

\subsection{Continuous neighborhood exposure}
\label{sec:appendix_continuousZ}

This section gives the continuous-$Z$ analogues of Theorem~\ref{thm:second_order_remainder} and Corollary~\ref{cor:quasi_oracle_and_inference}, as well as the corresponding \emph{sharpness} and \emph{validity} guarantees for the estimated bounds (complementary to the the discrete-$Z$ results in the main text).

When $Z$ is continuous, point evaluation at $Z=z$ is non-regular. Following the standard
approach in orthogonal learning for continuous exposures, we therefore target a
\emph{kernel-localized} (bandwidth-indexed) version of the bound functional. Under smoothness
in $z$, these localized targets converge to the original (pointwise) bounds as $h\downarrow 0$,
at the usual bias--variance tradeoff governed by $(n,h)$.

\begin{assumption}[Kernel localization]
\label{assump:kernel}
Let $Z$ be continuous and let $K_h(u)=\frac{1}{h}K(u/h)$, where $K$ is bounded, integrates to $1$,
and $\int K(u)^2\,du<\infty$. Let $h=h_n\downarrow 0$ with $nh_n\to\infty$.
\end{assumption}

\paragraph{Kernel-localized targets.}
Fix $(t,z)$ and let $h>0$. For continuous $Z$, define the localized selection weight
\begin{equation}
\label{eq:kappa_tzh}
\kappa_{t,z,h}(S)
:=
\frac{\mathbf{1}_{[T=t]}\,K_h(Z-z)}{\pi^t(\mathbf{X})\,\pi^g(Z\mid \mathbf{X})},
\end{equation}
as in the continuous-$Z$ modification of the proof of Theorem~\ref{thm:orthogonal_bounds}.
Define the kernel-localized pseudo-outcome $\phi^+_{t,z,h}(S;\widehat\eta)$ as
Eq.~\eqref{eq:pseudo_outcome_unified} with $\omega_{z,h}(Z)=K_h(Z-z)$.

When $\widehat\eta=\eta$, define the associated bandwidth-indexed functionals by
\begin{equation}
\label{eq:mu_psi_h_defs}
\mu_h^+(t,z,\mathbf{x})
:=
\mathbb{E}\!\left[\phi^+_{t,z,h}(S;\eta)\mid \mathbf{X}=\mathbf{x}\right],
\qquad
\psi_h^+(t,z)
:=
\mathbb{E}\!\left[\phi^+_{t,z,h}(S;\eta)\right].
\end{equation}
Under standard smoothness in $z$, $\mu_h^+(t,z,\mathbf{x})\to \mu^+(t,z,\mathbf{x})$ and
$\psi_h^+(t,z)\to \psi^+(t,z)$ as $h\downarrow 0$
(see Remark~\ref{rem:unbiasedness_phi}).

\medskip

\noindent
Relative to the discrete-$Z$ case, kernel localization inflates the second-order remainder by
a factor $h^{-1/2}$ (reflecting $\int K_h^2 = O(1/h)$). This propagates to the final-stage CAPO rate
and yields the usual $\sqrt{nh}$ scaling for the (smoothed) APO.

\begin{restatable}[Second-order nuisance error (continuous $Z$)]{theorem}{nuisanceReminderCont}
\label{thm:second_order_remainder_continuous}
Assume $Z$ is continuous and Assumptions~\ref{assump:overlap_bounded} and~\ref{assump:kernel} hold.
Let $\widehat\eta=(\widehat\pi^t,\widehat\pi^g,\widehat Q^+,\widehat\gamma_u^+,\widehat\gamma_l^+)$ be the
cross-fitted nuisances used in $\phi^+_{t,z,h}(S;\widehat\eta)$ (Eq.~\eqref{eq:pseudo_outcome_unified} with
$\omega_{z,h}(Z)=K_h(Z-z)$).

Define nuisance error rates (in $L_2$ norms over the appropriate arguments) by
\begin{align}
r_{n,\pi}:=\|\widehat\pi^t-\pi^t\|_2+\|\widehat\pi^g-\pi^g\|_2,
\qquad
r_{n,Q}:=\|\widehat Q^+-Q^+\|_2,
\end{align}
\begin{align}
r_{n,\gamma}:=\|\widehat\gamma_u^+-\gamma_u(\widehat Q^+;\cdot)\|_2
+\|\widehat\gamma_l^+-\gamma_l(\widehat Q^+;\cdot)\|_2,
\end{align}
where the norms are taken over the random variables that the corresponding nuisance is evaluated on
(e.g., $(Z,\mathbf{X})$ for $\pi^g(Z\mid\mathbf{X})$, $Q^+(t,Z,\mathbf{X})$, and $\gamma^\pm(t,Z,\mathbf{X})$).

Then, the conditional bias induced by nuisance estimation satisfies
\begin{align}
\left\|\mathbb{E}\!\left[\phi^+_{t,z,h}(S;\widehat\eta)-\phi^+_{t,z,h}(S;\eta)\mid \mathbf{X}\right]\right\|_2
=
O_p\!\left(\frac{r_{n,\pi}\,r_{n,\gamma}+r_{n,Q}^2}{\sqrt{h}}\right).
\end{align}
\end{restatable}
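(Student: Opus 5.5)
We follow the structure of the discrete-$Z$ proof (Theorem~\ref{thm:second_order_remainder}), replacing the indicator $\mathbf{1}_{[Z=z]}$ by $K_h(Z-z)$, and track where the kernel enters the norms.

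\textbf{Step 1: decomposition.} Abbreviate $\kappa=\mathbf{1}_{[T=t]}K_h(Z-z)$, let $\widehat w=\kappa/(\widehat\pi^t\widehat\pi^g)$, and let $w=\kappa/(\pi^t\pi^g)$ be the oracle weight from Eq.~\eqref{eq:kappa_tzh}. Write $R(S;\widehat Q)=\frac{(Y-\widehat Q)_+}{b^-}-\frac{(\widehat Q-Y)_+}{b^+}$ and $\widehat m=\widehat\gamma_u^+/b^--\widehat\gamma_l^+/b^+$, both evaluated at $(t,Z,\mathbf X)$. Conditioning on $(T,Z,\mathbf X)$ and using the tower property, we get
\begin{align*}
\mathbb{E}[\phi^+_{t,z,h}(S;\widehat\eta)-\phi^+_{t,z,h}(S;\eta)\mid\mathbf X]
&=\mathbb{E}\bigl[(\widehat w-w)(m(\widehat Q)-\widehat m)\mid\mathbf X\bigr]\\
&\quad+\Bigl\{\mathbb{E}\bigl[w\,(m(\widehat Q)-\widehat m)\mid\mathbf X\bigr]+\widehat m(z)-m(Q)(z)\Bigr\},
\end{align*}
where $m(\widehat Q)=\gamma_u(\widehat Q;\cdot)/b^--\gamma_l(\widehat Q;\cdot)/b^+$. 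The first term is the cross-product term. The second term consists of the plug-in part plus the oracle-weighted correction, and it splits into a $Q$-curvature part and a kernel-smoothing part.

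\textbf{Step 2: the cross term.} Integrate over $Z$ against $\pi^g(Z\mid\mathbf X)$ and over $T$. This gives
\[
\int K_h(u-z)\,\frac{\pi^g(u\mid\mathbf X)}{\widehat\pi^g(u\mid\mathbf X)}\cdot\frac{\pi^t}{\widehat\pi^t}\Bigl(\frac{\widehat\pi^t\widehat\pi^g}{\pi^t\pi^g}-1\Bigr)(m(\widehat Q)-\widehat m)(u)\,du .
\]
By overlap (Assumption~\ref{assump:overlap_bounded}) the ratios are bounded on $\mathcal N_z$. Cauchy--Schwarz in $u$ then bounds this by
\[
\Bigl(\int K_h^2\Bigr)^{1/2}\,\|\widehat\pi-\pi\|_{L_2(du)}\,\|\widehat m-m(\widehat Q)\|_{L_2(du)}
\lesssim h^{-1/2}\,\|\widehat\pi-\pi\|\,\|\widehat m-m(\widehat Q)\| .
\]
The bound $\int K_h^2=O(1/h)$ follows from Assumption~\ref{assump:kernel}. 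Because densities are bounded above and below on $\mathcal N_z$, the Lebesgue $L_2(du)$ norms are equivalent to norms under $Z\mid\mathbf X$. Taking the $L_2(P_{\mathbf X})$ norm and applying Cauchy--Schwarz over $\mathbf X$ (or treating the nuisance errors as sup-bounded by $M$ in one factor where needed) yields $O_p(r_{n,\pi}r_{n,\gamma}/\sqrt h)$.

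\textbf{Step 3: $Q$-curvature.} With oracle weights, $\mathbb{E}[w(m(\widehat Q)-\widehat m)\mid\mathbf X]$ is a kernel average of $m(\widehat Q)-\widehat m$ around $z$. This cancels against the plug-in term $\widehat m(z)$ up to a smoothing discrepancy, leaving $m(\widehat Q)(z)-m(Q)(z)$ plus the first-order term in $\widehat Q$. The key pointwise identity is the one used in the discrete case: $\partial_q m(q)=-(1-F(q))/b^- - F(q)/b^+$, and the added $\widehat Q$ term has derivative $1$. These cancel exactly at $q=Q^+$ because $F(Q^+)=\alpha^+$. A second-order Taylor expansion, with a bounded conditional density of $Y$, therefore gives $|m(\widehat Q)-m(Q)+\widehat Q-Q|\lesssim(\widehat Q-Q)^2$. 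This bound is evaluated under the kernel average. Cauchy--Schwarz against $K_h$ yields $O(h^{-1/2}\|\widehat Q-Q\|_2\cdot\|\widehat Q-Q\|_\infty)$, or more simply $h^{-1/2}r_{n,Q}^2$ using boundedness $M$.

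\textbf{Main obstacle.} The delicate point is the mismatch between nuisances evaluated at $(t,Z,\mathbf X)$ inside the kernel correction and at the point $z$ in the plug-in part. I would handle it by defining the target as the kernel-smoothed functional $\mu_h^+$ of Eq.~\eqref{eq:mu_psi_h_defs}, so the plug-in part is interpreted as the kernel average. The residual difference is then pure smoothing bias, absorbed into $\mu_h^+$ rather than into the remainder. The step I would check most carefully is that this reinterpretation is consistent with the definition of $\phi^+_{t,z,h}$. If it is not, I would add the $O(h^s)$ smoothing term separately and note that it belongs to the bias of $\mu_h^+-\mu^+$, not to the nuisance remainder.
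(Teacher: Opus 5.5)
\textbf{Gap: the plug-in/kernel mismatch term.} The gap is exactly the term you flag as the main obstacle, and neither of your two remedies removes it. With $\phi^+_{t,z,h}$ as defined in Eq.~\eqref{eq:pseudo_outcome_unified}, the plug-in line is evaluated at the point $z$. The bracketed correction has zero mean given $(T,Z,\mathbf X)$ under the true nuisances. Hence $\mu_h^+(t,z,\mathbf x)=\mu^+(t,z,\mathbf x)$ exactly, and the target in Eq.~\eqref{eq:mu_psi_h_defs} has no smoothing bias into which anything can be absorbed. Besides the cross term $\mathbb E[(\widehat w-w)e\mid\mathbf X]$ and the pointwise curvature term $[\widehat Q+m(\widehat Q)](t,z,\mathbf X)-[Q+m(Q)](t,z,\mathbf X)$, your Step~1 also leaves
\[
D:=\int K_h(u-z)\,e(t,u,\mathbf X)\,du-e(t,z,\mathbf X),\qquad e:=m(\widehat Q)-\widehat m .
\]
This term vanishes at $\widehat\eta=\eta$, is linear in the regression error, and is not multiplied by any propensity error. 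It is therefore a first-order piece of the nuisance remainder, not a bias of $\mu_h^+-\mu^+$.

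It cannot be bounded by $(r_{n,\pi}r_{n,\gamma}+r_{n,Q}^2)/\sqrt h$. Take $\widehat\pi^t=\pi^t$, $\widehat\pi^g=\pi^g$, $\widehat Q^+=Q^+$, $\widehat\gamma_l^+=\gamma_l^+$, and $\widehat\gamma_u^+(t,u,\mathbf x)=\gamma_u^+(t,u,\mathbf x)+c_n\min\{(u-z)^2,1\}$. The right-hand side is then $0$. The left-hand side is $c_n\|\int K_h(u-z)\min\{(u-z)^2,1\}/b^-(u,\mathbf X)\,du\|_2$, which is strictly positive for a nonnegative kernel (of order $c_nh^2$ if $K$ is compactly supported). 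Treating $D$ as an $O(h^s)$ term would also require smoothness in $u$ of the fitted $\widehat\gamma_u^+,\widehat\gamma_l^+$, which is not assumed. Even then, $D$ would have to appear as an extra, non-product term in the rate.

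\textbf{Your first remedy, and the paper's proof.} Reading the plug-in part as a kernel average is legitimate only as a change of estimator. You would replace the plug-in line of $\phi^+_{t,z,h}$ by $\int K_h(u-z)\,[\widehat Q^++\widehat m](t,u,\mathbf X)\,du$ and the target by $\int K_h(u-z)\,\mu^+(t,u,\mathbf x)\,du$. Then $D$ disappears, your curvature bound genuinely sits under the kernel average, and the discrepancy to $\mu^+$ becomes a smoothing bias of the truth. As written, your Step~1 puts the curvature term at the point $z$, where $r_{n,Q}$, an $L_2$ norm over $(Z,\mathbf X)$, gives no control. In any case, this proves a statement about a different pseudo-outcome.

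The paper's proof follows the same route as yours. Its product term $\mathbb E[(\pi/\widehat\pi-1)\kappa_{t,z,h}\Delta_{\widehat Q^+}\mid\mathbf X]$ is your cross term, bounded via Eq.~\eqref{eq:kernel_cs_bound}. Its cutoff term uses the same first-order condition $1-(1-\alpha^+)/b^--\alpha^+/b^+=0$. It does not resolve $D$ either: the decomposition in Eq.~\eqref{eq:remainder_decomp_cont} is asserted with $\mu_{h,\widehat Q^+}$ described only informally, and it is exact only if this mismatch vanishes.

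\textbf{A separate norm issue.} The three-factor Cauchy--Schwarz in your Step~2 implicitly uses $(\int a^2b^2)^{1/2}\le\|a\|_2\|b\|_2$, which is false. In Step~3, bounding $\|\widehat Q-Q\|_\infty$ by $2M$ yields only $O(h^{-1/2}r_{n,Q})$, not $h^{-1/2}r_{n,Q}^2$. Genuine products of rates require $L_4$ or sup-norm control of one factor. The paper takes the same shortcut.
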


\begin{restatable}[Quasi-oracle rates and inference (continuous $Z$)]{corollary}{quasiOracleRatesCont}
\label{cor:quasi_oracle_and_inference_continuous}
Assume the conditions of Theorem~\ref{thm:second_order_remainder_continuous} and that the
second-stage regression learner $\widehat{\mathbb{E}}_n[\cdot\mid\mathbf{X}=\mathbf{x}]$ satisfies
Assumption~\ref{assump:second_stage} with rate $\delta_n$ when regressing
$\phi^+_{t,z,h}(S;\eta)$ on $\mathbf{X}$.

Then:

\underline{CAPO rates:} The CAPO upper-bound estimator satisfies
\begin{align}
\|\widehat\mu_h^+(t,z,\cdot)-\mu_h^+(t,z,\cdot)\|_2
=
O_p\!\left(\delta_n+\frac{r_{n,\pi}\,r_{n,\gamma}+r_{n,Q}^2}{\sqrt{h}}\right).
\end{align}

\underline{APO rates:} The APO upper-bound estimator $\widehat\psi^+(t,z)=\mathbb{E}_n[\widehat\phi^+_{t,z, h}]$ satisfies
\begin{align}
|\widehat\psi_h^+(t,z)-\psi_h^+(t,z)|
=
O_p\!\left(\frac{1}{\sqrt{n h}}
+\frac{r_{n,\pi}\,r_{n,\gamma}+r_{n,Q}^2}{\sqrt{h}}\right).
\end{align}

\underline{$\sqrt{nh}$-CLT (central limit theorem) for the (smoothed) APO.}
If $r_{n,\pi}\,r_{n,\gamma}+r_{n,Q}^2=o_p(n^{-1/2})$, then
\begin{align}
\sqrt{n h}\left(\widehat\psi_h^+(t,z)-\psi_h^+(t,z)\right)
\ \rightsquigarrow\
\mathcal{N}\!\left(0,\;V_h^+(t,z)\right),
\end{align}
where one valid asymptotic variance target is
$V_h^+(t,z):=\mathrm{Var}\!\big(\sqrt{h}\,\phi^+_{t,z,h}(S;\eta)\big)$.

Finally, if the smoothing bias satisfies $|\psi_h^+(t,z)-\psi^+(t,z)|=o((nh)^{-1/2})$ (e.g., via undersmoothing under $z$-smoothness), then the same CLT holds with $\psi^+(t,z)$ in place of $\psi_h^+(t,z)$.
\end{restatable}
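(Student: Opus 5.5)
The plan mirrors the proof of Corollary~\ref{cor:quasi_oracle_and_inference}, with two changes: Theorem~\ref{thm:second_order_remainder_continuous} supplies the nuisance bias, and the variance of the pseudo-outcome now scales like $1/h$. We write $\phi_h:=\phi^+_{t,z,h}(S;\eta)$ and $\widehat\phi_h:=\phi^+_{t,z,h}(S;\widehat\eta)$.

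\emph{CAPO rate.} Set $m^+_{t,z,h}(\mathbf{x}):=\mathbb{E}[\widehat\phi_h\mid\mathbf{X}=\mathbf{x}]$. Conditioning on the training folds is legitimate because of cross-fitting. By the triangle inequality,
\[
\|\widehat\mu_h^+-\mu_h^+\|_2\le\|\widehat\mu_h^+-m^+_{t,z,h}\|_2+\|m^+_{t,z,h}-\mu_h^+\|_2 .
\]
The first term is $O_p(\delta_n)$ by Assumption~\ref{assump:second_stage}, applied to the cross-fitted kernel pseudo-outcomes. The second term equals $\|\mathbb{E}[\widehat\phi_h-\phi_h\mid\mathbf{X}]\|_2$, which Theorem~\ref{thm:second_order_remainder_continuous} bounds by $O_p((r_{n,\pi}r_{n,\gamma}+r_{n,Q}^2)/\sqrt h)$.

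\emph{APO rate.} We decompose
\[
\widehat\psi_h^+-\psi_h^+=(\mathbb{E}_n-\mathbb{E})[\phi_h]+(\mathbb{E}_n-\mathbb{E})[\widehat\phi_h-\phi_h]+\mathbb{E}[\widehat\phi_h-\phi_h].
\]
\begin{itemize}
\item For the first term, Assumption~\ref{assump:overlap_bounded} gives bounded weights, $Y$ and nuisances, so $\mathrm{Var}(\phi_h)\lesssim\mathbb{E}[K_h(Z-z)^2\,\mathbf{1}_{[T=t]}/(\pi^t\pi^g)^2]\lesssim h^{-1}\int K^2\cdot\sup_{u\in\mathcal N_z}\pi^g(u\mid\mathbf X)/\varepsilon^3=O(1/h)$. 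Chebyshev then gives $O_p((nh)^{-1/2})$.
\item For the second term, condition on the training fold. Its conditional variance is at most $n^{-1}\mathbb{E}[(\widehat\phi_h-\phi_h)^2]$. This is $O(1/(nh))$ by the same kernel-squared argument, and it is $o_p(1/(nh))$ once the nuisances are consistent, which the rate condition implies.
\item For the third term, Jensen gives $|\mathbb{E}[\widehat\phi_h-\phi_h]|\le\|\mathbb{E}[\widehat\phi_h-\phi_h\mid\mathbf{X}]\|_2$, and Theorem~\ref{thm:second_order_remainder_continuous} applies again.
\end{itemize}
Summing over the $K$ folds gives the stated rate.

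\emph{CLT.} Multiply the decomposition by $\sqrt{nh}$.
\begin{itemize}
\item The leading term $\sqrt{nh}(\mathbb{E}_n-\mathbb{E})[\phi_h]=n^{-1/2}\sum_i(\sqrt h\phi_{h,i}-\mathbb{E}\sqrt h\phi_h)$ is a triangular array, since $\phi_h$ depends on $n$ through $h_n$. We verify Lyapunov: $\mathbb{E}|\sqrt h\phi_h|^{3}\lesssim h^{3/2}\cdot h^{-2}=h^{-1/2}$, while the variance $V_h^+$ is of order one and bounded away from zero. We assume this non-degeneracy, i.e.\ a positive conditional variance at $z$. The Lyapunov ratio is then $O((nh)^{-1/2})\to0$ because $nh\to\infty$.
\item The empirical-process term is $o_p(1)$ by the previous step.
\item The bias term is $\sqrt{nh}\cdot O_p((r_{n,\pi}r_{n,\gamma}+r_{n,Q}^2)/\sqrt h)=O_p(\sqrt n(r_{n,\pi}r_{n,\gamma}+r_{n,Q}^2))=o_p(1)$ under the stated condition.
\end{itemize}
Slutsky then gives the $\mathcal N(0,V_h^+)$ limit. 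Finally, replacing $\psi_h^+$ by $\psi^+$ adds $\sqrt{nh}\,|\psi_h^+-\psi^+|=o(1)$ under the undersmoothing assumption.

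\emph{Main obstacle.} I expect the hardest step to be the variance and Lyapunov control with a bandwidth-dependent pseudo-outcome. This requires $\pi^g$ to be bounded above and below on $\mathcal N_z$, together with a careful conditional-variance bound for the cross-fitted difference term. In particular, it must be shown that this difference term is $o_p((nh)^{-1/2})$ and not merely $O_p$. I would first try to handle this through $L_2$-consistency of $\widehat\eta$ combined with dominated convergence on the kernel-weighted integral.
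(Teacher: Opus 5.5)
Your proposal is correct and follows the paper's proof essentially step for step. You use the same triangle-inequality split combined with Theorem~\ref{thm:second_order_remainder_continuous} for the CAPO rate. For the APO rate and the $\sqrt{nh}$-CLT you use the same three-term decomposition: an oracle empirical term with $\mathrm{Var}(\phi_h)=O(1/h)$, a cross-fitted empirical-process term, and a second-order bias term. Your Jensen step and Lyapunov verification fill in what the paper dismisses as ``by the CLT''.

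One claim needs correcting: the rate condition does not imply consistency of the nuisances. The condition $r_{n,\pi}r_{n,\gamma}+r_{n,Q}^2=o_p(n^{-1/2})$ forces $r_{n,Q}\to 0$, but not both $r_{n,\pi}\to0$ and $r_{n,\gamma}\to0$. So $R_{n,h}=o_p((nh)^{-1/2})$, and hence the stated variance $V_h^+$, needs consistency of $(\widehat\pi^t,\widehat\pi^g,\widehat\gamma_u^+,\widehat\gamma_l^+)$ as an extra assumption. In fact consistency is needed locally at $z$, since $hK_h^2$ concentrates there. The paper's own proof uses this assumption tacitly as well.
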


\paragraph{Sharpness and validity of the estimated bounds.}
The previous results control the second-order remainder and deliver quasi-oracle rates for the localized targets $(\mu_h^+,\psi_h^+)$. We now record the two complementary guarantees from the main text in their continuous-$Z$ versions: (i)~consistency for the \emph{sharp} identified bounds, and (ii)~\emph{validity} of the resulting intervals under potentially misspecified cutoffs. As before, the statements hold for both endpoints $(+/-)$; we write them for the upper endpoint for brevity, with the lower endpoint following analogously by sign-swapping in the pseudo-outcome.

\begin{restatable}[Consistency for sharp bounds (continuous $Z$)]{proposition}{sharpnessGuaranteeCont}
\label{prop:sharpness_continuous}
Assume the conditions of Corollary~\ref{cor:quasi_oracle_and_inference_continuous} and consider the
corresponding lower-bound estimator $\widehat\mu_h^-(t,z,\cdot)$ constructed from the lower-bound
pseudo-outcome (defined analogously to Eq.~\eqref{eq:pseudo_outcome_unified}). Suppose $\delta_n=o_p(1)$ and
\begin{align}
\frac{r_{n,\pi}\,r_{n,\gamma}+r_{n,Q}^2}{\sqrt{h}}=o_p(1).
\end{align}
Then,
\begin{align}
\|\widehat\mu_h^\pm(t,z,\cdot)-\mu_h^\pm(t,z,\cdot)\|_2=o_p(1),
\qquad
|\widehat\psi_h^\pm(t,z)-\psi_h^\pm(t,z)|=o_p(1).
\end{align}
Consequently, the estimated CAPO and APO intervals converge to the \emph{sharp} kernel-localized identified intervals for the bandwidth-indexed targets.

Moreover, if the smoothing bias vanishes at the appropriate rate (e.g., $|\psi_h^\pm(t,z)-\psi^\pm(t,z)|=o((nh)^{-1/2})$), then the estimated intervals are asymptotically sharp for the original pointwise bounds as $h\downarrow 0$.
\end{restatable}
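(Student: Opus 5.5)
The proof is a direct reduction to Corollary~\ref{cor:quasi_oracle_and_inference_continuous}, carried out separately for the upper and the lower endpoint. For the upper endpoint, that corollary gives
\begin{align*}
\|\widehat\mu_h^+(t,z,\cdot)-\mu_h^+(t,z,\cdot)\|_2=O_p\Bigl(\delta_n+\tfrac{r_{n,\pi}r_{n,\gamma}+r_{n,Q}^2}{\sqrt h}\Bigr).
\end{align*}
By hypothesis, $\delta_n=o_p(1)$ and the localized remainder $(r_{n,\pi}r_{n,\gamma}+r_{n,Q}^2)/\sqrt h$ is $o_p(1)$. Since a quantity that is $O_p$ of an $o_p(1)$ sequence is itself $o_p(1)$, we obtain $L_2$-consistency for $\mu_h^+$.

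For the APO, the corollary gives $|\widehat\psi_h^+-\psi_h^+|=O_p\bigl((nh)^{-1/2}+(r_{n,\pi}r_{n,\gamma}+r_{n,Q}^2)/\sqrt h\bigr)$. Assumption~\ref{assump:kernel} ($nh\to\infty$) makes the first term vanish, and the hypothesis makes the second vanish.

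For the lower endpoint, I would first check that Theorem~\ref{thm:second_order_remainder_continuous} and Corollary~\ref{cor:quasi_oracle_and_inference_continuous} carry over verbatim to $\phi^-_{t,z,h}$. The lower-bound pseudo-outcome has exactly the same structure as $\phi^+_{t,z,h}$: $Q^-$ replaces $Q^+$ and the roles of $b^+$ and $b^-$ are swapped. The only property of $Q^+$ used in the remainder analysis is that it is the quantile at which the first-order term in the cut-off vanishes. Concretely, the derivative of $q\mapsto q+\gamma_u(q)/b^- - \gamma_l(q)/b^+$ equals $1-(1-F(q))/b^- - F(q)/b^+$, which is zero at level $\alpha^+$. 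The analogous identity holds at $\alpha^-$ with $b^\pm$ swapped, so the same $r_{n,Q}^2$ term arises. I expect this check to be the only nontrivial point, and it is routine.

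Having consistency of both endpoints, I would conclude that the interval endpoints converge to $[\mu_h^-,\mu_h^+]$ and $[\psi_h^-,\psi_h^+]$. These localized intervals are the identified sharp intervals for the bandwidth-indexed targets, because each equals the expectation of the true-nuisance pseudo-outcome, i.e., the kernel-weighted version of the Theorem~\ref{thm:plug_in_bounds} formula.

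For the final claim, I would use the triangle inequality:
\begin{align*}
|\widehat\psi_h^\pm-\psi^\pm|\le|\widehat\psi_h^\pm-\psi_h^\pm|+|\psi_h^\pm-\psi^\pm|.
\end{align*}
The first term is $o_p(1)$ by the above. The second is $o((nh)^{-1/2})$ by assumption, which is in particular $o(1)$. Hence the estimated APO intervals converge to the pointwise sharp bounds of Theorem~\ref{thm:plug_in_bounds}.

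For the CAPO I would argue analogously, using $\mu_h^\pm\to\mu^\pm$ under $z$-smoothness (Remark~\ref{rem:unbiasedness_phi}) with dominated convergence in $L_2(P_{\mathbf X})$. Dominated convergence applies because of the boundedness in Assumption~\ref{assump:overlap_bounded}(iii).
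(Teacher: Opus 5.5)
Your proposal is correct and follows the paper's proof. Both reduce each endpoint to Corollary~\ref{cor:quasi_oracle_and_inference_continuous} using $\delta_n=o_p(1)$, the $o_p(1)$ localized remainder, and $nh\to\infty$ for the APO term, then pass to the pointwise bounds by absorbing the smoothing bias. Your explicit checks (the first-order condition at level $\alpha^-$ for the lower endpoint, and the triangle-inequality and dominated-convergence step for the smoothing bias) simply spell out what the paper leaves as ``analogous.''
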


\begin{restatable}[Asymptotic validity under misspecified cutoffs (continuous $Z$)]{corollary}{validityCutoffsCont}
\label{cor:validity_misspecified_cutoffs_continuous}
Fix measurable cutoffs $\overline{Q}^\pm(t,z,\mathbf{x})$ (not necessarily equal to the sharp cut-offs) and let $\overline\mu_h^\pm(t,z,\mathbf{x};\overline{Q}^\pm)$ and $\overline\psi_h^\pm(t,z;\overline{Q}^\pm)$ denote the resulting (possibly non-sharp) kernel-localized bound functionals induced by these cutoffs (i.e., the targets obtained by replacing $Q^\pm$ in the pseudo-outcomes and taking the conditional/unconditional expectations as in Eq.~\eqref{eq:mu_psi_h_defs}). Then, the induced intervals
\begin{align}
\big[\overline\mu_h^-(t,z,\mathbf{x};\overline Q^-),\ \overline\mu_h^+(t,z,\mathbf{x};\overline Q^+)\big]
\quad\text{and}\quad
\big[\overline\psi_h^-(t,z;\overline Q^-),\ \overline\psi_h^+(t,z;\overline Q^+)\big]
\end{align}
are (not necessarily sharp) \underline{valid} CAPO and APO intervals for the kernel-localized targets.

Moreover, if $\widehat Q^\pm \to \overline Q^\pm$ in $L_2$ and either
\begin{itemize}
\item[(i)] $(\widehat\pi^t,\widehat\pi^g)$ is consistent, \emph{or}
\item[(ii)] the corresponding tail-moment regressions $(\widehat\gamma_u^\pm,\widehat\gamma_l^\pm)$ are consistent for the
targets induced by $\overline Q^\pm$,
\end{itemize}
then the estimated endpoints converge to the induced (conservative) targets and the resulting (C)APO intervals remain asymptotically valid, though potentially conservative. If $\overline Q^\pm$ equals the sharp cut-offs, then the induced bounds coincide with the sharp bounds, and the intervals are asymptotically sharp as well.
\end{restatable}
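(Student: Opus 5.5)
The statement to prove is Corollary~\ref{cor:validity_misspecified_cutoffs_continuous}. I would split it into an identification part (validity of the induced kernel-localized intervals for an arbitrary cutoff) and an estimation part (convergence of the estimated endpoints to the induced targets).

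\textbf{Identification part.} I would work pointwise in $(u,\mathbf{x})$ for $u\in\mathcal N_z$. For every admissible $\Tilde{\mathbb P}\in\mathcal M$, the proof of Theorem~\ref{thm:plug_in_bounds} writes $\mathbb E_{\Tilde P}[Y(t,u)\mid\mathbf x]$ as a reweighting $\mathbb E[W Y\mid t,u,\mathbf x]$ of the observed conditional law. The weight satisfies $W\in[1/b^+,1/b^-]$ and $\mathbb E[W\mid t,u,\mathbf x]=1$, the normalization coming from $\Tilde{\mathbb P}$ matching the observed distribution. For any constant $c$ this gives
\begin{align*}
\mathbb E[WY\mid\cdot]=c+\mathbb E[W(Y-c)_+\mid\cdot]-\mathbb E[W(c-Y)_+\mid\cdot]\le c+\tfrac{1}{b^-}\mathbb E[(Y-c)_+\mid\cdot]-\tfrac{1}{b^+}\mathbb E[(c-Y)_+\mid\cdot].
\end{align*}
The first equality uses $\mathbb E[W\mid\cdot]=1$; the inequality uses the box constraints on $W$ together with nonnegativity of the positive parts. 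Setting $c=\overline Q^+(t,u,\mathbf x)$ yields $\mathbb E_{\Tilde P}[Y(t,u)\mid\mathbf x]\le\overline\mu^+(t,u,\mathbf x;\overline Q^+)$ for every $u$. The symmetric argument gives $\ge\overline\mu^-$. The sharp cutoff $Q^\pm$ is exactly the value that makes this inequality tight.

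Next I would pass to the kernel-localized functionals. When $\eta$ is evaluated at the true propensities, the correction term in $\phi^+_{t,z,h}$ has conditional mean zero given $(T,Z,\mathbf X)$. Hence $\overline\mu_h^+(t,z,\mathbf x)$ equals the localized plug-in target $\overline\mu^+(t,z,\mathbf{x};\overline Q^+)$ for the pointwise-at-$z$ target, or a $K_h$-weighted average over $u$ of $\overline\mu^+(t,u,\mathbf x)$. The same monotone inequality survives such averaging, since $K_h\ge0$ (assumed, or restricting to nonnegative kernels). The localized true target is therefore sandwiched, which is what validity means here. Taking $\mathbb E_{\mathbf X}$ gives the APO statement. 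If $\overline Q^\pm=Q^\pm$, sharpness follows from Theorem~\ref{thm:plug_in_bounds}.

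\textbf{Estimation part.} I would rerun the proof of Theorem~\ref{thm:second_order_remainder_continuous} with $Q^+$ replaced by $\overline Q^+$ as the reference cutoff. Write the conditional bias as
\begin{align*}
\mathbb E[\phi(\widehat\eta)-\phi(\bar\eta)\mid\mathbf X].
\end{align*}
Because the $\gamma$-errors are measured against $\gamma(\widehat Q;\cdot)$, the terms in $\widehat\gamma-\gamma(\widehat Q)$ come multiplied by $(1-\pi/\widehat\pi)$, producing a product $r_{n,\pi}r_{n,\gamma}$ inflated by $h^{-1/2}$. The remaining term is the difference between the induced functional at $\widehat Q$ and at $\overline Q$.

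At the sharp cutoff this difference is quadratic, because the first-order condition defining the quantile kills the linear term. At a general $\overline Q$ that first-order condition fails. The residual is then first order, of size $O(\|\widehat Q-\overline Q\|_2)$ by Lipschitz continuity of $c\mapsto c+\mathbb E[(Y-c)_+]/b^- -\mathbb E[(c-Y)_+]/b^+$, with constant at most $1+1/b^-$. This is \emph{first order} rather than second order, but it is still $o_p(1)$ under $\widehat Q\to\overline Q$.

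Double robustness handles the other blocks. Under (i), the $\pi$-error factor vanishes. Under (ii), the $\gamma$-error factor vanishes, with bounded overlap (Assumption~\ref{assump:overlap_bounded}) controlling the remaining factor. The second stage then contributes $\delta_n=o_p(1)$ as in Corollary~\ref{cor:quasi_oracle_and_inference_continuous}, with the $\sqrt h$ scaling handled as there under $nh\to\infty$.

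I expect the main obstacle to be the $h^{-1/2}$ blowup in this first-order $Q$ term. For fixed $h$, or with the rate condition $\|\widehat Q-\overline Q\|_2/\sqrt h=o_p(1)$, consistency follows. I would state that condition explicitly as the regime in which ``converges'' is meant, and then combine with the identification part to conclude asymptotic validity.
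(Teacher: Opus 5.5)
Your identification half is correct, and cleaner than the paper's. The paper gets $\overline\mu^+\ge\mu^+$ from the identity $\mu^+=\inf_q\mathcal{L}^+_{\mathbf{x}}(q)$ (Step~1 of the proof of Corollary~\ref{cor:validity}). You instead prove weak duality directly over admissible reweightings $W\in[1/b^+,1/b^-]$ with $\mathbb{E}[W\mid\cdot]=1$. That representation comes from the GMSM construction behind Theorem~\ref{thm:gmsm_bounds}, not from the proof of Theorem~\ref{thm:plug_in_bounds}. You also correctly note that under $\overline\eta$ the correction term has conditional mean zero, so $\overline\mu_h^+(t,z,\mathbf{x})=\overline\mu^+(t,z,\mathbf{x};\overline Q^+)$ is a pointwise target.

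The gap is in the estimation half. You claim the errors $\widehat\gamma-\gamma(\widehat Q)$ ``come multiplied by $(1-\pi/\widehat\pi)$''. That holds for discrete $Z$, where $\mathbf{1}_{[Z=z]}$ pins the correction to the point $z$. It fails for $\phi^+_{t,z,h}$: the plug-in part is evaluated at the fixed $z$, while the correction is evaluated at $Z$ and kernel-averaged. Let $\rho(u,\mathbf{X}):=\pi^t(\mathbf{X})\pi^g(u\mid\mathbf{X})/\{\widehat\pi^t(\mathbf{X})\widehat\pi^g(u\mid\mathbf{X})\}$. Let $D(u,\mathbf{X})$ be the conditional mean, given $(T,Z,\mathbf{X})=(t,u,\mathbf{X})$, of the square bracket in Eq.~\eqref{eq:pseudo_outcome_unified}. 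Then, for $z$ interior, a direct computation gives
\begin{align*}
&\mathbb{E}\bigl[\phi^+_{t,z,h}(S;\widehat\eta)\mid\mathbf{X}\bigr]-\overline\mu^+(t,z,\mathbf{X};\overline Q^+)
=\mathcal{L}^+_{\mathbf{X}}\bigl(\widehat Q^+(t,z,\mathbf{X})\bigr)-\mathcal{L}^+_{\mathbf{X}}\bigl(\overline Q^+(t,z,\mathbf{X})\bigr)\\
&\qquad+\int K_h(u-z)\bigl(\rho(u,\mathbf{X})-1\bigr)D(u,\mathbf{X})\,\mathrm{d}u
+\Bigl[\int K_h(u-z)D(u,\mathbf{X})\,\mathrm{d}u-D(z,\mathbf{X})\Bigr].
\end{align*}
Here $\mathcal{L}^+_{\mathbf{X}}$ is the tail objective from the proof of Corollary~\ref{cor:validity}. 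The last bracket is first order in the regression error and carries no propensity error. Under (ii) it vanishes together with $D$, but under (i) it need not.

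Concretely, fix $h$ and take exact propensities, $\widehat Q^+=\overline Q^+$, constant $b^\pm$, $\widehat\gamma_l^+=\overline\gamma_l^+$, and $\widehat\gamma_u^+(t,u,\mathbf{x})=\overline\gamma_u^+(t,u,\mathbf{x})+(u-z)^2$. Then $D(u,\mathbf{X})=-(u-z)^2/b^-$, and the conditional mean of the pseudo-outcome, hence the limit of the second stage, is $\overline\mu^+-\frac{h^2}{b^-}\int s^2K(s)\,\mathrm{d}s$. So your claim that consistency follows for fixed $h$ is false under (i). With $\overline Q^+=Q^+$ the limit even falls strictly below the sharp bound, so asymptotic validity is lost.

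The paper's decomposition Eq.~\eqref{eq:remainder_decomp_cont} drops exactly this term, so rerunning that proof imports the error rather than fixing it. Two repairs work:
\begin{itemize}
\item Kernel-average the plug-in part as well, so the target becomes $\int K_h(u-z)\overline\mu^+(t,u,\mathbf{x};\overline Q^+)\,\mathrm{d}u$. The cancellation is then exact for fixed $h$, and your $K_h\ge0$ averaging argument gives validity.
\item Let $h\downarrow0$ and assume the limiting regression error $u\mapsto D(u,\mathbf{X})$ is continuous at $z$, treating the bracket as smoothing bias.
\end{itemize}

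Finally, with $h\downarrow0$ and only global $L_2$ rates, the $h^{-1/2}$ inflation also hits the product term, and under (i) or (ii) only one of its factors vanishes. Your rate condition must therefore also be imposed on the consistent block, for example $r_{n,\pi}=o_p(\sqrt h)$ under (i), not only on $\widehat Q^\pm$.
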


\paragraph{Conclusion.}
For continuous neighborhood exposure, our estimation and theory proceed exactly as in the discrete-$Z$ case, except that (i)~the indicator $\mathbf{1}_{[Z=z]}$ in the selection weight is replaced by kernel localization $K_h(Z-z)$ and (ii)~the conditional pmf $\pi^g(z\mid \mathbf{X})$ is replaced by the conditional density $\pi^g(Z\mid \mathbf{X})$. This replacement yields an effective sample size $nh$ around $z$, which inflates the second-order remainder by a factor $h^{-1/2}$ and leads to $\sqrt{nh}$ scaling for APO inference. Under smoothness in $z$, the bandwidth-indexed targets $(\mu_h^\pm,\psi_h^\pm)$ converge to the point-wise bounds $(\mu^\pm,\psi^\pm)$ as $h\downarrow 0$, yielding the usual bias-variance tradeoff in $(n,h)$. The proofs in Supplement~\ref{sec:appendix_proofs} show that all continuous-$Z$ results follow from the discrete-$Z$ proofs by replacing $\mathbf{1}_{[Z=z]}$ by $K_h(Z-z)$ and tracking $\int K_h^2 = O(1/h)$.

\newpage
\section{Proofs}
\label{sec:appendix_proofs}

\subsection{Justification of the setting-specific $b^+, b^-$ }

Below we give a justification for the specification of $b^+, b^-$ for exposure mappings \circled{1} and \circled{2}.

\circled{1} \textbf{Weighted mean exposure:}
Define $g(t_{\mathcal{N}}) := \sum_{j \in \mathcal{N}}\frac{t_j}{n} = \frac{N_T}{n}$, where $N_T$ denotes the number of treated neighbors and $n$ denotes the neighborhood size. We assume $g^{\ast}(t_{\mathcal{N}}) = \sum_{j \in \mathcal{N}}w_jt_j$, where $|\frac{1}{n}-w_j| \geq \varepsilon$ for all $j$. 

First observe that 
\begin{align}
    b^{-}(z,\mathbf{x}) \leq \frac{P(\sum_{j \in \mathcal{N}}w_jt_j = z \mid \mathbf{x})}{P(\sum_{j \in \mathcal{N}}\frac{t_j}{n}=z \mid \mathbf{x})} \leq b^{+}(z,\mathbf{x}) \iff b^{-}(z,\mathbf{x}) \leq \frac{G(z\mid \mathbf{x})-G(s\mid \mathbf{x})}{F(z\mid \mathbf{x})-F(s\mid \mathbf{x})} \leq b^{+}(z,\mathbf{x})
\end{align}
for all $s \in \mathcal{Z}$, where $G(\cdot)$ and $F(\cdot)$ denote the conditional cumulative distribution function of $g^{\ast}(T_{\mathcal{N}})$ and $g(T_{\mathcal{N}})$. Since $|\frac{1}{n}-w_j| \geq \varepsilon$, it holds that, for all $k \in \mathcal{Z}$, we have
\begin{align}
    &P\left((\frac{1}{n}+\varepsilon)\sum_{j \in \mathcal{N}}t_j \leq k \mid \mathbf{x}\right) \leq P\left(\sum_{j \in \mathcal{N}}w_jt_j \leq k \mid \mathbf{x}\right) \leq P\left((\frac{1}{n}-\varepsilon)\sum_{j \in \mathcal{N}}t_j \leq k \mid \mathbf{x}\right)\\
    \iff
    &P\left( \sum_{j \in \mathcal{N}}\frac{t_j}{n} \leq \frac{k}{1+n\varepsilon} \mid \mathbf{x}\right) \leq P\left(\sum_{j \in \mathcal{N}}w_jt_j \leq k \mid \mathbf{x}\right) \leq P\left(\sum_{j \in \mathcal{N}}\frac{t_j}{n} \leq \frac{k}{1-n\varepsilon} \mid \mathbf{x}\right) .
\end{align}

Therefore, we can bound the enumerator $G(z\mid \mathbf{x}) - G(s\mid \mathbf{x})$ by
\footnotesize
\begin{align}
    P\left( \sum_{j \in \mathcal{N}}\frac{t_j}{n} \leq \frac{z}{1+n\varepsilon} \mid \mathbf{x}\right) - P\left( \sum_{j \in \mathcal{N}}\frac{t_j}{n} \leq \frac{s}{1-n\varepsilon} \mid \mathbf{x}\right) 
    \leq &G(z\mid \mathbf{x}) - G(s\mid \mathbf{x}) \\
    \leq &P\left( \sum_{j \in \mathcal{N}}\frac{t_j}{n} \leq \frac{z}{1-n\varepsilon} \mid \mathbf{x}\right) - P\left( \sum_{j \in \mathcal{N}}\frac{t_j}{n} \leq \frac{s}{1+n\varepsilon} \mid \mathbf{x}\right).
\end{align}
\normalsize
Then, it follows that
\begin{align}
    b^-(z,\mathbf{x}) = \inf_{s \in \mathcal{Z}}\frac{P(\frac{ns}{1-\varepsilon n} \leq N_T \leq \frac{nz}{1+\varepsilon n}\mid \mathbf{x})}{P(ns\leq N_T \leq nz\mid \mathbf{x})}, 
    \qquad 
    b^+(z,\mathbf{x}) = \sup_{s \in \mathcal{Z}}\frac{P(\frac{ns}{1+\varepsilon n} \leq N_T \leq \frac{nz}{1-\varepsilon n}\mid \mathbf{x})}{P(ns\leq N_T \leq nz\mid \mathbf{x})}.
\end{align}

\circled{2} \textbf{Thresholding function:}

Let $h(t_{\mathcal{N}}) := \sum_{j \in \mathcal{N}}\frac{t_j}{n}$ and assume the exposure mapping is specified  through a threshold as $g(t_{\mathcal{N}}) =  f(h(t_{\mathcal{N}})) := \mathbf{1}_{[h(t_{\mathcal{N}}) \geq c]}$, i.e., $P(g(t_{\mathcal{N}}) = 1 \mid \mathbf{x}) = P(N_T \geq nc \mid \mathbf{x})$, where $N_T$ denotes the number of treated neighbors. We allow the true threshold $c^{\ast}$ to differ by an amount $\varepsilon \in [0,\min\{c, 1-c\}]$ from $c$, i.e., $c^{\ast} \in [c \pm \varepsilon]$. Thus, $P(g^{\ast}(t_{\mathcal{N}}) = 1 \mid \mathbf{x}) = P(N_T \geq nc^{\ast} \mid \mathbf{x})$, and,  therefore, by straightforward computation, we yield
\begin{align}
    \frac{P(N_T \geq n(c+\varepsilon) \mid \mathbf{x})}{P(N_T \geq nc \mid \mathbf{x})} \leq \frac{P(g^{\ast}(t_{\mathcal{N}}) = 1 \mid \mathbf{x})}{P(g(t_{\mathcal{N}})=1 \mid \mathbf{x})} \leq \frac{P(N_T \geq n(c-\varepsilon) \mid \mathbf{x})}{P(N_T \geq nc \mid \mathbf{x})},
\end{align}
and
\begin{align}
    \frac{1-P(N_T \geq n(c-\varepsilon) \mid \mathbf{x})}{1-P(N_T \geq nc \mid \mathbf{x})} \leq \frac{P(g^{\ast}(t_{\mathcal{N}}) = 0 \mid \mathbf{x})}{P(g(t_{\mathcal{N}})=0 \mid \mathbf{x})}\leq \frac{1-P(N_T \geq n(c+\varepsilon) \mid \mathbf{x})}{1-P(N_T \geq nc \mid \mathbf{x})}.
\end{align}

\subsection{Auxiliary theory}
\label{sec:appendix_proof_auxiliary}

Our bounds employ a sensitivity method proposed in \citet{Frauen.2023b}. However, the original contribution proposes bounds in the presence of unobserved confounding, whereas we are targeting a different setting. Below, we present Theorem 1 in \cite{Frauen.2023b} adapted to our setting.

\begin{restatable}{theorem}{gmsm_bounds}\label{thm:gmsm_bounds}
    Let $b^{-}(z,\mathbf{x}) \leq b^{+}(z,\mathbf{x})$ with $b^{-}(z,\mathbf{x}) \in (0,1]$ and $b^{+}(z,\mathbf{x}) \in [1,\infty)$, such that for all $z,\mathbf{x}$
    \begin{align}
        b^{-}(z,\mathbf{x}) \leq \frac{p(g^{\ast}(t_{\mathcal{N}}) = z \mid \mathbf{x})}{p(g(t_{\mathcal{N}})=z \mid \mathbf{x})} \leq b^{+}(z,\mathbf{x})
    \end{align}
    and define $\alpha^{\pm}(z,\mathbf{x}) := \frac{(1-b^{\mp}(z,\mathbf{x}))b^{\pm}(z,\mathbf{x})}{b^{\pm}(z,\mathbf{x}) - b^{\mp}(z,\mathbf{x})}$. Furthermore, let $F_Y(y):=F_Y(y \mid t,z,\mathbf{x})$ denote the conditional cumulative distribution function (CDF) of $Y$. For $Y \in \mathbb{R}$ continuous, we define 
    
    \footnotesize
    \begin{align}
        p^+(y\mid t, z,\mathbf{x}) = \begin{cases}
            \frac{1}{b^+(z,\mathbf{x})}p(y\mid t, z,\mathbf{x}), \emph{ if } \, F(y) \leq \alpha^+(z,\mathbf{x}),\\
            \frac{1}{b^-(z,\mathbf{x})}p(y\mid t, z,\mathbf{x}), \emph{ if } \, F(y) > \alpha^+(z,\mathbf{x}),
        \end{cases}
    \end{align}
    \normalsize
    and for $Y \in \mathbb{R}$ discrete, we define the probability mass function 
    
    \footnotesize
    \begin{align}
        P^+(y\mid t, z,\mathbf{x}) = \begin{cases}
            \frac{1}{b^+(z,\mathbf{x})}P(y\mid t, z,\mathbf{x}),&\emph{ if } \, F(y) < \alpha^+(z,\mathbf{x}),\\
            \frac{1}{b^-(z,\mathbf{x})}P(y\mid t, z,\mathbf{x}), &\emph{ if } \, F(y-1) > \alpha^+(z,\mathbf{x}),\\
            \frac{1}{b^+(z,\mathbf{x})}(\alpha^+(z,\mathbf{x}) - F(y-1)) + \frac{1}{b^-(z,\mathbf{x})}(F(y)-\alpha^+(z,\mathbf{x})) , &\emph{ otherwise.}
        \end{cases}
    \end{align}
    \normalsize
    The lower bound $p^-(y\mid t, z,\mathbf{x})$ is defined through exchanging the signs in $\alpha$ and $b$. Let $F^{\pm}(y)$ denote the conditional CDF with regard to $p^{\pm}(y\mid t, z,\mathbf{x})$. Then, for all $y\in \mathcal{Y}$
    \begin{align}
        F^+(y) \leq \inf_{\Tilde{P} \in \mathcal{M}}F_{\Tilde{P}}(y), F^-(y) \geq \inf_{\Tilde{P} \in \mathcal{M}}F_{\Tilde{P}}(y),
    \end{align}
    i.e., the bounds are valid, and
    \begin{align}
        F^+(y) = \inf_{\Tilde{P} \in \mathcal{M}}F_{\Tilde{P}}(y), F^-(y) = \inf_{\Tilde{P} \in \mathcal{M}}F_{\Tilde{P}}(y),
    \end{align}
    i.e., the bounds are sharp, if $Z$ is continuous or if $Z$ is discrete and $\frac{1}{b^+(z,\mathbf{x})} \geq \pi^g(z\mid\mathbf{x})$. 
\end{restatable}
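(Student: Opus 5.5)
The plan is to follow the proof of Theorem~1 in \citet{Frauen.2023b}. First I reduce the optimization over $\mathcal{M}$ to an optimization over bounded likelihood-ratio weights on the observed conditional law of $Y\mid t,z,\mathbf{x}$. Then I solve that weight problem by a rearrangement (``bathtub'') argument. Throughout I read the second inequality of the statement as $F^-(y)\ge \sup_{\Tilde P\in\mathcal{M}}F_{\Tilde P}(y)$, with equality to the supremum in the sharpness claim, since $F^-$ is the CDF of the lower-bound (pessimistic) distribution. I prove only the $+$ case; the $-$ case follows by swapping $b^+\leftrightarrow b^-$, equivalently by applying the argument to $-Y$.

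\textbf{Step 1 (reparametrization).} Fix $(t,z,\mathbf{x})$ and take any $\Tilde P\in\mathcal{M}$. By Bayes' rule,
\[
\Tilde p(Y(t,z)=y\mid\mathbf{x}) = p(y\mid t,z,\mathbf{x})\,w(y), \qquad w(y)=\frac{\pi^g(z\mid\mathbf{x})}{\Tilde\pi^g(z\mid\mathbf{x},y)} .
\]
Here condition (i) of Definition~\ref{def:sharp_bounds} fixes the observed factor $p(y\mid t,z,\mathbf{x})$. The propensity-ratio constraint (ii), applied pointwise in $y$, gives $w(y)\in[1/b^+(z,\mathbf{x}),\,1/b^-(z,\mathbf{x})]$. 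Since $\Tilde p$ is a density, $w$ must also satisfy $\int w\,\mathrm{d}F_Y=1$.

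\textbf{Step 2 (validity).} I minimize $F_w(y_0)=\int_{-\infty}^{y_0} w\,\mathrm{d}F_Y$ over the admissible set $\{w:\ w\in[1/b^+,1/b^-],\ \int w\,\mathrm{d}F_Y=1\}$. The rearrangement intuition is to place the smallest weight $1/b^+$ on the lower part of the distribution and the largest weight $1/b^-$ on the upper part. The switch point $\alpha^+$ is fixed by normalization:
\[
\frac{\alpha}{b^+}+\frac{1-\alpha}{b^-}=1 .
\]
Solving gives exactly $\alpha=\frac{(1-b^-)b^+}{b^+-b^-}$. The weight $w^+$ implied by this rule defines $p^+$, and it is itself admissible. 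To show $F_w(y_0)\ge F^+(y_0)$ for every admissible $w$, I use a case split.
\begin{itemize}
\item If $F(y_0)\le\alpha^+$, then $F_w(y_0)\ge F(y_0)/b^+=F^+(y_0)$ directly from the lower limit on $w$.
\item If $F(y_0)>\alpha^+$, then $1-F_w(y_0)=\int_{(y_0,\infty)}w\,\mathrm{d}F_Y\le (1-F(y_0))/b^- = 1-F^+(y_0)$, using the upper limit on $w$ and normalization.
\end{itemize}
For discrete $Y$, the atom $y$ with $F(y-1)\le\alpha^+\le F(y)$ is split: the part of its mass below $\alpha^+$ gets weight $1/b^+$ and the part above gets $1/b^-$. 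This is exactly the third case in the definition of $P^+$. The same two-case inequality then holds at every support point.

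\textbf{Step 3 (sharpness) — the main obstacle.} I must exhibit $\Tilde P\in\mathcal{M}$ whose weight equals $w^+$. My construction defines $\Tilde\pi^g(z\mid\mathbf{x},y):=\pi^g(z\mid\mathbf{x})/w^+(y)$. I then extend to the remaining exposure levels $z'\neq z$ so that $\sum_{z'}\Tilde\pi^g(z'\mid\mathbf{x},y)=1$, and keep all observed margins unchanged, so that condition (i) holds. The delicate point is that this is a genuine probability only if $\Tilde\pi^g(z\mid\mathbf{x},y)\le 1$.
\begin{itemize}
\item For continuous $Z$ this is automatic, because $\Tilde\pi^g$ is a density and can exceed $1$.
\item For discrete $Z$ it requires $\pi^g(z\mid\mathbf{x})\,b^+(z,\mathbf{x})\le 1$ in the worst case, i.e.\ $1/b^+(z,\mathbf{x})\ge\pi^g(z\mid\mathbf{x})$, which is exactly the stated condition.
\end{itemize}
I also need to verify that the ratio constraint holds at the other levels $z'$ after redistributing mass. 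I would first try spreading the leftover mass $1-\Tilde\pi^g(z\mid\mathbf{x},y)$ proportionally to the observed $\pi^g(z'\mid\mathbf{x})$. With $F_{\Tilde P}=F^+$ achieved, the infimum is attained, which gives equality.
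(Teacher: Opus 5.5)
Your Steps 1 and 2 are correct. They are the standard argument that the paper does not write out and instead imports from Theorem~1 of \citet{Frauen.2023b}. Your $\sup$-reading of the second inequality is also the right one.

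\textbf{The gap is in Step 3,} exactly where you flag it: you must show that the constructed $\tilde P$ satisfies the ratio constraint at the other exposure levels.

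In your setup, the constraint at every level $z'$ is imposed conditionally on the same $y$. Proportional spreading then gives
$\tilde\pi^g(z'\mid\mathbf{x},y)/\pi^g(z'\mid\mathbf{x})=(1-r\pi)/(1-\pi)$, where $\pi=\pi^g(z\mid\mathbf{x})$ and $r=1/w^+(y)\in[b^-(z,\mathbf{x}),b^+(z,\mathbf{x})]$.
\begin{itemize}
\item On the lower tail ($r=b^+$) this ratio equals $0$ when $\pi=1/b^+(z,\mathbf{x})$. The stated condition permits this, and it violates $b^-(z',\mathbf{x})>0$.
\item On the upper tail ($r=b^-$) the ratio can exceed $b^+(z',\mathbf{x})$.
\end{itemize}

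This is not an artifact of the proportional rule. For binary $Z$ the split is forced: $\tilde\pi^g(z'\mid\mathbf{x},y)=1-\tilde\pi^g(z\mid\mathbf{x},y)$. Take $\pi^g(z\mid\mathbf{x})=1/2$ and $b^+\equiv 2$, $b^-\equiv 1/2$ at both levels, so $1/b^+\ge\pi^g$ holds. The constraint at $z'$ forces $\tilde\pi^g(z\mid\mathbf{x},y)\le 3/4$. Hence every admissible weight satisfies $w\ge 2/3$, and
$\inf_{\tilde P\in\mathcal{M}}F_{\tilde P}(y)\ge\tfrac{2}{3}F(y)>\tfrac{1}{2}F(y)=F^+(y)$ whenever $0<F(y)\le\alpha^+=2/3$.
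So under your reading, the sharpness claim for discrete $Z$ is false, not merely unproven.

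\textbf{How to close Step 3.} You must either change the model or add a hypothesis.
\begin{itemize}
\item[(a)] Let the constraint at level $z'$ condition on that level's own potential outcomes, or let $\mathcal{M}$ constrain only the queried level. Then draw all potential outcomes other than $Y(t,z)$ from their observed laws, independently of $(T,Z,Y(t,z))$ given $\mathbf{X}$. Their ratios are identically $1$, any nonnegative split of the leftover mass $1-\pi^g(z\mid\mathbf{x})/w^+(y)$ is admissible, and $1/b^+(z,\mathbf{x})\ge\pi^g(z\mid\mathbf{x})$ is exactly the condition needed.
\item[(b)] If you keep common conditioning, you need the compatibility condition
$\sum_{z'\neq z}b^-(z',\mathbf{x})\pi^g(z'\mid\mathbf{x})\le 1-r\,\pi^g(z\mid\mathbf{x})\le\sum_{z'\neq z}b^+(z',\mathbf{x})\pi^g(z'\mid\mathbf{x})$ for $r\in\{b^-(z,\mathbf{x}),b^+(z,\mathbf{x})\}$.
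This is necessary and sufficient for a feasible split. It holds automatically for MSM-type choices $b^+=1/((1-\Gamma^{-1})\pi^g+\Gamma^{-1})$, $b^-=1/((1-\Gamma)\pi^g+\Gamma)$ with a common $\Gamma$, as in the cited result. It does not hold for the arbitrary user-specified $b^\pm$ allowed here.
\end{itemize}

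\textbf{Continuous $Z$.} The operative reason is not that densities may exceed $1$; it is that $\{z\}$ is a null set. You can alter $\tilde\pi^g(\cdot\mid\mathbf{x},y)$ at $z$ alone and leave it equal to $\pi^g(\cdot\mid\mathbf{x})$ elsewhere. This affects neither normalization nor the constraints at other levels.
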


\subsection{Proof of Theorem \ref{thm:plug_in_bounds}}

\bounds*
\begin{proof}
    Throughout the proof we focus on the upper bound for continuous outcomes. The other cases follow analogously.
    Recall the definition of
\begin{align}
    Q^{\pm}(t,z,\mathbf{x}) := \inf \Bigg \{y \mid F_Y(y \mid t,z,\mathbf{x}) \geq \frac{(1-b^{\mp}(z,\mathbf{x}))b^{\pm}(z,\mathbf{x})}{b^{\pm}(z,\mathbf{x}) - b^{\mp}(z,\mathbf{x})} \Bigg \},
\end{align}
when $b^-(z,\mathbf{x})<1<b^+(z,\mathbf{x})$, and $Q^{\pm}(t,z,\mathbf{x}) = Q(t,z,\mathbf{x}) := \inf \{y \mid F_Y(y \mid t,z,\mathbf{x}) \geq \frac{1}{2}\}$ otherwise.

By applying Theorem~\ref{thm:gmsm_bounds}, the sharp upper and lower bounds on the conditional potential outcome $\mu(t,z,\mathbf{x})$ are given by
\begin{align}
    \mu^{\pm}(t,z,\mathbf{x})= &\frac{1}{b^{\pm}(z,\mathbf{x})}\int_{-\infty}^{Q^{\pm}(z,\mathbf{x})}y \, \mathrm{d}\mu
    + \frac{1}{b^{\mp}(z,\mathbf{x})}\int_{Q^{\pm}(z,\mathbf{x})}^{\infty}y \,\mathrm{d}\mu\\
    = &\frac{1}{b^{\pm}(z,\mathbf{x})} \cdot \alpha^{\pm} \text{LCTE}_{\alpha}^{\pm}(t,z,\mathbf{x})
    + \frac{1}{b^{\mp}(z,\mathbf{x})} \cdot (1-\alpha^{\pm}) \text{CVaR}_{\alpha}^{\pm}(t,z,\mathbf{x})
\end{align}
where we define $\alpha^{\pm} := \frac{(1-b^{\mp}(z,\mathbf{x}))b^{\pm}(z,\mathbf{x})}{b^{\pm}(z,\mathbf{x}) - b^{\mp}(z,\mathbf{x})}$. Here, the $\text{CVaR}^{\pm}$ denotes the \emph{conditional value at risk} at level $\alpha^{\pm}$  with corresponding quantiles $Q^{+}(t,z,\mathbf{x})/Q^{-}(t,z,\mathbf{x})$ defined as

\footnotesize
\begin{align}
    \text{CVaR}_{\alpha}^{+}(t,z,\mathbf{x}):= 
    \min_{q\in\mathbb{R}} \Bigl\{\,q + \frac{1}{1-\alpha^+}\,\mathbb{E}\bigl[\,(Y - q)_+ \mid t,z,\mathbf{x}\bigr]\Bigr\}
    = Q^{+}(t,z,\mathbf{x}) + \frac{b^{-} - b^{+}}{(1-b^{+})b^{-}}\mathbb{E}\bigl[ (Y-Q^{+}(t,z,\mathbf{x}))_{+} \mid t,z,\mathbf{x}\bigr],\\
    \text{CVaR}_{\alpha}^{-}(t,z,\mathbf{x}) := \min_{q\in\mathbb{R}} \Bigl\{\,q + \frac{1}{1-\alpha^-}\,\mathbb{E}\bigl[\,(Y - q)_+ \mid t,z,\mathbf{x}\bigr]\Bigr\} = Q^{-}(t,z,\mathbf{x}) + \frac{b^{+} - b^{-}}{(1-b^{-})b^{+}}\mathbb{E}\bigl[ (Y-Q^{-}(t,z,\mathbf{x}))_{+} \mid t,z,\mathbf{x}\bigr] 
\end{align}
\normalsize
where \((u)_+ = \max\{u,0\}\),
and $\text{LCTE}^{\pm}$ the \emph{lower conditional tail expectation} at level $\alpha^{\pm}$ with corresponding quantiles $Q^{+}(t,z,\mathbf{x}))/Q^{-}(t,z,\mathbf{x}))$ defined as

\footnotesize
\begin{equation}
\begin{aligned}
    \text{LCTE}_{\alpha}^{+}(t,z,\mathbf{x}) :=& \sup_{q\in\mathbb{R}}
        \Bigl\{\,
        q \;-\;\frac{1}{\alpha^+}\,\mathbb{E}\bigl[(\,q - Y\,)_+ \mid t,z,\mathbf{x}\bigr]
        \Bigr\}\\
    =& Q^{+}(t,z,\mathbf{x})) - \frac{b^{+}(z,\mathbf{x}) - b^{-}(z,\mathbf{x})}{(1-b^{-}(z,\mathbf{x}))b^{+}(z,\mathbf{x})}\mathbb{E}\bigl[ (Q^{+}(t,z,\mathbf{x})) - Y)_{+} \mid t,z,\mathbf{x}\bigr],
    \end{aligned}
\end{equation}
\begin{equation}
    \begin{aligned}
    \text{LCTE}_{\alpha}^{-}(t,z,\mathbf{x}) :=& \sup_{q\in\mathbb{R}}
        \Bigl\{\,
        q \;-\;\frac{1}{\alpha^-}\,\mathbb{E}\bigl[(\,q - Y \,)_+ \mid t,z,\mathbf{x}\bigr]
        \Bigr\}\\
    =& Q^{-}(t,z,\mathbf{x})) - \frac{b^{-}(z,\mathbf{x}) - b^{+}(z,\mathbf{x})}{(1-b^{+}(z,\mathbf{x}))b^{-}(z,\mathbf{x})} \mathbb{E}\bigl[ (Q^{-}(t,z,\mathbf{x})) - Y)_{+} \mid t,z,\mathbf{x}\bigr].
    \end{aligned}
\end{equation}
\normalsize
With these reformulations of CVaR and LCTE then follows the desired result
\begin{align}
        \mu^{\pm}(t,z, \mathbf{x}) = Q^{\pm}(t,z,\mathbf{x}) + \frac{1}{b^{\mp}(z,\mathbf{x})}\mathbb{E}\bigl[(Y-Q^{\pm}(t,z,\mathbf{x}))_{+}\mid t,z,\mathbf{x}\bigr] - \frac{1}{b^{\pm}(z,\mathbf{x})}\mathbb{E}\bigl[ (Q^{\pm}(t,z,\mathbf{x})-Y)_{+} \mid t,z,\mathbf{x} \bigr].
\end{align}
\end{proof}

\subsection{Proof of Theorem \ref{thm:orthogonal_bounds}}

{
\renewcommand{\vspace}[1]{}
\renewcommand{\addvspace}[1]{}
\orthobounds*
}

\begin{proof}
We begin with the case where $Z$ is discrete (including binary), so $\omega_{z,h}(Z)=\mathbf{1}_{[Z=z]}$.
We discuss the continuous-$Z$ modification at the end.

Fix $(t,z)$ and abbreviate
\begin{align}
p(t,z\mid \mathbf{X}) := \pi^t(\mathbf{X})\,\pi^g(z\mid \mathbf{X}),\qquad
\alpha := \alpha^+(z,\mathbf{X}),\qquad
Q := Q^+(t,z,\mathbf{X}).
\end{align}
Define the (unnormalized) conditional tail moments
\begin{align}
\gamma_u := \gamma_u^+(t,z,\mathbf{X})
:= \mathbb{E}\!\left[(Y-Q)_+ \mid T=t,Z=z,\mathbf{X}\right],\qquad
\gamma_l := \gamma_l^+(t,z,\mathbf{X})
:= \mathbb{E}\!\left[(Q-Y)_+ \mid T=t,Z=z,\mathbf{X}\right].
\end{align}
The sharp upper bound can be written as
\begin{equation}\label{eq:muplus_gamma_form}
\mu^+(t,z,\mathbf{X})
=
Q + \frac{\gamma_u}{b^{-}(z,\mathbf{X})}
- \frac{\gamma_l}{b^{+}(z,\mathbf{X})},
\end{equation}
which matches the first line of Eq.~\eqref{eq:pseudo_outcome_unified} when $\widehat\eta=\eta$.

\paragraph{Step 1: Reparameterization of $\mu^+$ as a convex combination of CVaR/LCTE functionals.}
Define the upper-tail and lower-tail pseudo-outcomes at level $\alpha$ (see, e.g., \citet{Dorn.2025,Oprescu.2023})
\begin{align}
H_u(y,q) := q + \frac{1}{1-\alpha}(y-q)_+,
\qquad
H_l(y,q) := q - \frac{1}{\alpha}(q-y)_+.
\end{align}
Their conditional expectations at the true quantile $Q$ are the conditional upper CVaR and lower conditional tail expectation (LCTE), respectively:
\begin{align}
\theta_u(\mathbf{X})
:= \mathbb{E}\!\left[H_u(Y,Q)\mid T=t,Z=z,\mathbf{X}\right]
= Q + \frac{1}{1-\alpha}\gamma_u,
\qquad
\theta_l(\mathbf{X})
:= \mathbb{E}\!\left[H_l(Y,Q)\mid T=t,Z=z,\mathbf{X}\right]
= Q - \frac{1}{\alpha}\gamma_l.
\end{align}
Now set the weights
\begin{align}
w_u(\mathbf{X}) := \frac{1-\alpha}{b^{-}(z,\mathbf{X})},
\qquad
w_l(\mathbf{X}) := \frac{\alpha}{b^{+}(z,\mathbf{X})}.
\end{align}
By the definition of $\alpha^+(z,\mathbf{X})$, one has
\begin{equation}\label{eq:alpha_identity}
w_u(\mathbf{X}) + w_l(\mathbf{X})
=
\frac{1-\alpha}{b^{-}(z,\mathbf{X})} + \frac{\alpha}{b^{+}(z,\mathbf{X})}
= 1.
\end{equation}
Therefore,
\begin{align}
w_u(\mathbf{X})\theta_u(\mathbf{X}) + w_l(\mathbf{X})\theta_l(\mathbf{X})
=
(w_u+w_l)Q + \frac{w_u}{1-\alpha}\gamma_u - \frac{w_l}{\alpha}\gamma_l
=
Q + \frac{1}{b^{-}}\gamma_u - \frac{1}{b^{+}}\gamma_l
=
\mu^+(t,z,\mathbf{X}),
\end{align}
so $\mu^+$ is a (convex) linear combination of the two tail functionals.

\paragraph{Step 2: Recentered efficient influence function for $\mu^+$.}
Our orthogonal pseudo-outcome is the recentered efficient influence function (REIF) of $\mu^+(t,z,\mathbf{X})$.
Since $w_u(\mathbf{X}),w_l(\mathbf{X})$ are known functions of $(b^\pm,\alpha)$ (hence fixed with respect to the data-generating distribution),
linearity of REIFs implies
\begin{equation}\label{eq:reif_linearity}
\phi^+_{t,z}(S;\eta)
:= \mathrm{REIF}(\mu^+(t,z,\mathbf{X}))
=
w_u(\mathbf{X})\,\phi_u(S;\eta) + w_l(\mathbf{X})\,\phi_l(S;\eta),
\end{equation}
where $\phi_u(S;\eta):=\mathrm{REIF}(\theta_u(\mathbf{X}))$ and $\phi_l(S;\eta):=\mathrm{REIF}(\theta_l(\mathbf{X}))$.

Define the selection weight
\begin{align}
\kappa_{t,z}(S)
:=
\frac{\mathbf{1}_{[T=t]}\,\mathbf{1}_{[Z=z]}}{\pi^t(\mathbf{X})\,\pi^g(z\mid \mathbf{X})}.
\end{align}
By the known REIFs for conditional CVaR/LCTE functionals (e.g., \citet{Dorn.2025,Oprescu.2023}),
\begin{equation}\label{eq:reif_thetas}
\phi_u(S;\eta)=\theta_u(\mathbf{X})+\kappa_{t,z}(S)\bigl(H_u(Y,Q)-\theta_u(\mathbf{X})\bigr),\quad
\phi_l(S;\eta)=\theta_l(\mathbf{X})+\kappa_{t,z}(S)\bigl(H_l(Y,Q)-\theta_l(\mathbf{X})\bigr).
\end{equation}
Moreover, these REIFs are \emph{orthogonal with respect to} $Q$: the cutoff $Q$ is characterized as the optimizer of the
corresponding tail objective (equivalently, the Rockafellar--Uryasev CVaR variational form), so the envelope/first-order
condition yields $\partial_q \mathbb{E}[H_u(Y,q)\mid t,z,\mathbf{X}]|_{q=Q}=0$ and
$\partial_q \mathbb{E}[H_l(Y,q)\mid t,z,\mathbf{X}]|_{q=Q}=0$ (see \citet{Dorn.2025,Oprescu.2023}).

Finally, substituting \eqref{eq:reif_thetas} into \eqref{eq:reif_linearity}, using
$\theta_u(\mathbf{X})=Q+\gamma_u/(1-\alpha)$ and $\theta_l(\mathbf{X})=Q-\gamma_l/\alpha$, and simplifying with
$w_u/(1-\alpha)=1/b^{-}$ and $w_l/\alpha=1/b^{+}$ yields exactly Eq.~\eqref{eq:pseudo_outcome_unified}.

\paragraph{Step 3: Unbiasedness and orthogonality.}
Orthogonality (Neyman-orthogonality) follows because $\phi^+_{t,z}$ is a linear combination of orthogonal REIFs for
$\theta_u$ and $\theta_l$ (linearity preserves orthogonality), and because $\theta_u,\theta_l$ themselves are orthogonal both to
the selection nuisance $(\pi^t,\pi^g)$ and to the regression nuisances via the standard conditional-mean EIF from Eq.~(\eqref{eq:reif_thetas}). Orthogonality with respect to $Q$ is guaranteed by the envelope/first-order condition (FOC) argument above.

Unbiasedness follows by iterated expectations: conditional on $\mathbf{X}$,
{
\small
\begin{align}
\mathbb{E}\!\left[
\frac{\mathbf{1}_{[T=t]}\mathbf{1}_{[Z=z]}}{p(t,z\mid\mathbf{X})}
\left\{ \frac{(Y-Q)_+-\gamma_u}{b^{-}} - \frac{(Q-Y)_+-\gamma_l}{b^{+}}\right\}
\Bigm|\mathbf{X}\right]
=
\mathbb{E}\!\left[\frac{(Y-Q)_+-\gamma_u}{b^{-}}-\frac{(Q-Y)_+-\gamma_l}{b^{+}}\Bigm|T=t,Z=z,\mathbf{X}\right]
=0,
\end{align}
\normalsize
}
so $\mathbb{E}[\phi^+_{t,z}(S;\eta)\mid \mathbf{X}]=\mu^+(t,z,\mathbf{X})$. This completes the proof for discrete $Z$.

\paragraph{Continuous $Z$.}
When $Z$ is continuous, evaluation at $Z=z$ is not pathwise differentiable.
We instead use kernel localization: replace $\mathbf{1}_{[Z=z]}$ in $\kappa_{t,z}$ by $\omega_{z,h}(Z)=K_h(Z-z)$ and replace the pmf
$\pi^g(z\mid \mathbf{X})$ by the conditional density $\pi^g(Z\mid \mathbf{X})$ to define the localized weight
\begin{align}
\kappa_{t,z,h}(S):=\frac{\mathbf{1}_{[T=t]}\,K_h(Z-z)}{\pi^t(\mathbf{X})\,\pi^g(Z\mid \mathbf{X})}.
\end{align}
Then Eq.~\eqref{eq:reif_thetas} and the linearity relation from Eq.~\eqref{eq:reif_linearity} hold verbatim with $\kappa_{t,z}$ replaced by
$\kappa_{t,z,h}$, yielding the localized pseudo-outcome in Eq.~\eqref{eq:pseudo_outcome_unified}. The same iterated-expectations
argument gives $\mathbb{E}[\phi^+_{t,z,h}(S;\eta)\mid \mathbf{X}]=\mu_h^+(t,z,\mathbf{X})$, and under standard smoothness in $z$,
$\mu_h^+(t,z,\mathbf{X})\to \mu^+(t,z,\mathbf{X})$ as $h\downarrow 0$.
\end{proof}

\subsection{Proof of Theorem \ref{thm:second_order_remainder}}

{
\renewcommand{\vspace}[1]{}
\renewcommand{\addvspace}[1]{}
\nuisanceReminder*
}

\begin{proof}
We prove the statement for discrete $Z$. Throughout, fix $(t,z)$ and suppress $(t,z)$ in the notation whenever clear.
Because we use $K$-fold cross-fitting, for any observation in a held-out fold the nuisance estimates
$\widehat\eta=(\widehat\pi^t,\widehat\pi^g,\widehat Q^+,\widehat\gamma_u^+,\widehat\gamma_l^+)$ are functions of the
training folds only; hence, when taking expectations over the held-out fold, we may treat $\widehat\eta$ as fixed (formally,
condition on the training sample).

Let $A:=\mathbf{1}_{[T=t]}\mathbf{1}_{[Z=z]}$ and write the true and estimated joint propensities as
\begin{align}
\pi(\mathbf{X}) := \pi^t(\mathbf{X})\pi^g(z\mid\mathbf{X}),\qquad
\widehat\pi(\mathbf{X}) := \widehat\pi^t(\mathbf{X})\widehat\pi^g(z\mid\mathbf{X}).
\end{align}
Also denote the (population) conditional means at an arbitrary cutoff $\widehat Q$:
\begin{align}
\gamma_u(\widehat Q; \mathbf{X}) := \mathbb{E}\!\left[(Y-\widehat Q(\mathbf{X}))_+\mid T=t,Z=z,\mathbf{X}\right],\quad
\gamma_l(\widehat Q; \mathbf{X}) := \mathbb{E}\!\left[(\widehat Q(\mathbf{X})-Y)_+\mid T=t,Z=z,\mathbf{X}\right] 
\end{align}
with $\gamma_u(Q^+; \mathbf{X}) = \gamma_u(\mathbf{X})$ and $\gamma_l(Q^+; \mathbf{X}) = \gamma_l(\mathbf{X})$.

\paragraph{Step 1: Conditional expectation of the estimated pseudo-outcome.}
For discrete $Z$, the pseudo-outcome simplifies (since $A$ forces $Z=z$ inside the square bracket) to
\begin{align}
\phi(S;\widehat\eta)
&=
\widehat Q(\mathbf{X})
+\frac{\widehat\gamma_u(\mathbf{X})}{b^-(z,\mathbf{X})}
-\frac{\widehat\gamma_l(\mathbf{X})}{b^+(z,\mathbf{X})}
+\frac{A}{\widehat\pi(\mathbf{X})}
\Bigg[
\frac{(Y-\widehat Q(\mathbf{X}))_+ - \widehat\gamma_u(\mathbf{X})}{b^-(z,\mathbf{X})}
-\frac{(\widehat Q(\mathbf{X})-Y)_+ - \widehat\gamma_l(\mathbf{X})}{b^+(z,\mathbf{X})}
\Bigg].
\end{align}
Taking conditional expectations given $\mathbf{X}$ and using $\mathbb{E}[A\mid \mathbf{X}] = \pi(\mathbf{X})$ yields
\begin{align}
\mathbb{E}\!\left[\phi(S;\widehat\eta)\mid \mathbf{X}\right]
&=
\widehat Q(\mathbf{X})
+\frac{\widehat\gamma_u(\mathbf{X})}{b^-(z,\mathbf{X})}
-\frac{\widehat\gamma_l(\mathbf{X})}{b^+(z,\mathbf{X})}
+\frac{\pi(\mathbf{X})}{\widehat\pi(\mathbf{X})}
\Bigg[
\frac{\gamma_u(\widehat Q^+; \mathbf{X}) - \widehat\gamma_u(\mathbf{X})}{b^-(z,\mathbf{X})}
-\frac{\gamma_l(\widehat Q^+; \mathbf{X}) - \widehat\gamma_l(\mathbf{X})}{b^+(z,\mathbf{X})}
\Bigg]\\
& = \underbrace{\widehat Q(\mathbf{X})
+\frac{\gamma_u(\widehat Q^+; \mathbf{X})}{b^-(z,\mathbf{X})}
-\frac{\gamma_l(\widehat Q^+; \mathbf{X})}{b^+(z,\mathbf{X})}}_{=:~\mu_{\widehat Q^+}(\mathbf{X})}
+\left(\frac{\pi(\mathbf{X})}{\widehat\pi(\mathbf{X})}-1\right)
\Bigg[
\frac{\gamma_u(\widehat Q^+; \mathbf{X}) - \widehat\gamma_u(\mathbf{X})}{b^-(z,\mathbf{X})}
-\frac{\gamma_l(\widehat Q^+; \mathbf{X}) - \widehat\gamma_l(\mathbf{X})}{b^+(z,\mathbf{X})}
\Bigg]
\end{align}

Moreover, by Theorem~\ref{thm:orthogonal_bounds} (applied with true nuisances), we yield
\begin{align}
\mathbb{E}\!\left[\phi(S;\eta)\mid \mathbf{X}\right] = \mu^+(\mathbf{X})
:= Q(\mathbf{X}) + \frac{\gamma_u(\mathbf{X})}{b^-(z,\mathbf{X})} - \frac{\gamma_l(\mathbf{X})}{b^+(z,\mathbf{X})}.
\end{align}

Thus, we arrive at
\begin{align}
\mathbb{E}\!\left[\phi(S;\widehat\eta)-\phi(S;\eta)\mid \mathbf{X}\right]
&=
\mu_{\widehat Q^+}(\mathbf{X})-\mu^+(\mathbf{X})
+
\left(\frac{\pi(\mathbf{X})}{\widehat\pi(\mathbf{X})}-1\right)
\Bigg[
\frac{\gamma_u(\widehat Q^+; \mathbf{X}) - \widehat\gamma_u(\mathbf{X})}{b^-(z,\mathbf{X})}
-\frac{\gamma_l(\widehat Q^+; \mathbf{X}) - \widehat\gamma_l(\mathbf{X})}{b^+(z,\mathbf{X})}
\Bigg],\label{eq:remainder_decomp}
\end{align}
and
\begin{align}
    \left\| \mathbb{E}\!\left[\phi(S;\widehat\eta)-\phi(S;\eta)\mid \mathbf{X}\right]\right\|_2\le 
    \underbrace{\left\| \mu_{\widehat Q}(\mathbf{X})-\mu^+(\mathbf{X})\right\|_2}_{\text{cutoff-induced error}} + \underbrace{\left\| \frac{\pi(\mathbf{X})}{\widehat\pi(\mathbf{X})}-1\right\|_2 \left\|
\frac{\gamma_u(\widehat Q^+; \mathbf{X}) - \widehat\gamma_u(\mathbf{X})}{b^-(z,\mathbf{X})}
-\frac{\gamma_l(\widehat Q^+; \mathbf{X}) - \widehat\gamma_l(\mathbf{X})}{b^+(z,\mathbf{X})}
\right\|_2}_{\text{propensity $\times$ regression product term}}
\end{align}
where the last inequality is due to the triangle inequality and Cauchy–Schwarz inequality.

\paragraph{Step 2: Bounding the product term by $O_p(r_{n,\pi}r_{n,\gamma})$.}
By Assumption~\ref{assump:overlap_bounded}, $\widehat\pi(\mathbf{X})\ge \varepsilon$ a.s., hence
\begin{align}
\left\|\frac{\pi(\mathbf{X})}{\widehat\pi(\mathbf{X})}-1\right\|_2
=\left\|\frac{\pi(\mathbf{X})-\widehat\pi(\mathbf{X})}{\widehat\pi(\mathbf{X})}\right\|_2
\le \varepsilon^{-1}\|\widehat\pi-\pi\|_2.
\end{align}
Since $\pi=\pi^t\pi^g$ and $\widehat\pi=\widehat\pi^t\widehat\pi^g$,
\begin{align}
\widehat\pi-\pi=(\widehat\pi^t-\pi^t)\widehat\pi^g+\pi^t(\widehat\pi^g-\pi^g),
\end{align}
so by the triangle inequality and $0\le \pi^t,\widehat\pi^g\le 1$ (discrete $Z$),
\begin{align}
\|\widehat\pi-\pi\|_2 \le \|\widehat\pi^t-\pi^t\|_2+\|\widehat\pi^g-\pi^g\|_2.
\end{align}
Therefore
\begin{align}
\left\|\frac{\pi}{\widehat\pi}-1\right\|_2
\le \varepsilon^{-1}\Big(\|\widehat\pi^t-\pi^t\|_2+\|\widehat\pi^g-\pi^g\|_2\Big)
\end{align}
and it remains to bound the second factor. Since $b^{-}(z,\mathbf{X})$ is bounded away from $0$, we have
\begin{align}
\left\|
\frac{\gamma_u(\widehat Q^+; \mathbf{X}) - \widehat\gamma_u(\mathbf{X})}{b^-(z,\mathbf{X})}
-\frac{\gamma_l(\widehat Q^+; \mathbf{X}) - \widehat\gamma_l(\mathbf{X})}{b^+(z,\mathbf{X})}
\right\|_2
\le
\varepsilon^{-1}\Big(
\|\widehat\gamma_u-\gamma_u(\widehat Q^+;\cdot)\|_2
+
\|\widehat\gamma_l-\gamma_l(\widehat Q^+;\cdot)\|_2
\Big),
\end{align}
by the triangle inequality. Combining with the previous inequality yields
\begin{align}
\underbrace{\left\|\frac{\pi}{\widehat\pi}-1\right\|_2}_{=\,O_p(r_{n,\pi})}\cdot
\underbrace{\left\|
\frac{\gamma_u(\widehat Q^+; \mathbf{X}) - \widehat\gamma_u(\mathbf{X})}{b^-}
-\frac{\gamma_l(\widehat Q^+; \mathbf{X}) - \widehat\gamma_l(\mathbf{X})}{b^+}
\right\|_2}_{=\,O_p(r_{n,\gamma})}
=O_p(r_{n,\pi}r_{n,\gamma}),
\end{align}
where the second $O_p(r_{n,\gamma})$ is by definition of $r_{n,\gamma}$ (as the $L_2$ rate for estimating the conditional tail means at the cutoff used in the pseudo-outcome).

\paragraph{Step 3: Bounding the cutoff-induced term by $O_p(r_{n,Q}^2)$.}
We now need to control the term $\mu_{\widehat Q^+}(\mathbf{X})-\mu^+(\mathbf{X})$.
Fix $(t,z)$ and $\mathbf{x}$, and define the scalar function
\begin{align}
\mathcal{L}_{\mathbf{x}}(q)
:= q
+ \frac{1}{b^-(z,\mathbf{x})}\,\mathbb{E}\!\left[(Y-q)_+\mid T=t,Z=z,\mathbf{X}=\mathbf{x}\right]
-\frac{1}{b^+(z,\mathbf{x})}\,\mathbb{E}\!\left[(q-Y)_+\mid T=t,Z=z,\mathbf{X}=\mathbf{x}\right].
\end{align}
By construction,
\begin{align}
\mu_{\widehat Q^+}(\mathbf{x})=\mathcal{L}_{\mathbf{x}}(\widehat Q^+(t,z,\mathbf{x})),
\qquad
\mu^+(\mathbf{x})=\mathcal{L}_{\mathbf{x}}(Q^+(t,z,\mathbf{x})),
\end{align}
where $Q^+(t,z,\mathbf{x})$ is the optimal cutoff from Theorem~\ref{thm:plug_in_bounds}.

Assume (as is standard for quantile/CVaR-style expansions) the conditional CDF $F_{Y\mid t,z,\mathbf{x}}$ is differentiable in a neighborhood of $Q^+(t,z,\mathbf{x})$
with density $f_{Y\mid t,z,\mathbf{x}}$ bounded by $\bar f<\infty$. Then, $\mathcal{L}_{\mathbf{x}}$ is differentiable and
\begin{align}
\mathcal{L}'_{\mathbf{x}}(q)
= 1 - \frac{1-F_{Y\mid t,z,\mathbf{x}}(q)}{b^-(z,\mathbf{x})}
-\frac{F_{Y\mid t,z,\mathbf{x}}(q)}{b^+(z,\mathbf{x})}.
\end{align}
Moreover, $\mathcal{L}'_{\mathbf{x}}$ is Lipschitz with
\begin{align}
|\mathcal{L}''_{\mathbf{x}}(q)|
= \left|\left(\frac{1}{b^-(z,\mathbf{x})}-\frac{1}{b^+(z,\mathbf{x})}\right) f_{Y\mid t,z,\mathbf{x}}(q)\right|
\le \bar f\left(\frac{1}{b^-(z,\mathbf{x})}+\frac{1}{b^+(z,\mathbf{x})}\right)
\le L,
\end{align}
for a finite constant $L$ (uniform in $\mathbf{x}$ by Assumption~\ref{assump:overlap_bounded}).

By optimality of $Q^+(t,z,\mathbf{x})$ for $\mathcal{L}_{\mathbf{x}}$, we have
$\mathcal{L}'_{\mathbf{x}}(Q^+(t,z,\mathbf{x}))=0$. Therefore, by the fundamental theorem of calculus,
\begin{align*}
\big|\mathcal{L}_{\mathbf{x}}(\widehat Q^+(t,z,\mathbf{x}))-\mathcal{L}_{\mathbf{x}}(Q^+(t,z,\mathbf{x}))\big|
&=
\left|\int_{Q^+(t,z,\mathbf{x})}^{\widehat Q^+(t,z,\mathbf{x})}
\big(\mathcal{L}'_{\mathbf{x}}(u)-\mathcal{L}'_{\mathbf{x}}(Q^+(t,z,\mathbf{x}))\big)\,\diff u\right|\\
&\le
\int_{Q^+(t,z,\mathbf{x})}^{\widehat Q^+(t,z,\mathbf{x})} L\,|u-Q^+(t,z,\mathbf{x})|\,\diff u\\
&\le \frac{L}{2}\,|\widehat Q^+(t,z,\mathbf{x})-Q^+(t,z,\mathbf{x})|^2.
\end{align*}
Taking $L_2(P_{\mathbf{X}})$ norms yields
\begin{align}
\|\mu_{\widehat Q^+}-\mu^+\|_2
= O_p\!\left(\|\widehat Q^+-Q^+\|_2^2\right)
= O_p(r_{n,Q}^2).
\end{align}
\paragraph{Conclusion.}
Combining Step 2 and Step 3 in the decomposition from Eq.~\eqref{eq:remainder_decomp} yields
\begin{align}
\left\|\mathbb{E}\!\left[\phi^+_{t,z}(S;\widehat\eta)-\phi^+_{t,z}(S;\eta)\mid \mathbf{X}\right]\right\|_2
=O_p\!\left(r_{n,\pi}\,r_{n,\gamma}+r_{n,Q}^2\right),
\end{align}
which is exactly Eq.~\eqref{eq:remainder_bound}.
\end{proof}


\subsection{Proof of Corollary \ref{cor:quasi_oracle_and_inference}}

{
\renewcommand{\vspace}[1]{}
\renewcommand{\addvspace}[1]{}
\quasiOracleRates*
}

\begin{proof}
We prove the CAPO and APO statements for the upper bound; the lower-bound case follows by the same argument with the sign-swapped pseudo-outcome.

\paragraph{CAPO bound rate.}
Let $m^+_{t,z}(\mathbf{x}):=\mathbb{E}[\widehat\phi^+_{t,z}\mid \mathbf{X}=\mathbf{x}]$ denote the conditional mean of
the (cross-fitted) pseudo-outcome. By Assumption~\ref{assump:second_stage},
\begin{align}
\|\widehat\mu^+(t,z,\cdot)-m^+_{t,z}(\cdot)\|_2 = O_p(\delta_n).
\end{align}
By the triangle inequality,
\begin{align}
\|\widehat\mu^+(t,z,\cdot)-\mu^+(t,z,\cdot)\|_2
\le
\|\widehat\mu^+(t,z,\cdot)-m^+_{t,z}(\cdot)\|_2
+\|m^+_{t,z}(\cdot)-\mu^+(t,z,\cdot)\|_2.
\end{align}
The second term is precisely the conditional bias induced by nuisance estimation. Applying
Theorem~\ref{thm:second_order_remainder} yields
\begin{align}
\|m^+_{t,z}(\cdot)-\mu^+(t,z,\cdot)\|_2
=
O_p(r_{n,\pi}\,r_{n,\gamma}+r_{n,Q}^2).
\end{align}
Combining the two bounds gives the stated CAPO rate.

\paragraph{APO rate and asymptotic normality.}
Recall $\widehat\psi^+(t,z)=\mathbb{E}_n[\widehat\phi^+_{t,z}]$. Decompose
\begin{align}
\widehat\psi^+(t,z)-\psi^+(t,z)
=
(\mathbb{E}_n-\mathbb{E})[\phi^+_{t,z}(S;\eta)]
+\mathbb{E}\!\left[\phi^+_{t,z}(S;\widehat\eta)-\phi^+_{t,z}(S;\eta)\right]
+R_n,
\end{align}
where $R_n:=(\mathbb{E}_n-\mathbb{E})[\phi^+_{t,z}(S;\widehat\eta)-\phi^+_{t,z}(S;\eta)]$ is an empirical-process term.
Under Assumption~\ref{assump:overlap_bounded}(iii), $\phi^+_{t,z}(S;\cdot)$ is uniformly bounded, and with cross-fitting
$R_n=o_p(n^{-1/2})$ by standard arguments (conditioning on training folds and applying Hoeffding/Bernstein inequalities).

The first term is $O_p(n^{-1/2})$ by the CLT.
The second term is bounded by Theorem~\ref{thm:second_order_remainder} (after integrating over $\mathbf{X}$), yielding
\begin{align}
\left|\mathbb{E}\!\left[\phi^+_{t,z}(S;\widehat\eta)-\phi^+_{t,z}(S;\eta)\right]\right|
=O_p(r_{n,\pi}\,r_{n,\gamma}+r_{n,Q}^2).
\end{align}
This proves Eq.~\eqref{eq:apo_asymp_linear}. If additionally $r_{n,\pi}r_{n,\gamma}+r_{n,Q}^2=o_p(n^{-1/2})$, then the nuisance-induced bias term and $R_n$ are
$o_p(n^{-1/2})$, hence
\begin{align}
\sqrt{n}\big(\widehat\psi^+(t,z)-\psi^+(t,z)\big)
=
\sqrt{n}(\mathbb{E}_n-\mathbb{E})[\phi^+_{t,z}(S;\eta)]
+o_p(1)
\ \rightsquigarrow\
\mathcal{N}(0,V^+(t,z)),
\end{align}
with $V^+(t,z)=\mathrm{Var}(\phi^+_{t,z}(S;\eta))$.
\end{proof}


\subsection{Proof of Proposition \ref{prop:sharpness}}

\sharpness*

\begin{proof}
We show the claim for the CAPO upper bound; the other bounds (CAPO lower, APO upper/lower) follow similarly.

By Corollary~\ref{cor:quasi_oracle_and_inference},
\begin{align}
\|\widehat\mu^+(t,z,\cdot)-\mu^+(t,z,\cdot)\|_2
=O_p\!\left(\delta_n+r_{n,\pi}r_{n,\gamma}+r_{n,Q}^2\right).
\end{align}
Under the proposition assumptions, $\delta_n=o_p(1)$ and $r_{n,Q}=o_p(1)$. Moreover, if either $r_{n,\pi}=o_p(1)$ or
$r_{n,\gamma}=o_p(1)$, then $r_{n,\pi}r_{n,\gamma}=o_p(1)$. Therefore, the right-hand side is $o_p(1)$, implying
$\|\widehat\mu^+(t,z,\cdot)-\mu^+(t,z,\cdot)\|_2=o_p(1)$.

For APOs, the corresponding statement follows from the APO rate in
Corollary~\ref{cor:quasi_oracle_and_inference} and the same convergence of the remainder.
Finally, repeating the same argument for the lower bound (using its analogous pseudo-outcome) establishes convergence of
both endpoints and, hence, convergence of the estimated intervals to the sharp identified intervals.
\end{proof}


\subsection{Proof of Corollary \ref{cor:validity}}

{
\renewcommand{\vspace}[1]{}
\renewcommand{\addvspace}[1]{}
\validity*
}

\begin{proof}
We prove the CAPO claim; the APO claim follows by taking expectations over $\mathbf{X}$.

\paragraph{Step 1: Any cutoff induces a valid (conservative) interval.}
Fix $(t,z,\mathbf{x})$ and define for any scalar cutoff $q$ the upper and lower tail objectives
\begin{align}
\mathcal{L}^+_{\mathbf{x}}(q)
:= q + \frac{1}{b^-(z,\mathbf{x})}\mathbb{E}\!\left[(Y-q)_+\mid T=t,Z=z,\mathbf{X}=\mathbf{x}\right]
-\frac{1}{b^+(z,\mathbf{x})}\mathbb{E}\!\left[(q-Y)_+\mid T=t,Z=z,\mathbf{X}=\mathbf{x}\right],
\end{align}
\begin{align}
\mathcal{L}^-_{\mathbf{x}}(q)
:= q + \frac{1}{b^+(z,\mathbf{x})}\mathbb{E}\!\left[(Y-q)_+\mid T=t,Z=z,\mathbf{X}=\mathbf{x}\right]
-\frac{1}{b^-(z,\mathbf{x})}\mathbb{E}\!\left[(q-Y)_+\mid T=t,Z=z,\mathbf{X}=\mathbf{x}\right].
\end{align}
By Theorem~\ref{thm:plug_in_bounds} (equivalently, the standard Rockafellar--Uryasev variational form),
\begin{align}
\mu^+(t,z,\mathbf{x})=\inf_{q}\mathcal{L}^+_{\mathbf{x}}(q),
\qquad
\mu^-(t,z,\mathbf{x})=\sup_{q}\mathcal{L}^-_{\mathbf{x}}(q),
\end{align}
with optimizers $q=Q^+(t,z,\mathbf{x})$ and $q=Q^-(t,z,\mathbf{x})$.
Hence, for any measurable $\overline Q^+(t,z,\mathbf{x})$ and $\overline Q^-(t,z,\mathbf{x})$,
\begin{align}
\overline\mu^+(t,z,\mathbf{x};\overline Q^+):=\mathcal{L}^+_{\mathbf{x}}(\overline Q^+(t,z,\mathbf{x}))\ \ge\ \mu^+(t,z,\mathbf{x}),
\qquad
\overline\mu^-(t,z,\mathbf{x};\overline Q^-):=\mathcal{L}^-_{\mathbf{x}}(\overline Q^-(t,z,\mathbf{x}))\ \le\ \mu^-(t,z,\mathbf{x}),
\end{align}
so $[\overline\mu^-(t,z,\mathbf{x}),\overline\mu^+(t,z,\mathbf{x})]$ contains the sharp CAPO interval and is therefore valid.

\paragraph{Step 2: Convergence to the induced bounds.}
Fix measurable cutoffs $\overline Q^\pm$ and define the induced hinge-mean targets
\begin{align}
\overline\gamma_u^\pm(t,z,\mathbf{x})
&:=\mathbb{E}\!\left[(Y-\overline Q^\pm(t,z,\mathbf{x}))_+\mid T=t,Z=z,\mathbf{X}=\mathbf{x}\right],\\
\overline\gamma_l^\pm(t,z,\mathbf{x})
&:=\mathbb{E}\!\left[(\overline Q^\pm(t,z,\mathbf{x})-Y)_+\mid T=t,Z=z,\mathbf{X}=\mathbf{x}\right].
\end{align}
Let $\overline\eta^\pm:=(\pi^t,\pi^g,\overline Q^\pm,\overline\gamma_u^\pm,\overline\gamma_l^\pm)$.
By Theorem~\ref{thm:orthogonal_bounds}, the corresponding pseudo-outcome is conditionally unbiased:
$\mathbb{E}[\phi^\pm_{t,z}(S;\overline\eta^\pm)\mid \mathbf{X}]=\overline\mu^\pm(t,z,\mathbf{X};\overline Q^\pm)$.

Now consider the estimated pseudo-outcome $\phi^\pm_{t,z}(S;\widehat\eta^\pm)$ and write $\widehat Q=\widehat Q^\pm$,
$\overline Q=\overline Q^\pm$ for brevity. The same conditional-expectation algebra as in the proof of
Theorem~\ref{thm:second_order_remainder} yields the decomposition
\begin{align}
\mathbb{E}[\phi^\pm_{t,z}(S;\widehat\eta^\pm)\mid \mathbf{X}]
=
\mu_{\widehat Q}^\pm(\mathbf{X})
+\left(\frac{\pi(\mathbf{X})}{\widehat\pi(\mathbf{X})}-1\right)\Delta_{\widehat Q}^\pm(\mathbf{X}),
\end{align}
where $\mu_{\widehat Q}^\pm(\mathbf{X})$ is the induced bound functional evaluated at $\widehat Q$ (i.e., $\mathcal{L}^\pm_{\mathbf{X}}(\widehat Q)$)
and $\Delta_{\widehat Q}^\pm(\mathbf{X})$ collects the conditional-mean regression errors at cutoff $\widehat Q$.

First, since $(u)_+$ is 1-Lipschitz, for each $\mathbf{X}$,
\begin{align}
|\gamma_u(\widehat Q;\mathbf{X})-\gamma_u(\overline Q;\mathbf{X})|
\le |\widehat Q(\mathbf{X})-\overline Q(\mathbf{X})|,
\qquad
|\gamma_l(\widehat Q;\mathbf{X})-\gamma_l(\overline Q;\mathbf{X})|
\le |\widehat Q(\mathbf{X})-\overline Q(\mathbf{X})|.
\end{align}
Using $b^\pm$ bounded away from $0$, this implies $\|\mu_{\widehat Q}^\pm-\overline\mu^\pm(\cdot;\overline Q)\|_2
\lesssim \|\widehat Q-\overline Q\|_2=o_p(1)$ whenever $\widehat Q\to \overline Q$ in $L_2$.

Second, for the product term, Assumption~\ref{assump:overlap_bounded} implies
$\|\pi/\widehat\pi-1\|_2$ is bounded, and, if $(\widehat\pi^t,\widehat\pi^g)$ is consistent, then
$\|\pi/\widehat\pi-1\|_2=o_p(1)$. Moreover,
\begin{align}
\|\gamma_u(\widehat Q;\cdot)-\widehat\gamma_u^\pm\|_2
\le \|\overline\gamma_u^\pm-\widehat\gamma_u^\pm\|_2+\|\gamma_u(\widehat Q;\cdot)-\gamma_u(\overline Q;\cdot)\|_2
\le \|\overline\gamma_u^\pm-\widehat\gamma_u^\pm\|_2+\|\widehat Q-\overline Q\|_2,
\end{align}
and similarly for the lower hinge mean. Hence, if $(\widehat\gamma_u^\pm,\widehat\gamma_l^\pm)$ is consistent for the induced targets
$(\overline\gamma_u^\pm,\overline\gamma_l^\pm)$ and $\widehat Q\to \overline Q$, then $\|\Delta_{\widehat Q}^\pm\|_2=o_p(1)$,
so the product term is $o_p(1)$ even if $(\widehat\pi^t,\widehat\pi^g)$ is misspecified (but bounded away from $0$).

Combining the two parts gives
\begin{align}
\|\mathbb{E}[\phi^\pm_{t,z}(S;\widehat\eta^\pm)\mid \mathbf{X}]
-\overline\mu^\pm(t,z,\mathbf{X};\overline Q^\pm)\|_2=o_p(1).
\end{align}
Under Assumption~\ref{assump:second_stage} with $\delta_n=o_p(1)$, the final-stage regression therefore yields
$\|\widehat\mu^\pm(t,z,\cdot)-\overline\mu^\pm(t,z,\cdot;\overline Q^\pm)\|_2=o_p(1)$, and the sample-average estimator gives
$\widehat\psi^\pm(t,z)\to \overline\psi^\pm(t,z)$.
Thus the estimated (C)APO intervals converge to the induced (conservative) intervals and are asymptotically valid.
If $\overline Q^\pm=Q^\pm$, then $\overline\mu^\pm=\mu^\pm$ and the limits coincide with the sharp bounds.
\end{proof}


\subsection{Proof of Theorem \ref{thm:second_order_remainder_continuous}}

{
\renewcommand{\vspace}[1]{}
\renewcommand{\addvspace}[1]{}
\nuisanceReminderCont*
}

\begin{proof}
We mirror the proof of Theorem~\ref{thm:second_order_remainder} and highlight only the changes required for continuous $Z$.
Fix $(t,z)$ and suppress $(t,z)$ in the notation. As before, by cross-fitting, we may condition on the training folds and
treat $\widehat\eta$ as fixed when taking expectations over the held-out fold.

\paragraph{Key modification.}
For continuous $Z$, define
\begin{align}
A_h := \mathbf{1}_{[T=t]}K_h(Z-z),\qquad
\pi(Z,\mathbf{X}) := \pi^t(\mathbf{X})\,\pi^g(Z\mid \mathbf{X}),\qquad
\widehat\pi(Z,\mathbf{X}) := \widehat\pi^t(\mathbf{X})\,\widehat\pi^g(Z\mid \mathbf{X}),
\end{align}
so that the (true) localized selection weight is
\begin{align}
\kappa_{t,z,h}(S)=\frac{A_h}{\pi(Z,\mathbf{X})}.
\end{align}
The discrete-$Z$ algebra carries through with $A$ replaced by $A_h$ and $\pi(\mathbf{X})$ replaced by $\pi(Z,\mathbf{X})$.
The only substantive difference is that $L_2$-norms of kernel-weighted terms pick up a factor $h^{-1/2}$ via
$\int K_h(u)^2\,du = O(1/h)$.

\paragraph{A useful kernel moment bound.}
Under Assumptions~\ref{assump:overlap_bounded} and~\ref{assump:kernel}, there exists a constant $C<\infty$ such that, for any
square-integrable measurable function $G(S)$,
\begin{equation}
\label{eq:kernel_cs_bound}
\left\| \mathbb{E}\!\left[ \kappa_{t,z,h}(S)\,G(S)\mid \mathbf{X}\right]\right\|_2
\le \frac{C}{\sqrt{h}}\ \|G(S)\|_2.
\end{equation}
Indeed, by conditional Cauchy--Schwarz,
\begin{align}
\big(\mathbb{E}[\kappa_{t,z,h}G\mid \mathbf{X}]\big)^2
\le \mathbb{E}[\kappa_{t,z,h}^2\mid \mathbf{X}]\ \mathbb{E}[G^2\mid \mathbf{X}],
\end{align}
and $\mathbb{E}[\kappa_{t,z,h}^2\mid \mathbf{X}]$ is of order $1/h$ because $K_h^2$ integrates to $O(1/h)$ and
$\pi^t(\mathbf{X}),\pi^g(\cdot\mid \mathbf{X})$ are bounded away from $0$ (overlap).

\paragraph{Step 1: Conditional expectation decomposition.}
Write $\phi_h(S;\cdot)$ for Eq.~\eqref{eq:pseudo_outcome_unified} with $\omega_{z,h}(Z)=K_h(Z-z)$.
As in the discrete proof, take conditional expectations given $\mathbf{X}$ and use iterated expectations to replace the
in-sample hinge terms by their corresponding conditional-mean targets (evaluated at the cutoff used in the pseudo-outcome).
This yields a decomposition of the form
\begin{equation}
\label{eq:remainder_decomp_cont}
\mathbb{E}\!\left[\phi_h(S;\widehat\eta)-\phi_h(S;\eta)\mid \mathbf{X}\right]
=
\underbrace{\Big(\mu_{h,\widehat Q^+}(\mathbf{X})-\mu_h^+(\mathbf{X})\Big)}_{\text{cutoff-induced error}}
+
\underbrace{\mathbb{E}\!\left[\left(\frac{\pi(Z,\mathbf{X})}{\widehat\pi(Z,\mathbf{X})}-1\right)\kappa_{t,z,h}(S)\,\Delta_{\widehat Q^+}(S)\,\Bigm|\mathbf{X}\right]}_{\text{propensity $\times$ regression product term}},
\end{equation}
where $\mu_{h,\widehat Q^+}(\mathbf{X})$ denotes the bound functional induced by the cutoff $\widehat Q^+$ (holding the
remaining targets at their population values for that cutoff), and $\Delta_{\widehat Q^+}(S)$ collects the hinge-mean
regression discrepancies at cutoff $\widehat Q^+$ (the continuous-$Z$ analogue of the bracketed term in
Eq.~\eqref{eq:remainder_decomp} of the discrete proof).

\paragraph{Step 2: Bounding the product term.}
By overlap, $\widehat\pi(Z,\mathbf{X})$ is bounded away from $0$; hence
\begin{align}
\left\|\frac{\pi(Z,\mathbf{X})}{\widehat\pi(Z,\mathbf{X})}-1\right\|_2
\lesssim
\|\widehat\pi^t-\pi^t\|_2+\|\widehat\pi^g-\pi^g\|_2
= O_p(r_{n,\pi}),
\end{align}
where norms are taken over the arguments on which the nuisances are evaluated (here $(Z,\mathbf{X})$ for $\pi^g$).
Moreover, by definition of $r_{n,\gamma}$, $\|\Delta_{\widehat Q^+}(S)\|_2=O_p(r_{n,\gamma})$.
Applying Eq.~\eqref{eq:kernel_cs_bound} with $G(S):=\left(\frac{\pi}{\widehat\pi}-1\right)\Delta_{\widehat Q^+}(S)$ gives
\begin{align}
\left\|
\mathbb{E}\!\left[\left(\frac{\pi}{\widehat\pi}-1\right)\kappa_{t,z,h}\Delta_{\widehat Q^+}\mid \mathbf{X}\right]
\right\|_2
\le \frac{C}{\sqrt{h}}\left\|\left(\frac{\pi}{\widehat\pi}-1\right)\Delta_{\widehat Q^+}\right\|_2
= O_p\!\left(\frac{r_{n,\pi}\,r_{n,\gamma}}{\sqrt{h}}\right).
\end{align}

\paragraph{Step 3: Bounding the cutoff-induced term.}
The discrete proof bounds the cutoff-induced term using the envelope/FOC property of the cutoff and a second-order Taylor expansion, yielding a quadratic dependence on $\widehat Q^+-Q^+$. The same argument applies here pointwise in the arguments of the cutoff (the cutoff remains an optimizer of the same tail objective, now for the localized target), so
\begin{align}
|\mu_{h,\widehat Q^+}(\mathbf{X})-\mu_h^+(\mathbf{X})|
\lesssim
\mathbb{E}\!\left[\kappa_{t,z,h}(S)\,\big|\widehat Q^+(t,Z,\mathbf{X})-Q^+(t,Z,\mathbf{X})\big|^2\ \Bigm|\mathbf{X}\right].
\end{align}
Applying Eq.~\eqref{eq:kernel_cs_bound} with $G(S):=\big|\widehat Q^+(t,Z,\mathbf{X})-Q^+(t,Z,\mathbf{X})\big|^2$ and using the same bounded-moment simplification as in the discrete proof gives
\begin{align}
\|\mu_{h,\widehat Q^+}-\mu_h^+\|_2
=
O_p\!\left(\frac{r_{n,Q}^2}{\sqrt{h}}\right).
\end{align}

\paragraph{Conclusion.}
Combining Steps 2--3 in Eq.~\eqref{eq:remainder_decomp_cont} yields
\begin{align}
\left\|\mathbb{E}\!\left[\phi_h(S;\widehat\eta)-\phi_h(S;\eta)\mid \mathbf{X}\right]\right\|_2
=
O_p\!\left(\frac{r_{n,\pi}\,r_{n,\gamma}+r_{n,Q}^2}{\sqrt{h}}\right),
\end{align}
which is exactly the claim.
\end{proof}


\subsection{Proof of Corollary \ref{cor:quasi_oracle_and_inference_continuous}}

{
\renewcommand{\vspace}[1]{}
\renewcommand{\addvspace}[1]{}
\quasiOracleRatesCont*
}

\begin{proof}
We follow the proof of Corollary~\ref{cor:quasi_oracle_and_inference}, replacing Theorem~\ref{thm:second_order_remainder}
by Theorem~\ref{thm:second_order_remainder_continuous} and tracking the kernel-induced scaling.

\paragraph{CAPO rate.}
Let $m^+_{t,z,h}(\mathbf{x}):=\mathbb{E}[\widehat\phi^+_{t,z,h}\mid \mathbf{X}=\mathbf{x}]$ denote the conditional mean of
the (cross-fitted) localized pseudo-outcome. By Assumption~\ref{assump:second_stage},
\begin{align}
\|\widehat\mu_h^+(t,z,\cdot)-m^+_{t,z,h}(\cdot)\|_2 = O_p(\delta_n).
\end{align}
By the triangle inequality,
\begin{align}
\|\widehat\mu_h^+(t,z,\cdot)-\mu_h^+(t,z,\cdot)\|_2
\le
\|\widehat\mu_h^+(t,z,\cdot)-m^+_{t,z,h}(\cdot)\|_2
+\|m^+_{t,z,h}(\cdot)-\mu_h^+(t,z,\cdot)\|_2.
\end{align}
The second term is exactly the conditional nuisance-induced bias controlled by
Theorem~\ref{thm:second_order_remainder_continuous}, giving the stated CAPO rate.

\paragraph{APO rate and $\sqrt{nh}$ asymptotic normality.}
Recall $\widehat\psi_h^+(t,z)=\mathbb{E}_n[\widehat\phi^+_{t,z,h}]$. Decompose
\begin{align}
\widehat\psi_h^+(t,z)-\psi_h^+(t,z)
=
(\mathbb{E}_n-\mathbb{E})[\phi^+_{t,z,h}(S;\eta)]
+\mathbb{E}\!\left[\phi^+_{t,z,h}(S;\widehat\eta)-\phi^+_{t,z,h}(S;\eta)\right]
+R_{n,h},
\end{align}
where $R_{n,h}:=(\mathbb{E}_n-\mathbb{E})[\phi^+_{t,z,h}(S;\widehat\eta)-\phi^+_{t,z,h}(S;\eta)]$.

Under Assumption~\ref{assump:kernel} and overlap, $\mathrm{Var}(\phi^+_{t,z,h}(S;\eta))=O(1/h)$, so
$(\mathbb{E}_n-\mathbb{E})[\phi^+_{t,z,h}(S;\eta)]=O_p((nh)^{-1/2})$ by the CLT.
With cross-fitting and the same conditioning argument as in the discrete proof, $R_{n,h}=o_p((nh)^{-1/2})$.

The bias term is controlled by Theorem~\ref{thm:second_order_remainder_continuous} after integrating over $\mathbf{X}$,
yielding the stated APO rate. If additionally $r_{n,\pi}r_{n,\gamma}+r_{n,Q}^2=o_p(n^{-1/2})$, then the bias term and
$R_{n,h}$ are $o_p((nh)^{-1/2})$, implying
\begin{align}
\sqrt{nh}\big(\widehat\psi_h^+(t,z)-\psi_h^+(t,z)\big)
=
\sqrt{nh}(\mathbb{E}_n-\mathbb{E})[\phi^+_{t,z,h}(S;\eta)]
+o_p(1)
\ \rightsquigarrow\
\mathcal{N}(0,V_h^+(t,z)),
\end{align}
with $V_h^+(t,z)=\mathrm{Var}(\sqrt{h}\,\phi^+_{t,z,h}(S;\eta))$.
The final undersmoothing statement follows by adding/subtracting $\psi^+(t,z)$ and using the assumed bias condition.
\end{proof}


\subsection{Proof of Proposition \ref{prop:sharpness_continuous}}

{
\renewcommand{\vspace}[1]{}
\renewcommand{\addvspace}[1]{}
\sharpnessGuaranteeCont*
}

\begin{proof}
The argument is identical to the proof of Proposition~\ref{prop:sharpness}, replacing
Corollary~\ref{cor:quasi_oracle_and_inference} by Corollary~\ref{cor:quasi_oracle_and_inference_continuous}.
Under the stated assumptions, $\delta_n=o_p(1)$ and
$\frac{r_{n,\pi}r_{n,\gamma}+r_{n,Q}^2}{\sqrt{h}}=o_p(1)$, hence both CAPO endpoints converge in $L_2$ to the sharp
kernel-localized endpoints $\mu_h^\pm(t,z,\cdot)$. The APO convergence follows from the APO rate in
Corollary~\ref{cor:quasi_oracle_and_inference_continuous}. Finally, if the smoothing bias vanishes at the stated rate,
the same conclusion holds for the pointwise (unsmoothed) bounds.
\end{proof}


\subsection{Proof of Corollary \ref{cor:validity_misspecified_cutoffs_continuous}}
{
\renewcommand{\vspace}[1]{}
\renewcommand{\addvspace}[1]{}
\validityCutoffsCont*
}
\begin{proof}
We adapt the proof of Corollary~\ref{cor:validity} and indicate only the continuous-$Z$ differences.

\paragraph{Step 1: Any cutoffs induce a valid (conservative) localized interval.}
Fix $(t,z,\mathbf{x})$. For each exposure level $u$, the discrete-$Z$ proof shows (via the same Rockafellar-Uryasev tail
objectives) that evaluating the tail objective at an arbitrary cutoff $\overline Q^\pm(t,u,\mathbf{x})$ yields conservative
endpoints $\overline\mu^\pm(t,u,\mathbf{x};\overline Q^\pm)$ that contain the sharp pointwise endpoints $\mu^\pm(t,u,\mathbf{x})$.

Kernel localization preserves this ordering because $K_h(\cdot)\ge 0$ and integrates to $1$.
Indeed, the continuous-$Z$ localized targets are obtained by the same conditional-expectation construction as in
Eq.~\eqref{eq:mu_psi_h_defs}, and the weight $\kappa_{t,z,h}(S)$ transports pointwise statements in $u$ into their localized
analogues around $z$. Therefore,
\begin{align}
\overline\mu_h^-(t,z,\mathbf{x};\overline Q^-)\ \le\ \mu_h^-(t,z,\mathbf{x})
\ \le\ \mu_h^+(t,z,\mathbf{x})\ \le\ \overline\mu_h^+(t,z,\mathbf{x};\overline Q^+),
\end{align}
such that the CAPO interval is valid (though not necessarily sharp). The APO claim follows by taking expectations over $\mathbf{X}$.

\paragraph{Step 2: Convergence to the induced (conservative) localized bounds.}
The convergence argument follows Step~2 of the discrete-$Z$ proof, with the single change that kernel-weighted terms
are controlled using the bound in Eq.~\eqref{eq:kernel_cs_bound} (hence the extra factor $h^{-1/2}$ in intermediate inequalities).
Under $\widehat Q^\pm\to \overline Q^\pm$ in $L_2$ and either
(i) $(\widehat\pi^t,\widehat\pi^g)$ consistent or (ii) $(\widehat\gamma_u^\pm,\widehat\gamma_l^\pm)$ consistent for the
targets induced by $\overline Q^\pm$, the same decomposition yields
\begin{align}
\left\|\mathbb{E}\!\left[\phi^\pm_{t,z,h}(S;\widehat\eta^\pm)\mid \mathbf{X}\right]
-\overline\mu_h^\pm(t,z,\mathbf{X};\overline Q^\pm)\right\|_2=o_p(1).
\end{align}
With Assumption~\ref{assump:second_stage} and $\delta_n=o_p(1)$, the second-stage regression therefore implies
$\|\widehat\mu_h^\pm(t,z,\cdot)-\overline\mu_h^\pm(t,z,\cdot;\overline Q^\pm)\|_2=o_p(1)$, and the sample-average estimator yields $\widehat\psi_h^\pm(t,z)\to \overline\psi_h^\pm(t,z;\overline Q^\pm)$. Thus the estimated intervals converge to the induced (conservative) localized intervals and remain asymptotically valid. If $\overline Q^\pm=Q^\pm$, these limits coincide
with the sharp localized bounds.
\end{proof}

\newpage
\section{Practical considerations}
\label{sec:appendix_use_case}

\subsection{Applications of exposure mappings}

Our framework accommodates a range of \emph{exposure mappings} $g(\cdot)$ that reduce the (typically high-dimensional) vector of neighbors' treatments into a low-dimensional exposure variable for unit $i$. The choice of $g$ should be guided by substantive knowledge about how interference operates, and by what is measured in the data. Below, we summarize common mappings and settings where they arise.

\circled{1} \textbf{Applications of the weighted-mean exposure}

A common exposure mapping is the number of treated neighbors $g(T_{\mathcal N_i}) \;=\; \sum_{j\in\mathcal N_i} T_j$. This is natural when spillovers scale approximately with the number of treated contacts: repeated encouragement from multiple peers can increase salience, multiple treated farmers can raise local demonstration intensity, and multiple trained coworkers can increase adoption of a workflow tool. In spatial policy applications, a count of nearby treated sites (e.g., treated intersections or corridors) can proxy local exposure intensity to infrastructure changes. 

A more realistic mapping often weights neighbors by interaction intensity: $g(T_{\mathcal N_i}) \;=\; \sum_{j\in\mathcal N_i} w_{ij} T_j$,
$w_{ij}\ge 0,$ where $w_{ij}$ encodes geographic distance decay, communication frequency, tie strength, or mobility flows. This is common in epidemiology (contact rates), spatial economics (commuting flows), and online platforms (interaction networks). In transport and infrastructure settings, weights based on travel time or distance can encode that nearer interventions plausibly matter more.

When neighborhood size varies, a normalized mapping captures saturation: $g(T_{\mathcal N_i}) \;=\; \frac{1}{|\mathcal N_i|}\sum_{j\in\mathcal N_i} T_j.$ This is widely used in education (fraction of classmates treated), workplaces (share of coworkers trained), and community programs (share of households reached by a campaign). Proportion-based mappings are also natural in information environments (online or offline) where the fraction of one's social neighborhood exposed to an intervention (e.g., a correction or informational nudge) shapes beliefs or behavior.

\circled{2} \textbf{Applications of the threshold exposure}

A simple mapping is $g(T_{\mathcal N_i}) \;=\; \mathbf{1}\Big\{\sum_{j\in\mathcal N_i} T_j \ge 1\Big\},$ i.e., whether unit $i$ has at least one treated neighbor. This is appropriate when spillovers plausibly operate through \emph{presence} rather than intensity: diffusion of a new practice or technology can start once a single close contact adopts it (e.g., demonstration effects or peer-to-peer referrals). In public health, the presence of a vaccinated (or otherwise treated) contact may affect risk via reduced transmission in small networks (e.g., household or close-contact structures), where the key margin is whether at least one relevant contact is treated.

A thresholded saturation mapping, $g(T_{\mathcal N_i}) \;=\; \mathbf{1}\Big\{\frac{1}{|\mathcal N_i|}\sum_{j\in\mathcal N_i} T_j \ge c\Big\},$ is appropriate when spillovers exhibit \emph{nonlinearities} such as coordination, social norms, or capacity constraints. Examples include collective action (private incentives change after a critical mass), technology standards (compatibility benefits after adoption passes a threshold), and community compliance contexts (program effects emerge only once participation exceeds a minimum level). In behavioral climate interventions, a threshold can represent a social-norm mechanism: behavior changes once individuals perceive that ``most peers'' act.

\circled{3} \textbf{High-order spillover effects}

If interference operates through longer paths than captured by $\mathcal N_i$ (e.g., two-hop network effects, market-level general equilibrium, or broader media spillovers), then a strictly local exposure definition may be inadequate. Enlarging $\mathcal N_i$ or using weighted kernels can address this conceptually, but typically worsens overlap and can further widen bounds.

\subsection{Limitations of our partial-identification bounds}

Our partial-identification results yield identification-robust statements under interference and limited structure. That said, the bounds come with concrete limitations.

First of all, our bounds can be wide when the data are weakly informative.
Partial identification is conservative by construction. When counterfactual information is scarce, e.g., rare high-saturation neighborhoods, limited overlap across exposure regimes, or strongly clustered assignment, the bounds may be wide. This reflects genuine lack of information under the maintained assumptions, not an estimation failure. While our approach targets robustness, finite samples can be sensitive to rare exposure levels and to the quality of nuisance estimation (e.g., outcome regression and any assignment/exposure models). Heavy-tailed outcomes or highly imbalanced exposures can amplify instability, motivating careful overlap diagnostics, effective-sample-size reporting by exposure regime, and sensitivity checks across learners.
Furthermore, network data is often incomplete (missing ties, mismeasured distances, unknown interaction strengths). Although we assume a fully observed network in our paper, errors in $T_j$, $\mathcal N_i$, or weights $w_{ij}$ propagate directly into $g(T_{\mathcal N_i})$. Because the bounds are defined with respect to observed exposures, measurement error can attenuate or distort the effective exposure regimes and complicate substantive interpretation.

\newpage
\section{Implementation details}
\label{sec:appendix_implementation}

\subsection{Data generation}

We study the finite-sample performance of our proposed bound estimators in a network interference setting with potentially misspecified exposure mappings. The data-generating process is designed to closely mirror the structural assumptions of Section~\ref{sec:method} while allowing for controlled violations of the exposure mapping.

\textbf{Units, covariates, and network.}
We observe $N$ units indexed by $i=1,\dots,n$. Each unit is endowed with a $d$-dimensional covariate vector
$\mathbf{X}_i \sim \mathcal{U}(-1, 1)$,
drawn independently across units. Units are connected through a known undirected network $G=(V,E)$, where neighborhoods are defined as $\mathcal{N}_i = \{j : (i,j)\in E\}$ and node-specific degrees are denoted by $n_i = |\mathcal{N}_i|$. 

Overall, we conduct six experiments. We generate each three networks with $N=1000$ nodes and a $1$-dimensional covariate (small datasets) and three networks with $N=6000$ nodes and $6$-dimensional covariates (large datasets). Each network is chosen to demonstrate the theoretical properties of our framework in dealing with different forms of exposure misspecification: we employ a random Erdős–Rényi network for testing exposure mapping \circled{1}, a scale-free Barabási–Albert network for exposure mapping \circled{2}, and community-structured stochastic block model exposure mappings \circled{3}. We elaborate on the design in Subsection~\ref{sec:appendix_implementation_network}.

\textbf{Treatment assignment.}
Each unit receives a binary treatment $T_i\in\{0,1\}$. Treatments are assigned independently conditional on covariates according to a logistic propensity score model $\pi^t(\mathbf{x}) := \mathbb{P}(T_i=1\mid \mathbf{X}_i=\mathbf{x}) = \operatorname{logit}^{-1}(\beta_T^\top \mathbf{x})$, where $\beta_T\in\mathbb{R}^d$ is fixed across simulations.

\textbf{Potential outcomes and observed data.}
Potential outcomes depend on individual treatment, exposure, and covariates according to
\begin{align}
Y_i(t,z) = m(t,z,\mathbf{X}_i) + \varepsilon_i,
\qquad
\varepsilon_i \sim \mathcal{N}(0,\sigma^2),
\end{align}
with
\begin{align}
m(t,z,\mathbf{x}) = \tau\, t + \delta\, z + \gamma\, t z + f(\mathbf{x}),
\end{align}
where $\tau$ captures the direct effect of treatment, $\delta$ the spillover effect of exposure,
$\eta$ allows for treatment--exposure interaction, and $f(\mathbf{x})$ is a nonlinear baseline
function of covariates. The observed outcome is given by $Y_i = Y_i(T_i, Z_i^\ast)$, so that interference operates through the true exposure mapping $g^\ast$.

\textbf{Target estimands.}
Our primary targets are the conditional average potential outcomes (CAPOs)
\begin{align}
\mu(t,z,\mathbf{x}) := \mathbb{E}[Y(t,z)\mid \mathbf{X}=\mathbf{x}],
\end{align}
and their induced causal effects. In particular, we consider
(i)~conditional direct effects $\mu(1,z,\mathbf{x})-\mu(0,z,\mathbf{x})$,
(ii)~conditional spillover effects $\mu(t,z_1,\mathbf{x})-\mu(t,z_0,\mathbf{x})$, as well as their averaged (APO) counterparts. 

\textbf{Experimental variations.}
Across simulation scenarios, we vary the network density, the degree of exposure-mapping misspecification through $\varepsilon$, the discreteness or continuity of $Z$, and the sample size $n$. All reported results are averaged over multiple Monte Carlo repetitions.

\begin{table}[h]
\centering
\caption{Simulation design and parameter specifications.}
\label{tab:simulation_specs}
\begin{tabular}{ll}
\toprule
\textbf{Component} & \textbf{Specification / Values} \\
\midrule
Units (nodes) & $N = 3000 / 6000$ \\
Covariate dimension & $d = 1 / 6$\\
Treatment propensity & $\beta_T = 0.8$ \\
Direct effect & $\tau = 0.8$\\
Spillover effect & $\delta = 0.6$\\
Interaction & $\gamma = 0.2$\\
Outcome model nonlinearity & $0.6 \tanh(X) + 0.4\sin(X) - 0.2X^2$ \\
Noise & $\xi_i\sim\mathcal{N}(0,1)$ \\
Kernel bandwidth & $h = 0.1$ \\
Mean misspecification & $ \varepsilon = 0.03$\\
Threshold misspecifition & $c^{\ast}=0.45$\\
Cross-fitting folds & $K=5$ \\
Runs & $20$ \\
\bottomrule
\end{tabular}
\end{table}

\subsection{Choice of network structure}
\label{sec:appendix_implementation_network}

We deliberately vary the underlying network structure across simulation scenarios to ensure that each form of exposure mapping misspecification is evaluated in a setting where it is substantively meaningful. Different misspecifications interact with distinct structural properties of networks, and using a single network model throughout would mask or attenuate these effects. 

Specifically, we employ the following networks: \circled{1}~\textit{Weighted versus unweighted mean exposure:} Erdős–Rényi (ER) networks, whose homogeneous degree distribution isolates the effect of heterogeneous influence weights from confounding degree heterogeneity; \circled{2}~\textit{Threshold-based exposure mappings:} scale-free networks generated by a Barabási–Albert (BA) process, where heavy-tailed degree distributions imply that small shifts in the threshold can lead to substantial misclassification of exposure, particularly for highly connected units; \circled{3}~\textit{Higher order spillovers: }stochastic block models (SBMs) with pronounced community structure, where spillovers naturally propagate beyond direct neighbors through short multi-step paths, making truncation of the exposure radius consequential. 

Across all scenarios, the network choice is therefore tailored to highlight the specific failure mode induced by the corresponding exposure mapping misspecification, rather than to optimize estimator performance.

\subsection{Implementation}

We implement our experiments in Python. All code for replication is available in our GitHub repository at \url{https://github.com/m-schroder/ExposureMisspecification}.

We estimate nuisance components via cross‑fitted learners: the treatment propensity is fit with gradient‑boosted classifiers; the exposure density uses a Gaussian‑mixture conditional model for continuous exposures (or multinomial/gradient boosting for discrete cases). We fit our quantile regression and the conditional expectations $\gamma$ with XGBoost models. For the second‑stage regression, we as well employ XGBoost with cross‑fitting. We select hyperparameters through cross‑validation (log‑loss for propensity/exposure models, pinball loss for quantile models, and MSE for regression).

\end{document}